\newtheorem{theorem}{Theorem}
\newtheorem{definition}{Definition}
\newtheorem{assumption}{Assumption}
\newtheorem{lemma}{Lemma}
\newtheorem{property}{Property}
\title{When Data Geometry Meets Deep Function: Generalizing Offline Reinforcement Learning}
\author{Jianxiong Li$^{1}$, Xianyuan Zhan$^{1, 2}$\footnotemark[1]~~, Haoran Xu$^{1}$, Xiangyu Zhu$^{1}$, Jingjing Liu$^{1}$ \& Ya-Qin Zhang$^{1}$\footnotemark[1]\\
$^1$ Institute for AI Industry Research~(AIR), Tsinghua University, Beijing, China\\
$^2$ Shanghai Artificial Intelligence Laboratory, Shanghai, China\\
\texttt{li-jx21@mails.tsinghua.edu.cn,  zhanxianyuan@air.tsinghua.edu.cn}\\
}
\begin{document}

\maketitle
\renewcommand{\thefootnote}{\fnsymbol{footnote}}
\footnotetext[1]{Corresponding authors}
\renewcommand*{\thefootnote}{\arabic{footnote}}

\begin{abstract}

  In offline reinforcement learning (RL), one detrimental issue to policy learning is the error accumulation of deep \textit{Q} function in out-of-distribution (OOD) areas. Unfortunately, existing offline RL methods are often over-conservative, inevitably hurting generalization performance outside data distribution. In our study, one interesting observation is that deep \textit{Q} functions approximate well inside the convex hull of training data. Inspired by this, we propose a new method, \textit{DOGE (Distance-sensitive Offline RL with better GEneralization)}. DOGE marries dataset geometry with deep function approximators in offline RL, and enables exploitation in generalizable OOD areas rather than strictly constraining policy within data distribution. Specifically, DOGE trains a state-conditioned distance function that can be readily plugged into standard actor-critic methods as a policy constraint. Simple yet elegant, our algorithm enjoys better generalization compared to state-of-the-art methods on D4RL benchmarks. Theoretical analysis demonstrates the superiority of our approach to existing methods that are solely based on data distribution or support constraints.
  
  
\end{abstract}

\section{Introduction}

\label{intro}
Offline reinforcement learning (RL) provides a new possibility to learn optimized policies from large, pre-collected datasets without any environment interaction~\citep{levine2020offline}. This holds great promise to solve many real-world problems when online interaction is costly or dangerous yet historical data is easily accessible~\citep{zhan2021deepthermal}.
However, the optimization nature of RL, as well as the need for counterfactual reasoning on unseen data under offline setting, have caused great technical challenges for designing effective offline RL algorithms. Evaluating value function outside data coverage areas can produce falsely optimistic values; without corrective information from online interaction, such estimation errors can accumulate quickly and misguide policy learning process~\citep{van2018deep, fujimoto2018addressing, kumar2019stabilizing}.


Recent model-free offline RL methods investigate this error accumulation challenge in several ways: $1)$ \textit{Policy Constraint}: directly constraining learned policy to stay inside distribution, or with the support of dataset 
~\citep{kumar2019stabilizing}; $2)$ \textit{Value Regularization}: regularizing value function to assign low values at out-of-distribution (OOD) actions  \citep{kumar2020conservative}; $3)$ \textit{In-sample Learning}: learning value function within data samples~\citep{iql} or simply treating it as the value function of behavioral policy~\citep{brandfonbrener2021offline}.
All three schools of methods share similar traits of being conservative and omitting evaluation on OOD data, 
which brings benefits of minimizing model exploitation error, but at the expense of
poor generalization of learned policy in OOD regions. 
Thus, a gaping gap still exists when such methods are applied to real-world tasks, where most datasets only partially cover state-action space with suboptimal policies. 


\begin{figure}[t]
    \centering
    \includegraphics[width=0.92\textwidth]{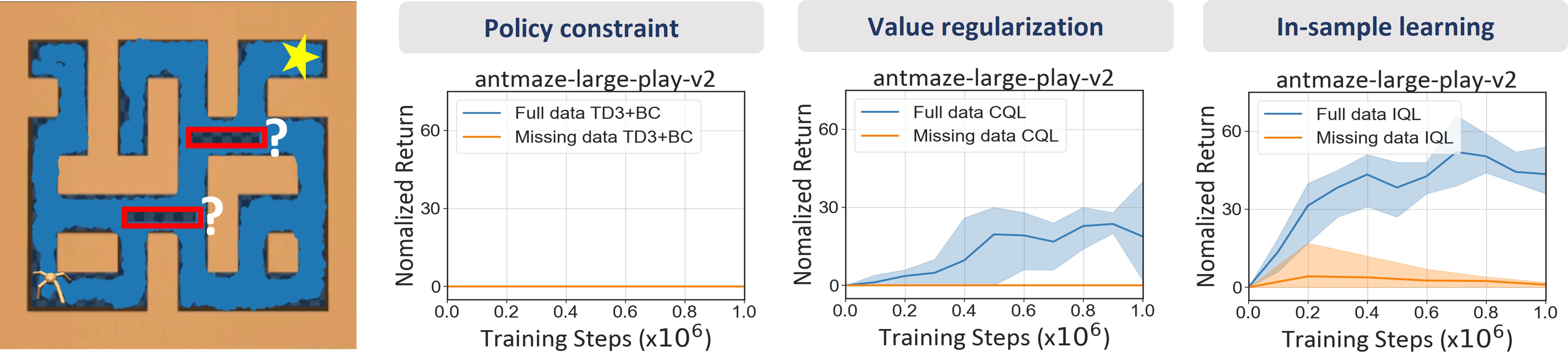}
    \caption{\small Left: Visualization of \textit{AntMaze} dataset. Data transitions of two small areas on the critical pathways to the destination have been removed (red box). Right: Performance of three SOTA offline RL methods.}
    \label{fig:subim1}
\end{figure}

Meanwhile, online deep reinforcement learning (DRL) that leverages powerful deep neural network (DNN) with optimistic exploration on unseen samples
can yield high-performing policies with promising generalization performance
~\citep{mnih2015human,silver2017mastering,degrave2022magnetic, packer2018assessing}.
This staring contrast propels us to re-think the question: \textit{Are we being too conservative}? It is well known that DNN has unparalleled approximation and generalization abilities, compared with other function approximators.
These attractive abilities have not only led to huge success in computer vision and natural language processing~\citep{he2016deep, vaswani2017attention}, but also amplified the power of RL.
Ideally, in order to obtain the best policy, an algorithm should enable offline policy learning on unseen state-action pairs that function approximators (\textit{e.g.}, \textit{Q} function, policy network) can generalize well, and add penalization only on non-generalizable areas.

However, existing offline RL methods heed too much conservatism on data-related regularizations, while largely overlooking the generalization ability of deep function approximators. 
Intuitively, let us consider the well-known AntMaze task in the D4RL benchmark~\citep{fu2020d4rl}, where an ant navigates from the start to the destination in a large maze. We observe that existing offline RL methods fail miserably when we remove only small areas of data on the critical pathways to the destination. As shown in Figure \ref{fig:subim1}, 
the two missing areas reside in close proximity to the trajectory data. 
Simply "stitching" up existing trajectories as approximation is not sufficient to form a near-optimal policy at missing regions.
\textit{Exploiting the generalizability of deep function appoximators}, however, can potentially compensate for the missing information. 

In our study, we observe that the value function approximated by DNN
can interpolate well but struggles to extrapolate (see Section \ref{sec: Impact of Data Geometry on Deep Q Functions}).
Such an "interpolate well" phenomenon is also observed in previous studies on the generalization of DNN~\citep{haley1992extrapolation,barnard1992extrapolation,arora2019fine, xu2020neural, florence2022implicit}. This finding motivates us to reconsider the generalization of function approximators in offline RL in the context of dataset geometry. Along this line, we discover that a closer distance between a training sample to the offline dataset often leads to a smaller value variation range of the learned neural network, which effectively yields more accurate inference of the value function inside the convex hull (formed by the dataset). By contrast, outside the convex hull, especially in those areas far from the training data, the value variation range usually renders too large to guarantee a small approximation error.

Inspired by this, we design a new algorithm \textbf{DOGE} (\textit{Distance-sensitive Offline RL with better GEneralization}) from the perspective of generalization performance of deep \textit{Q} function. We first propose a state-conditioned distance function to characterize the geometry of offline datasets, whose output serves as a proxy to the network generalization ability. The resulting algorithm learns a state-conditioned distance function as a policy constraint on standard actor-critic RL framework.  Theoretical analysis demonstrates the superior performance bound of our method compared to previous policy constraint methods that are based on data distribution or support constraints. Evaluations on D4RL benchmarks validate that our algorithm enjoys better performance and generalization abilities than state-of-the-art offline RL methods.

\section{Data Geometry vs. Deep \textit{Q} Functions}

\subsection{Notations}
We consider the standard continuous action space Markov decision process (MDP) setting, which can be represented 
by a tuple $(\mathcal{S},\mathcal{A},\mathcal{P},r,\gamma)$, where $\mathcal{S}$ and $\mathcal{A}$ are the state and action space, $\mathcal{P}(s'|s,a)$ is the transition dynamics, $r(s,a)$ is a reward function, and $\gamma\in[0,1)$ is a discount factor. The objective of the RL problem is to find a policy $\pi(a|s)$
that maximizes the expected cumulative discounted return, which can be represented by a \textit{Q} function $Q^{\pi}_\theta(s,a)=\mathbb{E}[\sum_{t=0}^{\infty}\gamma^tr(s_{t}, a_{t})|s_0=s, a_0=a,  a_t\sim\pi(\cdot|s_t),$ $s_{t+1}\sim \mathcal{P}(\cdot|s_t,a_t)]$. 
The \textit{Q} function is typically approximated by function approximators with learnable parameters $\theta$, such as deep neural networks. Under offline RL setting, we are only given a fixed dataset $\mathcal{D}$ and cannot interact further with the environment.
Therefore, the parameters $\theta$ are optimized by minimizing the following temporal difference (TD) error:
\begin{equation}
    \min_{\theta} \mathbb{E}_{(s,a,s')\in \mathcal{D}} 
    \left[ \left(
    r(s,a)+\gamma\mathbb{E}_{a'\sim\pi(\cdot|s')}\left[Q_{{\theta}'}^{\pi}(s',a')\right]   
    \right)
    -Q_{\theta}^{\pi}(s,a)\right]^2
    \label{equ: td}
\end{equation}
where 
$Q_{{\theta}'}^{\pi}$ is the target \textit{Q} function, which is a delayed copy of the current \textit{Q} network.

\subsection{Interpolate vs. Extrapolate}
\label{sec: Impact of Data Geometry on Deep Q Functions}

\begin{figure}[t]
  \centering
  \includegraphics[width=\textwidth]{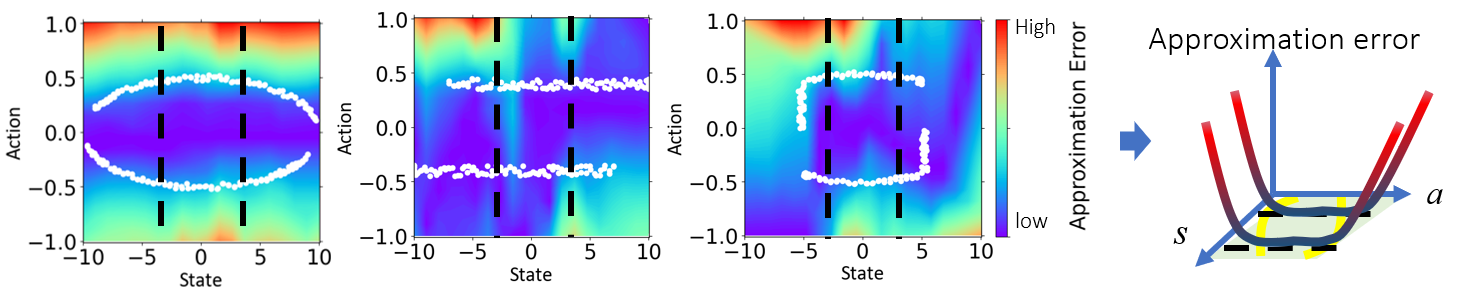}
  \caption{\small Approximation error of deep \textit{Q} functions with different dataset geometry. Offline data are marked as white dots (Please refer to Appendix~\ref{subsec:toy_setup} for detailed experimental setup) .}
  \label{fig:subim2}
\end{figure}

\textbf{Motivating examples.}\quad Let's first consider a set of simple one-dimensional random walk tasks with different offline datasets, where agents at each step can take an action to move in the range of $[-1, 1]$, and the state space is a straight line ranging from $[-10, 10]$. The destination is located at $s=10$. The closer to the destination, the larger reward the agent gets (\textit{i.e.}, $r=1$ at $s=10$, $r=0$ at $s=-10$). The approximation errors of the learned \textit{Q} functions are visualized in Figure \ref{fig:subim2}. Note that the approximation errors of the learned \textit{Q} functions tend to be low at state-action pairs that lie inside or near the boundaries of the convex hull formed by the dataset. Under continuous state-action space, state-action pairs within the convex hull of the dataset can be represented in an interpolated manner (referred as \textit{interpolated data}), \textit{i.e.}, $x_{in}=\sum_{i=1}^{n}\alpha_ix_i$,  $\sum_{i=1}^{n}\alpha_i=1, \alpha_i\ge0, x_i=(s_i,a_i)\in \mathcal{D}$; similarly, we can define the \textit{extrapolated data} that lie outside the convex hull of the dataset as $x_{out}=\sum_{i=1}^{n}\beta_i x_i$, where $\sum_{i=1}^{n}\beta_i=1$ and $\beta_i\ge0$ do not hold simultaneously.

We observe that the geometry of the datasets play a special role on the approximation error of deep \textit{Q} functions, or in other words, \textbf{deep \textit{Q} functions interpolate well but struggle to extrapolate}.
This phenomenon is also reflected in studies on the generalization performance of deep neural networks under a supervised learning setting~\citep{haley1992extrapolation,barnard1992extrapolation,arora2019fine, xu2020neural, florence2022implicit}, but is largely overlooked in modern offline RL.

\textbf{Theoretical explanations.}\quad Based on advanced theoretical machinery from the generalization analysis of DNN, such as neural tangent kernel (NTK)~\citep{jacot2018neural}, we can theoretically demonstrate that this phenomenon is also carried over to the offline RL setting for deep \textit{Q} functions. Define ${\rm Proj}_{\mathcal{D}}(x) := \arg \min_{x_i\in\mathcal{D}}\|x-x_i\|$ (we denote $\|x\|$ as Euclidean norm) as the projection operator that projects unseen data $x$ to the nearest data point in dataset $\mathcal{D}$. Theorem \ref{corollary:q generalization} gives a theoretical explanation of the ``interploate well'' phenomenon for deep \textit{Q} functions under the NTK assumptions (see Appendix \ref{subsec:proof of theorem3.1} for detailed proofs):

\begin{theorem}
    (Value difference of deep \textit{Q} function for interpolated and extrapolated data). Under the NTK regime, given an unseen interpolated data $x_{in}$ and an extrapolated data $x_{out}$, then the value difference of deep Q function for interpolated and extrapolated input data can be bounded as:
    \begin{align}
            \|Q_{\theta}(x_{in})-Q_{\theta}({\rm Proj}_{\mathcal{D}}(x_{in}))\|
            &\le  C_1 (\sqrt{\min (\|x_{in}\|, \|{\rm Proj}_{\mathcal{D}}(x_{in})\|)}\sqrt{d_{x_{in}}}+2d_{x_{in}}) \notag\\
            &\le  C_1 (\sqrt{\min (\|x_{in}\|, \|{\rm Proj}_{\mathcal{D}}(x_{in})\|)}\sqrt{B}+2B) \\
        \|Q_{\theta}(x_{out})-Q_{\theta}({\rm Proj}_{\mathcal{D}}(x_{out}))\| &\le C_1 (\sqrt{\min (\|x_{out}\|, \|{\rm Proj}_{\mathcal{D}}(x_{out})\|)}\sqrt{d_{x_{out}}}+2d_{x_{out}})
    \end{align}
    where $d_{x_{in}}=\|x_{in}-{\rm Proj}_{\mathcal{D}}(x_{in}) \|\le\max_{x_i\in\mathcal{D}}\|x_{in}-x_i\|\le B$ and $d_{x_{out}}=\|x_{out}-{\rm Proj}_{\mathcal{D}}(x_{out})\|$ are distances of  $x_{in}$ and $x_{out}$ to the nearest data points in dataset $\mathcal{D}$. $B$ and $C_1$ are finite constants.
    \label{corollary:q generalization}
\end{theorem}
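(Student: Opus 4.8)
The plan is to exploit that, in the NTK (``lazy training'') regime, the trained deep \textit{Q} network stays within a vanishing error of a function lying in the reproducing kernel Hilbert space $\mathcal H$ of the neural tangent kernel $k$, and that the TD fixed point inherits a finite RKHS norm $\|Q_\theta\|_{\mathcal H}\le C'$ from the boundedness of $r$ and the $\gamma$-contraction of the projected Bellman operator. Writing $\phi(x):=k(x,\cdot)$ for the canonical feature map, the reproducing property together with Cauchy--Schwarz gives
\begin{equation*}
\|Q_\theta(x)-Q_\theta(y)\|\ \le\ \|Q_\theta\|_{\mathcal H}\,\|\phi(x)-\phi(y)\|_{\mathcal H},\qquad \|\phi(x)-\phi(y)\|_{\mathcal H}^2 = k(x,x)-2k(x,y)+k(y,y),
\end{equation*}
so the whole statement reduces to bounding this kernel-induced distance by $C_1\bigl(\sqrt{\min(\|x\|,\|y\|)}\sqrt{\|x-y\|}+2\|x-y\|\bigr)$ with a constant that does not care whether $x$ is interpolated or extrapolated.

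The central step is to estimate $k(x,x)-2k(x,y)+k(y,y)$ using the explicit structure of the ReLU NTK, which is $2$-homogeneous and rotation invariant, $k(x,y)=\|x\|\|y\|\,\kappa\!\left(\langle x,y\rangle/(\|x\|\|y\|)\right)$, where $\kappa$ collects the arc-cosine kernels produced by the layerwise gradients and satisfies $\kappa(1)=$ const together with a square-root modulus of continuity at $u=1$, $\kappa(1)-\kappa(u)\le c\sqrt{1-u}$ (the degree-zero arc-cosine component is what produces this non-Lipschitz behaviour). Assuming without loss of generality $\|x\|\le\|y\|$ and writing $m=\|x\|$, I would insert the intermediate point $\tilde y := m\,y/\|y\|$ and use $\|\phi(x)-\phi(y)\|\le\|\phi(x)-\phi(\tilde y)\|+\|\phi(\tilde y)-\phi(y)\|$. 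The second term is purely radial: homogeneity gives $\|\phi(\tilde y)-\phi(y)\|=\sqrt{\kappa(1)}\,\bigl|\,\|y\|-m\,\bigr|\le\sqrt{\kappa(1)}\,\|x-y\|$, which is the source of the linear $2\|x-y\|$ term. The first term is purely angular: with $\theta=\angle(x,y)$ we get $\|\phi(x)-\phi(\tilde y)\|^2 = 2m^2\big(\kappa(1)-\kappa(\cos\theta)\big)\le 2cm^2\sqrt{1-\cos\theta}$, and the elementary estimate $\|x-y\|^2\ge 2\min(\|x\|,\|y\|)^2(1-\cos\theta)$ (from $\|x-y\|^2=\|x\|^2+\|y\|^2-2\|x\|\|y\|\cos\theta$ and AM--GM) turns this into $\|\phi(x)-\phi(\tilde y)\|^2\le \sqrt 2\,c\,m\,\|x-y\|$, i.e.\ a contribution of order $\sqrt{\min(\|x\|,\|y\|)}\sqrt{\|x-y\|}$. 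Collecting the two pieces and absorbing $\|Q_\theta\|_{\mathcal H}$, $c$ and $\kappa(1)$ into $C_1$ yields the claimed bound.

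The remaining step is bookkeeping: setting $y={\rm Proj}_{\mathcal D}(x)$ makes $\|x-y\|=d_x$ by definition of the projection, which delivers both displayed inequalities at once (the argument never invoked convexity, only the size of the displacement), and the further bound in the first line follows because $t\mapsto\sqrt t$ and $t\mapsto t$ are increasing and $d_{x_{in}}=\|x_{in}-{\rm Proj}_{\mathcal D}(x_{in})\|\le\max_{x_i\in\mathcal D}\|x_{in}-x_i\|\le B$, the first inequality being immediate (the nearest data point is no farther than the farthest one) and $B$ being finite since $\mathcal D$ is finite.

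The part I expect to be the main obstacle is making the first step rigorous rather than heuristic. The clean RKHS picture is exact only at infinite width with linearized dynamics, so one must argue (i) that the TD-trained finite-width $Q_\theta$ stays $O(\text{width}^{-1/2})$-close to its linearization throughout training --- a near-constant-NTK argument that is standard for supervised regression but must be adapted to the bootstrapped TD update --- and (ii) that the TD fixed point actually has a controlled RKHS norm, which, since TD is not a gradient flow on a convex loss, has to be extracted from the $\gamma$-contraction of the Bellman operator expressed in the kernel geometry. I would discharge these by invoking the NTK assumptions already posited in the statement and citing the corresponding lazy-training results, keeping the novel content concentrated in the kernel-distance estimate of the second step.
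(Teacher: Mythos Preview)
Your proposal is correct and follows the same skeleton as the paper: bound $|Q_\theta(x)-Q_\theta(y)|$ by (a norm of $Q_\theta$)$\times\|\Phi(x)-\Phi(y)\|$, invoke the H\"older-type smoothness of the ReLU NTK feature map, and then specialize $y={\rm Proj}_{\mathcal D}(x)$ together with the trivial bound $d_{x_{in}}\le B$. The differences are in presentation and in where the work is done.

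First, the paper does not re-derive the feature-map estimate; it quotes it directly as Proposition~4 of Bietti--Mairal (their Lemma~2), namely $\|\Phi(x)-\Phi(y)\|\le\sqrt{\min(\|x\|,\|y\|)\|x-y\|}+2\|x-y\|$. Your radial/angular decomposition via $\tilde y=\|x\|\,y/\|y\|$ and the arc-cosine modulus of continuity $\kappa(1)-\kappa(u)\lesssim\sqrt{1-u}$ is exactly how that proposition is proved, so you are reproducing the cited lemma rather than taking a different route. Second, for the first factor the paper writes $Q_\theta(x)=\Phi(x)^\top\beta$ via the kernel-regression/min-norm characterization (their Lemma~3, from Xu et al.) and bounds by $\|\beta\|_\infty$, whereas you use the cleaner Cauchy--Schwarz with the RKHS norm $\|Q_\theta\|_{\mathcal H}$; these give the same conclusion and your version is arguably more transparent. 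Third, on the issue you flag as the main obstacle---controlling the norm of a TD-trained $Q$ rather than a supervised regressor---the paper does not carry out the analysis you sketch; it simply assumes delayed targets make each evaluation step a fixed-label regression, so $\beta\to\beta_{\rm ntk}$ at each step, and cites prior DRL-under-NTK work for justification. Your plan to lean on the stated NTK assumptions and existing lazy-training results is therefore exactly what the paper does, just phrased more cautiously.
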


Theorem \ref{corollary:q generalization} shows that given an unseen input $x$, $Q_{\theta}(x)$ can be controlled by in-sample \textit{Q} value $Q_{\theta}({\rm Proj}_{\mathcal{D}}(x))$ and the distance $\|x-{\rm Proj}_{\mathcal{D}}(x)\|$. The smaller the distance, the more controllable the output of deep \textit{Q} functions. Therefore, because the distance to dataset is strictly bounded (at most $B$ for interpolated data), the approximated \textit{Q} values at interpolated data as well as extrapolated data near the boundaries of the convex hull formed by the dataset cannot be too far off. Moreover, as $d_{x_{out}}$ can take substantially larger values than $d_{x_{in}}$, interpolated data generally enjoys a tighter bound compared with extrapolated data, if the dataset only narrowly covers a large state-action space. 

Empirical observations in Figure \ref{fig:subim2} and Theorem \ref{corollary:q generalization} both demonstrate that data geometry can induce different approximation error accumulation patterns for deep \textit{Q} functions.
While approximation error accumulation is generally detrimental to offline RL, a fine-grained analysis is missing in previous studies about where value function can approximate well.
We argue that it is necessary to take data geometry into consideration when designing less conservative offline RL algorithms.


\section{Generalizable Offline RL Framework}

\label{sec:Distance awareness Offline RL}
In this section, we present our algorithm DOGE (Distance-sensitive Offline RL with better GEneralization). By introducing a specially designed state-conditioned distance function to characterize the geometry of offline datasets, we can construct a very simple, less conservative and also more generalizable offline RL algorithm upon standard actor-critic framework. 

\subsection{State-Conditioned Distance Function}
\label{subsec:Training and Property of Distance Function}

As revealed in Theorem \ref{corollary:q generalization}, the sample-to-dataset distance plays an important role in measuring the controllability of \textit{Q} values.
However, given an arbitrary state-action sample $(s,a)$, naively computing its distance to the closest data point in a large dataset can be costly and impractical. Ideally, we prefer to have a learnable distance function which also has the ability to reflect the overall dataset geometry. Based on this intuition, we design a state-conditioned distance function that can be learned in an elegantly simple supervised manner with desirable properties.


Specifically, we learn the state-conditioned distance function $g(s,a)$ by solving the following regression problem, with state-action pairs $(s,a)\sim\mathcal{D}$ and synthetic noise actions sampled from the uniform distribution over the full action space $\mathcal{A}$:
\begin{equation}
    \min_{g}\mathbb{E}_{(s,a)\sim \mathcal{D}}\left[\mathbb{E}_{\hat{a}\sim Unif(\mathcal{A})}[\|a-\hat{a}\|-g(s,\hat{a})]^2\right]
    \label{equ:train distance}
\end{equation}
In practical implementation, for each $(s,a)\sim\mathcal{D}$, we sample $N$ noise actions uniformly in the action space $\mathcal{A}$ to train $g(\cdot)$. More implementation details can be found in Appendix \ref{sec: implementation details}.
Moreover, with the optimization objective defined in Eq. (\ref{equ:train distance}), we can show that the optimal state-conditioned distance function has two desirable properties (proofs can be found in Appendix \ref{sec:proof of property of distance function}):

\begin{property}
    The optimal state-conditioned distance function of Eq. (\ref{equ:train distance}) is convex \textit{w.r.t.} actions and is an upper bound of the distance to the state-conditioned centroid $a_o(s)$ of training dataset $\mathcal{D}$: 
    \begin{align}
        g^*(s,\hat{a})&=\mathbb{E}_{a\sim Unif(\mathcal{A})}\left[C(s,a)\|\hat{a}-a\|\right] \notag\\
        &\ge \|\hat{a} - \mathbb{E}_{a\sim Unif(\mathcal{A})}[C(s,a)\cdot a]\|
        =\|\hat{a}-a_o(s)\|, \quad\forall \hat{a}\in \mathcal{A}, s\in\mathcal{D}
    \label{equ:convex distance function}
    \end{align}
    \label{property: convex distance}
\end{property}
where $C(s,a)=\frac{\mu(s,a)}{\mathbb{E}_{a\sim Unif(\mathcal{A})}\mu(s,a)}\geq 0$, $\mu(s,a)$ is state-action distribution of dataset $\mathcal{D}$.
Given a state $s\in\mathcal{D}$, the state-conditioned centroid is defined as $a_o(s)=\mathbb{E}_{a\sim Unif(\mathcal{A})}[C(s,a)\cdot a]$.
Since $L_2$-norm is convex and the non-negative combination of convex functions is still convex, $g^*(s,\hat{a})$ is also a convex function \textit{w.r.t.} $\hat{a}$.

\begin{property}
    The negative gradient of the optimal state-conditioned distance function at an extrapolated action $\hat{a}$, $-\nabla_{\hat{a}} g^*(s,\hat{a})$, points inside the convex hull of the dataset.
    \label{property: convex constraint}
\end{property}

From Property \ref{property: convex distance}, we can see that the optimal state-conditioned distance function characterizes data geometry and outputs an upper bound of the distance to the state-conditioned centroid of the training dataset. Property \ref{property: convex constraint} indicates that if we use the learned distance function as a policy constraint, it can drive the learned policy to move inside the convex hull of training data. We visualize the value of the trained state-conditioned distance function in Figure~\ref{fig:ditance schematic}. It is clear that the learned distance function can accurately predict the sample-to-dataset centroid distance. By utilizing such distance function, we can constrain the policy based on the global geometric information of training datasets. This desirable property is non-obtainable by simply constraining the policy based on sample-to-sample distance such as the MSE loss between policy generated and dataset actions, which can only bring local geometric information. Moreover, the learned distance function can not only predict well at in-distribution states but also generalize well at OOD states. 


\begin{figure}[t]
    \centering
    \includegraphics[width=0.96\textwidth]{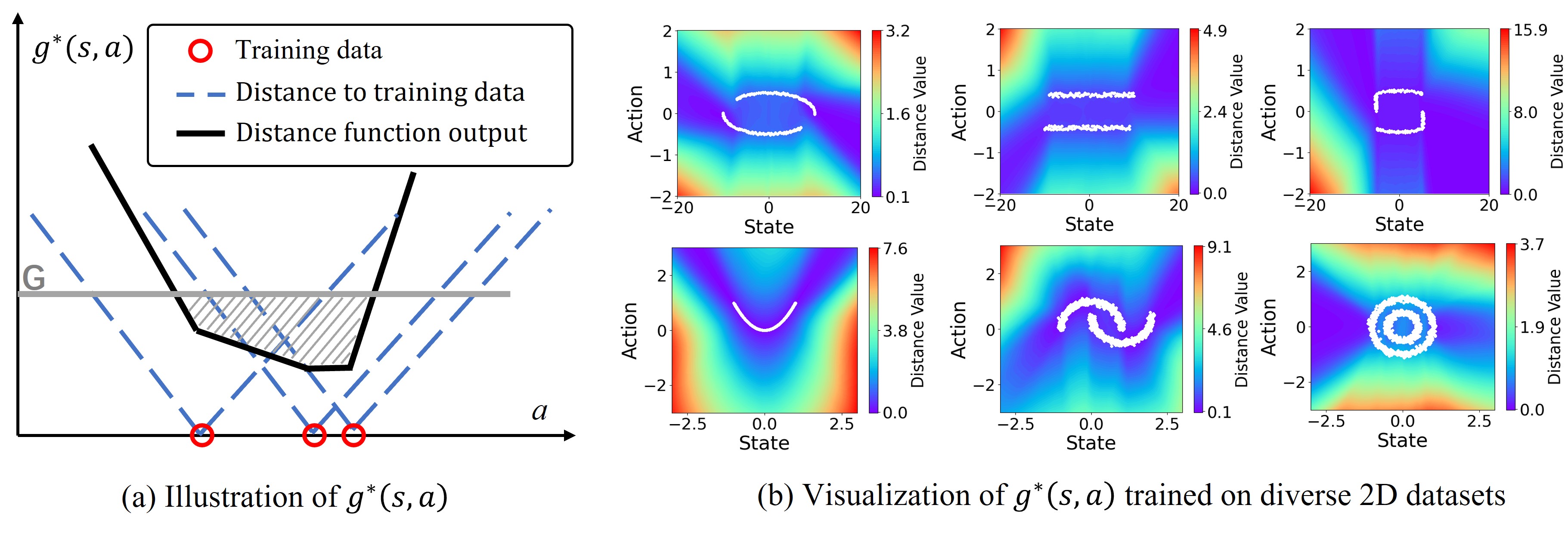}
    \caption{\small  Illustration of the state-conditioned distance function. The output of the optimal distance function is the non-negative combination of the distances to all training data. $G$ is the threshold in Eq. (\ref{equ: policy learning objective}) In (b), Offline data are marked as white dots.}
    \label{fig:ditance schematic}
\end{figure}


\subsection{Distance-Sensitive Offline Reinforcement Learning}

\label{subsection: Distance aware offline rl}
Capturing the geometry of offline datasets, we now construct a minimalist distance-sensitive offline RL framework, by simply plugging the state-conditioned distance function as a policy constraint into standard online actor-critic methods (such as TD3 \citep{fujimoto2018addressing} and SAC \citep{haarnoja2018soft}). This results in the following policy maximization objective:
\begin{equation}
    \pi=\arg\max_{\pi}\mathbb{E}_{s\sim\mathcal{D}, a\sim \pi(\cdot|s)}\left[ Q(s,a)\right] \quad
    s.t.\;\; \mathbb{E}_{s\sim\mathcal{D}, a\sim\pi(\cdot|s)}[g(s,a)]\le G
    \label{equ: policy learning objective}
\end{equation}
where $G$ is a task-dependent threshold varying across tasks. In our method, we adopt a non-parametric treatment by setting $G$ as the mean output (50\% quantile) of the learned distance function on the training dataset, \textit{i.e.}, $\mathbb{E}_{(s,a)\sim\mathcal{D}} [g(s,a)]$,
which is approximated over mini-batch samples to reduce computational complexity (see Appendix \ref{sec:Ablation} for ablation on $G$). The constrained optimization problem in Eq. (\ref{equ: policy learning objective}) can be reformulated 
as:
\begin{equation}
    \pi = \arg \max_{\pi}\min_{\lambda}\mathbb{E}_{s\sim\mathcal{D}, a\sim\pi(\cdot|s)}\left[\beta Q(s,a)-\lambda(g(s,a)-G)\right]\quad  s.t.\;\; \lambda\ge 0
    \label{equ: final policy learning objective}
\end{equation}
where $\lambda$ is the Lagrangian multiplier, which is auto-adjusted using dual gradient descent. Following TD3+BC~\citep{fujimoto2021minimalist}, \textit{Q} values are rescaled by $\beta=\frac{\alpha}{\frac{1}{n}\sum_{i=1}^n|Q(s_i,a_i)|}$ to balance \textit{Q} function maximization and policy constraint satisfaction, controlled by a hyperparameter $\alpha$.
To reduce computations, the denominator of $\beta$ is approximated over mini-batch of samples. The resulting algorithm is easy to implement. In our experiments, we use TD3. Please refer to Appendix \ref{sec: implementation details} for implementation details.


\subsection{Relaxation with Bellman-consistent Coefficient}

\label{subsec:Bellman-consistent coefficient}

\subsubsection{Bellman-consistent Coefficient and Constrained Policy Set}
\label{subsec:Distributional_Shift_Metrics_Under_Partial_Coverage}

The key difference between DOGE and other policy constraint methods lies in that DOGE relaxes the strong full coverage assumption\footnote{$\sup_{s,a}\frac{v(s,a)}{\mu(s,a)}< \infty$, $v$ and $\mu$ are marginal distributions of the learned policy and the dataset~\citep{le2019batch}.} on offline datasets and allows exploitation on generalizable OOD areas. To relax the unrealistic full-coverage assumption, we resort to a weaker condition proposed by~\citep{xie2021bellman}, the Bellman-consistent coefficient (Definition \ref{def: bellman consistent coefficient}), to measure how well Bellman errors can transfer to different distributions (Theorem \ref{theorem:bellman bound}).


Denote $\|f\|^2_{2, \mu}:=\mathbb{E}_{\mu}[\|f\|^2]$; $\mathcal{T}^{\pi}Q$ is the Bellman operator of policy $\pi$, defined as $\mathcal{T}^{\pi}Q(s,a):=r(s,a)+\gamma \mathbb{E}_{a'\sim \pi(\cdot|s'), s'\sim\mathcal{P}(\cdot|s,a)}[Q(s', a')]:=r(s,a)+\gamma \mathbb{P}^{\pi}[Q(s',a')]$. $\mathbb{P}^{\pi}[\cdot]$ is the brief notation for $\mathbb{E}_{a'\sim\pi(\cdot|s'), s'\sim\mathcal{P}(\cdot|s,a)}[\cdot]$. $\mathcal{F}$ is the function class of $Q$ networks. The Bellman-consistent coefficient is defined as:

\begin{definition}
    (Bellman-consistent coefficient). We define $\mathcal{B}(v,\mu,\mathcal{F},\pi)$ to measure the distributional shift from an arbitrary distribution $v$ to data distribution $\mu$, \textit{w.r.t.} $\mathcal{F}$ and $\pi$,
    \begin{equation}
        \mathcal{B}(v,\mu, \mathcal{F}, \pi):=\sup_{Q\in\mathcal{F}}\frac{\|Q-\mathcal{T}^{\pi}Q\|^2_{2,v}}{\|Q-\mathcal{T}^{\pi}Q\|^2_{2,\mu}}
    \label{equ:bellman consistent}
    \end{equation}
\label{def: bellman consistent coefficient}
\end{definition}

This definition captures the generalization performance of function approximation across different distributions. Intuitively, a small value of $\mathcal{B}(v, \mu, \mathcal{F}, \pi)$ means Bellman errors for policy $\pi$ can accurately transfer from distribution $\mu$ to $v$. 
This suggests that Bellman errors can transfer well between two distributions even if a large discrepancy exists, as long as the Bellman-consistent coefficient is small.

Based on Definition \ref{def: bellman consistent coefficient}, we introduce the definition of Bellman-consistent constrained policy set.
\begin{definition}
    (Bellman-consistent constrained policy set). We define the Bellman-consistent constrained policy set as $\Pi_\mathcal{B}$. The Bellman-consistent coefficient under the transition induced by $\Pi_\mathcal{B}$ can be bounded by some finite constants $l(k)$: 
    \begin{equation}
        \mathcal{B}(\rho_k,\mu,\mathcal{F},\pi)\le l(k)
    \end{equation}
    where
    $\rho_k = \rho_0P^{\pi_1}...P^{\pi_k}, \forall \pi_1, ..., \pi_k \in \Pi_\mathcal{B}$, $\rho_0$ is the initial state-action distribution and $P^{\pi_i}$ is the transition operator induced by $\pi_i$, i.e., $P^{\pi_i}(s',a'|s,a)=\mathcal{P}(s'|s,a)\pi_i(a'|s')$.
    \label{def:bellman_set}
\end{definition}

We denote the constrained Bellman operator induced by $\Pi_\mathcal{B}$ as $\mathcal{T}^{\Pi_\mathcal{B}}$, $\mathcal{T}^{\Pi_\mathcal{B}}Q(s,a):=r(s,a)+\max_{\pi\in \Pi_\mathcal{B}}\gamma \mathbb{P}^{\pi}[Q(s',a')]$.
$\mathcal{T}^{\Pi_\mathcal{B}}$ can be seen as a Bellman operator on a redefined MDP, thus theoretical results of MDP also carry over, such as contraction mapping and existence of a fixed point.


\subsubsection{Bellman Consistent Coefficient and Performance Bound of DOGE}

We show that the policy set induced by DOGE is essentially a Bellman-consistent policy set defined in Definition \ref{def:bellman_set}. Meanwhile, the distance constraint in DOGE can produce a small value of $\mathcal{B}$ and hence guarantee the learned policy deviates only to those generalizable areas.


\begin{theorem}\label{theorem:bellman bound}
    (Upper bound of Bellman-consistent coefficient). Under the NTK assumption, the Bellman-consistent coefficient $\mathcal{B}(v,\mu, \mathcal{F}, \pi)$ is upper bounded as:
    \begin{equation}
    \small
        \mathcal{B}(v,\mu,\mathcal{F},\pi)\le\frac{1}{\epsilon_\mu}\left\|\underbrace{(1-\gamma)Q(s_o,a_o)+R_{\max}}_{\mathcal{B}_1}+\underbrace{C_1\left(C_2\sqrt{d_1}+d_1\right)}_{\mathcal{B}_2}+\underbrace{(2-\gamma)C_1\mathbb{P}^{\pi}\left(C_2\sqrt{d_2}+d_2\right)}_{\mathcal{B}_3}\right\|^2_{2,v}
        \label{equ:bellman upper}
    \end{equation}
    where we denote $x=(s,a)$ and $x'=(s',a')$. $x_o=\mathbb{E}_{x\sim\mathcal{D}}[x]$ is the centroid of offline dataset.
    $d_1=\|x-x_o\|$ and $d_2=\|x'-x_o\|$ are the sample-to-centroid distances.
    $C_2=\sqrt{\sup_{x\in\mathcal{S}\times\mathcal{A}}\|x\|}$ is related to the upper bound of the input scale.  $\epsilon_{\mu}$ is the lower bound of Bellman error (square) for $\pi$ under distribution $\mu$, i.e., $\epsilon_{\mu}\le\|Q-\mathcal{T}^{\pi}Q\|^2_{2,\mu}$.
\end{theorem}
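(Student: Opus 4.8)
The plan is to bound $\mathcal{B}(v,\mu,\mathcal{F},\pi) = \sup_{Q\in\mathcal{F}} \|Q-\mathcal{T}^{\pi}Q\|^2_{2,v} / \|Q-\mathcal{T}^{\pi}Q\|^2_{2,\mu}$ by separately lower-bounding the denominator and upper-bounding the numerator's integrand pointwise. The denominator is handled trivially by the hypothesis $\epsilon_\mu \le \|Q-\mathcal{T}^{\pi}Q\|^2_{2,\mu}$, which lets us pull $1/\epsilon_\mu$ out front and reduce the problem to bounding $\|Q(s,a)-\mathcal{T}^{\pi}Q(s,a)\|$ for an arbitrary $Q\in\mathcal{F}$ and arbitrary $(s,a)$, then taking the $\|\cdot\|_{2,v}$-norm. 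So the core of the argument is a pointwise decomposition of the Bellman residual.

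First I would write $Q(x) - \mathcal{T}^{\pi}Q(x) = Q(x) - r(s,a) - \gamma\mathbb{P}^{\pi}[Q(x')]$ and insert the centroid point $x_o=(s_o,a_o)$ as a reference: add and subtract $Q(x_o)$ and $\gamma\mathbb{P}^{\pi}[Q(x_o)]$ (the latter being constant in $x'$, so $\mathbb{P}^{\pi}[Q(x_o)] = Q(x_o)$). This yields three groups of terms. The first group collects the quantities evaluated at the centroid together with the reward: $(1-\gamma)Q(x_o) + r(s,a)$, which is bounded by $(1-\gamma)Q(s_o,a_o) + R_{\max}$ after using $r\le R_{\max}$ — this is $\mathcal{B}_1$. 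The second group is $\|Q(x) - Q(x_o)\|$, the deviation of the $Q$-network at the current sample from its value at the centroid; the third group is $\gamma\|\mathbb{P}^{\pi}[Q(x') - Q(x_o)]\| \le \gamma \mathbb{P}^{\pi}\|Q(x')-Q(x_o)\|$ by Jensen. Both of these are exactly the "value difference" quantities controlled by Theorem~\ref{corollary:q generalization}: taking the centroid $x_o$ in the role of the projected in-sample point (or bounding $Q(x)-Q(\mathrm{Proj}_{\mathcal{D}}(x))$ and then $Q(\mathrm{Proj}_{\mathcal{D}}(x))-Q(x_o)$, absorbing constants), the NTK bound gives $\|Q(x)-Q(x_o)\| \le C_1(\sqrt{\min(\|x\|,\|x_o\|)}\sqrt{d_1} + 2 d_1)$ with $d_1 = \|x-x_o\|$, and $\sqrt{\min(\|x\|,\|x_o\|)} \le \sqrt{\sup_x \|x\|} = C_2$. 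This produces $\mathcal{B}_2 = C_1(C_2\sqrt{d_1} + d_1)$ (the factor on $d_1$ being folded into $C_1$) and, applying the same bound inside $\mathbb{P}^{\pi}$ to $x'$ with $d_2=\|x'-x_o\|$, the term $\gamma C_1 \mathbb{P}^{\pi}(C_2\sqrt{d_2}+d_2)$; I would note that the paper's stated $\mathcal{B}_3$ carries a $(2-\gamma)$ rather than $\gamma$, which comes from being slightly loose — bundling a leftover piece of the $\mathcal{B}_2$-type term from $Q(x')$ vs. $Q(x_o)$ inside the expectation — and I would just track constants loosely enough to land on $(2-\gamma)$, or remark that any $\gamma$-dependent constant $\le 2$ suffices.

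Assembling: by the triangle inequality $\|Q(x)-\mathcal{T}^{\pi}Q(x)\| \le \mathcal{B}_1 + \mathcal{B}_2 + \mathcal{B}_3$ pointwise, uniformly over $Q\in\mathcal{F}$ (the constants $C_1, C_2, R_{\max}$ do not depend on the particular $Q$). Squaring, integrating against $v$, and dividing by the denominator bound $\epsilon_\mu$ gives exactly Eq.~(\ref{equ:bellman upper}). Since the pointwise bound holds for every $Q\in\mathcal{F}$, taking the supremum is immediate.

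The main obstacle is the legitimacy of applying Theorem~\ref{corollary:q generalization} with the \emph{centroid} $x_o$ as the anchor point rather than the nearest dataset point $\mathrm{Proj}_{\mathcal{D}}(x)$: the theorem as stated is about $Q(x)$ vs. $Q(\mathrm{Proj}_{\mathcal{D}}(x))$. I would resolve this either by re-deriving the underlying NTK Lipschitz-type estimate directly for the pair $(x, x_o)$ (the NTK argument bounds $\|Q(x)-Q(y)\|$ in terms of $\|x-y\|$ and $\sqrt{\min(\|x\|,\|y\|)}$ for any two points, so specializing $y=x_o$ is harmless and $x_o$ being inside the convex hull only helps), or by chaining through $\mathrm{Proj}_{\mathcal{D}}(x)$ and absorbing the bounded in-hull distance $\|\mathrm{Proj}_{\mathcal{D}}(x) - x_o\| \le B$ into $C_1$. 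A secondary subtlety is ensuring $\mathbb{P}^{\pi}$ of the square-root term is handled by Jensen in the right direction ($\mathbb{E}\sqrt{\cdot}\le\sqrt{\mathbb{E}\,\cdot}$ would go the wrong way, but here we only need $\|\mathbb{E}[\cdot]\| \le \mathbb{E}\|\cdot\|$, which is fine); I would keep $\mathbb{P}^{\pi}$ outside the square-root as written in $\mathcal{B}_3$, which is exactly the form claimed.
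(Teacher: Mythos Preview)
Your proposal is correct and follows essentially the same route as the paper: lower-bound the denominator by $\epsilon_\mu$, bound the numerator's integrand pointwise by inserting the centroid value $Q(x_o)$, apply the NTK smoothness bound (the paper's Lemma~\ref{lemma:nn_generalization}, which is stated for \emph{any} pair of inputs and thus applies directly to $(x,x_o)$ and $(x',x_o)$---so your ``main obstacle'' is already resolved), and replace $\sqrt{\min(\|x\|,\|x_o\|)}$ by $C_2$.

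The only algebraic difference is the origin of the coefficient $(2-\gamma)$. You add and subtract $Q(x_o)$ and $\gamma Q(x_o)$ directly, which cleanly yields coefficient~$\gamma$ on the $\mathbb{P}^{\pi}\|Q(x')-Q(x_o)\|$ term. The paper instead first rewrites $-\gamma\mathbb{P}^{\pi}[Q(x')] = -\mathbb{P}^{\pi}[Q(x')] + (1-\gamma)\mathbb{P}^{\pi}[Q(x')]$ and then inserts $Q(x_o)$ into both the $Q(x)-\mathbb{P}^{\pi}[Q(x')]$ piece and the $(1-\gamma)\mathbb{P}^{\pi}[Q(x')]$ piece, so $\|\mathbb{P}^{\pi}[Q(x')]-Q(x_o)\|$ appears with weight $1+(1-\gamma)=2-\gamma$. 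Your decomposition is therefore slightly tighter, and since $\gamma < 2-\gamma$ it immediately implies the paper's stated bound; your speculation about the $(2-\gamma)$ being a looseness artifact is exactly right.
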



The RHS of Eq. (\ref{equ:bellman upper}) contains four parts: $\frac{1}{\epsilon_\mu}$, $\mathcal{B}_1$, $\mathcal{B}_2$ and $\mathcal{B}_3$. It is reasonable to assume ${\epsilon_\mu}>0$, because of the approximation error of \textit{Q} networks and the distribution mismatch between $\mu$ and 
$\pi$. $\mathcal{B}_1$ is only dependent on the \textit{Q} value $Q(s_o, a_o)$ at the centroid of the dataset and the max reward $R_{\max}$. $\mathcal{B}_2$ is related to distance $d_1$ and distribution $v$.  $\mathcal{B}_3$ is related to $d_2$, $v$ and $\mathbb{P}^{\pi}$.
To be mentioned, the distance regularization in DOGE compels the learned policy to output the action that is near the state-conditioned centroid of dataset, thus $\mathcal{B}_2$ and $\mathcal{B}_3$ can be driven to small values. Therefore, the RHS of Eq. (\ref{equ:bellman upper}) can be bounded by finite constants under DOGE, which shows that the constrained policy set induced by DOGE is essentially a Bellman-consistent constrained policy set.


Then, the performance gap between the policy learned by DOGE and the optimal policy can be bounded as given in Theorem \ref{theorem:performance of policy}. See Appendix \ref{subsec:proof of theorem3.2} and \ref{subsec:proof of theorem3.3} for the proof of Theorem \ref{theorem:bellman bound} and \ref{theorem:performance of policy}.

\begin{theorem}
    (Performance bound of the learned policy by DOGE). Let $Q^{\Pi_\mathcal{B}}$ be the fixed point of $\mathcal{T}^{\Pi_\mathcal{B}}$, i.e., $Q^{\Pi_\mathcal{B}}=\mathcal{T}^{\Pi_\mathcal{B}}Q^{\Pi_\mathcal{B}}$, and  
    $\epsilon_{k}=Q^k-\mathcal{T}^{\Pi_\mathcal{B}}Q^{k-1}$ is the Bellman error at the $k$-th iteration. $\|f\|_{\mu}:=\mathbb{E}_\mu[\|f\|]$. The performance of the learned policy $\pi_n$ is bounded by:
    \begin{equation}
        \lim_{n\rightarrow\infty}\|Q^{*}-Q^{\pi_n}\|_{\rho_0}\le\frac{2\gamma}{(1-\gamma)^2}\left[L(\Pi_\mathcal{B})\sup_{k\ge0}\|\epsilon_k\|_{\mu}+\frac{1-\gamma}{2\gamma}\alpha(\Pi_\mathcal{B})\right]
    \label{equ:performance of policy}
    \end{equation}
    \label{theorem:performance of policy}
\end{theorem}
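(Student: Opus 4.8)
The plan is to treat this as an approximate value iteration error-propagation result for the constrained operator $\mathcal{T}^{\Pi_\mathcal{B}}$, in the spirit of the classical $L_p$-norm bounds for approximate dynamic programming. The first move is to split the target via the triangle inequality,
\[
\|Q^*-Q^{\pi_n}\|_{\rho_0}\le\|Q^*-Q^{\Pi_\mathcal{B}}\|_{\rho_0}+\|Q^{\Pi_\mathcal{B}}-Q^{\pi_n}\|_{\rho_0},
\]
where $Q^{\Pi_\mathcal{B}}$ is the fixed point of $\mathcal{T}^{\Pi_\mathcal{B}}$. The first term is a \emph{restriction bias} (the price of only improving within $\Pi_\mathcal{B}$) and should yield the $\alpha(\Pi_\mathcal{B})$ contribution; the second is the \emph{iteration error} coming from the nonzero Bellman residuals $\epsilon_k$ and should yield the $L(\Pi_\mathcal{B})\sup_k\|\epsilon_k\|_\mu$ contribution. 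Throughout I use that $\mathcal{T}^{\Pi_\mathcal{B}}$ is monotone and a $\gamma$-contraction, as noted in Section~\ref{subsec:Distributional_Shift_Metrics_Under_Partial_Coverage}, so the usual manipulations with optimal Bellman operators apply.

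For the iteration error, write $Q^k=\mathcal{T}^{\Pi_\mathcal{B}}Q^{k-1}+\epsilon_k$ and subtract $Q^{\Pi_\mathcal{B}}=\mathcal{T}^{\Pi_\mathcal{B}}Q^{\Pi_\mathcal{B}}$. Pointwise one has $|\mathcal{T}^{\Pi_\mathcal{B}}Q^{k-1}-\mathcal{T}^{\Pi_\mathcal{B}}Q^{\Pi_\mathcal{B}}|\le\gamma P^{\pi_k}|Q^{k-1}-Q^{\Pi_\mathcal{B}}|$ for an appropriate greedy policy $\pi_k\in\Pi_\mathcal{B}$, so unrolling $n$ steps gives $|Q^n-Q^{\Pi_\mathcal{B}}|\le\sum_{j=1}^{n}\gamma^{n-j}\big(P^{\pi_n}\cdots P^{\pi_{j+1}}\big)|\epsilon_j|+\gamma^n\big(P^{\pi_n}\cdots P^{\pi_1}\big)|Q^0-Q^{\Pi_\mathcal{B}}|$. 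Taking the $\rho_0$-weighted norm, each product $\rho_0 P^{\pi_n}\cdots P^{\pi_{j+1}}$ is exactly a distribution of the type $\rho_{n-j}$ in Definition~\ref{def:bellman_set}; applying Cauchy--Schwarz (to pass between the $L_1$ and $L_2$ norms of the residuals) together with the Bellman-consistent change-of-measure bound $\|\epsilon_j\|_{\rho_{n-j}}\le\sqrt{l(n-j)}\,\|\epsilon_j\|_{\mu}$ converts every term to $\sqrt{l(n-j)}\sup_k\|\epsilon_k\|_\mu$. Collecting the geometric weights, defining $L(\Pi_\mathcal{B})$ as the normalized series $(1-\gamma)\sum_{m\ge1}\gamma^{m-1}\sqrt{l(m)}$, and letting $n\to\infty$ so the transient $\gamma^n\|Q^0-Q^{\Pi_\mathcal{B}}\|$ vanishes, gives $\limsup_n\|Q^{\Pi_\mathcal{B}}-Q^{\pi_n}\|_{\rho_0}\le\frac{2\gamma}{(1-\gamma)^2}L(\Pi_\mathcal{B})\sup_k\|\epsilon_k\|_{\mu}$, where the extra $\frac{2\gamma}{1-\gamma}$ over the bare geometric sum comes from the standard lemma converting "value iteration is close to the fixed point" into "the induced greedy policy $\pi_n$ has near-optimal value".

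For the restriction bias, compare the fixed points of $\mathcal{T}^*$ and $\mathcal{T}^{\Pi_\mathcal{B}}$. Since an unconstrained greedy step dominates one restricted to $\Pi_\mathcal{B}$, we have $\mathcal{T}^*Q\ge\mathcal{T}^{\Pi_\mathcal{B}}Q$ for all $Q$, hence $Q^*\ge Q^{\Pi_\mathcal{B}}$; writing $Q^*-Q^{\Pi_\mathcal{B}}=\mathcal{T}^*Q^*-\mathcal{T}^*Q^{\Pi_\mathcal{B}}+\big(\mathcal{T}^*Q^{\Pi_\mathcal{B}}-\mathcal{T}^{\Pi_\mathcal{B}}Q^{\Pi_\mathcal{B}}\big)\le\gamma P^{\pi^*}(Q^*-Q^{\Pi_\mathcal{B}})+\alpha(\Pi_\mathcal{B})\mathbf{1}$ with $\alpha(\Pi_\mathcal{B}):=\sup_{s,a}\big(\mathcal{T}^*Q^{\Pi_\mathcal{B}}(s,a)-\mathcal{T}^{\Pi_\mathcal{B}}Q^{\Pi_\mathcal{B}}(s,a)\big)$, the standard fixed-point comparison gives $0\le Q^*-Q^{\Pi_\mathcal{B}}\le\frac{\alpha(\Pi_\mathcal{B})}{1-\gamma}\mathbf{1}$, so $\|Q^*-Q^{\Pi_\mathcal{B}}\|_{\rho_0}\le\frac{\alpha(\Pi_\mathcal{B})}{1-\gamma}$. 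Since $\frac{2\gamma}{(1-\gamma)^2}\cdot\frac{1-\gamma}{2\gamma}=\frac{1}{1-\gamma}$, this is exactly the second bracketed term, and adding the two pieces and taking $n\to\infty$ reproduces the claimed bound.

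The main obstacle is the change-of-measure step. In the usual approximate dynamic programming bounds one invokes a finite concentrability coefficient that presupposes full coverage of $\mu$; here that assumption is deliberately dropped and its role is played instead by the Bellman-consistent coefficients $l(k)$ of Definition~\ref{def:bellman_set}, whose finiteness is \emph{not} automatic but is guaranteed by Theorem~\ref{theorem:bellman bound} because DOGE's distance constraint keeps $\mathcal{B}_2$ and $\mathcal{B}_3$ small. The proof must therefore (i) verify that every multi-step distribution $\rho_{n-j}$ appearing in the unrolling is generated by policies lying in $\Pi_\mathcal{B}$, so that Definition~\ref{def:bellman_set} is legitimately applicable, and (ii) ensure the series $\sum_m\gamma^m\sqrt{l(m)}$ defining $L(\Pi_\mathcal{B})$ converges, i.e.\ the $l(k)$ do not grow faster than geometrically — which again reduces to the uniform boundedness of the $\mathcal{B}_i$ terms in Theorem~\ref{theorem:bellman bound}. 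A secondary point is to confirm that $\pi_n$ is the $\Pi_\mathcal{B}$-greedy policy with respect to $Q^n$, so that the residual-to-performance conversion used in the iteration-error step is valid, and that identifying $\epsilon_j$ with a Bellman residual controlled by $\mathcal{B}(\cdot,\mu,\mathcal{F},\pi)$ is justified for the relevant $Q\in\mathcal{F}$.
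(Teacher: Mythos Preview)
Your high-level decomposition is exactly the paper's: split via the triangle inequality into a restriction-bias term $\|Q^*-Q^{\Pi_\mathcal{B}}\|_{\rho_0}$ and an iteration-error term $\|Q^{\Pi_\mathcal{B}}-Q^{\pi_n}\|_{\rho_0}$, handle the first by a contraction/fixed-point argument, and handle the second by error propagation plus a change of measure through the Bellman-consistent coefficients $l(k)$. The paper carries out the second part by quoting a Munos-style $L_2$ error-propagation lemma (their Lemma~5, from Le et al.) rather than your direct pointwise unrolling, but the spirit is the same.

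There is one genuine gap in your sketch. You treat the passage from ``$Q^n$ close to $Q^{\Pi_\mathcal{B}}$'' to ``$Q^{\pi_n}$ close to $Q^{\Pi_\mathcal{B}}$'' as a bare multiplicative factor $\tfrac{2\gamma}{1-\gamma}$. That Singh--Yee factor is correct in $\|\cdot\|_\infty$, but the theorem is stated in $\|\cdot\|_{\rho_0}$. In weighted norm the conversion introduces a \emph{second} layer of kernel compositions: pointwise one has $Q^{\Pi_\mathcal{B}}-Q^{\pi_n}\le\sum_{m\ge0}\gamma^m(P^{\pi_n})^m\,(\cdot)$ applied to your unrolled bound for $|Q^n-Q^{\Pi_\mathcal{B}}|$, so after integrating against $\rho_0$ the distributions that show up are $\rho_0(P^{\pi_n})^mP^{\pi_n}\cdots P^{\pi_{j+1}}=\rho_{m+n-j}$, and the change of measure brings in $l(m+n-j)$ rather than $l(n-j)$. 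Summing this double series (the paper does it in $L_2$-squared and then takes a square root) is what produces the factor $k$ in the paper's $L(\Pi_\mathcal{B})=\sqrt{(1-\gamma)^2\sum_{k\ge1}k\gamma^{k-1}l(k)}$; your proposed $L(\Pi_\mathcal{B})=(1-\gamma)\sum_{m\ge1}\gamma^{m-1}\sqrt{l(m)}$ misses this and would not reproduce the stated constant.

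A smaller discrepancy: your $\alpha(\Pi_\mathcal{B}):=\sup_{s,a}\big(\mathcal{T}Q^{\Pi_\mathcal{B}}-\mathcal{T}^{\Pi_\mathcal{B}}Q^{\Pi_\mathcal{B}}\big)$ is not the paper's definition. They set $\alpha(\Pi_\mathcal{B})=\|\mathcal{T}Q^*-\mathcal{T}^{\Pi_\mathcal{B}}Q^{\Pi_\mathcal{B}}\|_\infty$ and obtain it by inserting $\mathcal{T}^{\Pi_\mathcal{B}}Q^*$ (not $\mathcal{T}Q^{\Pi_\mathcal{B}}$) into $\|Q^*-Q^{\Pi_\mathcal{B}}\|_\infty=\|\mathcal{T}Q^*-\mathcal{T}^{\Pi_\mathcal{B}}Q^{\Pi_\mathcal{B}}\|_\infty$. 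Both routes yield a $\tfrac{1}{1-\gamma}$ factor times a suboptimality constant, but the constants differ, so to match the statement as written you should follow their insertion.
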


where $L(\Pi_\mathcal{B})=\sqrt{(1-\gamma)^2\sum_{k=1}^{\infty}k\gamma^{k-1}l(k)}$, 
which is similar to the concentrability coefficient in BEAR \citep{kumar2019stabilizing} but in a different form. Note that
$l(k)$ is related to the RHS of Eq. (\ref{equ:bellman upper}) and can be driven to a small value by DOGE according to Theorem \ref{theorem:bellman bound}. $\alpha(\Pi_\mathcal{B})=\|\mathcal{T}^{\Pi_\mathcal{B}}Q^{\Pi_\mathcal{B}}-\mathcal{T}Q^*\|_{\infty}$ is the suboptimality constant, which is similar to $\alpha(\Pi)=\|\mathcal{T}^{\Pi}Q^{\Pi}-\mathcal{T}Q^*\|_{\infty}$ in BEAR. 

Compared with BEAR, DOGE allows a policy shift to some generalizable OOD areas and relaxes the strong full-coverage assumption. In addition, we have $L(\Pi_\mathcal{B})\le L({\Pi})\varpropto\frac{\rho_0P^{\pi_1}...P^{\pi_k}}{\mu(s,a)}$, where $L(\Pi)$ is the concentrability coefficient in BEAR.
This is evident when $\mu(s,a)=0$ and $\rho_0P^{\pi_1}...P^{\pi_k}(s,a)>0$, $L(\Pi_\mathcal{B})$ can be bounded by finite constants but $L({\Pi})\rightarrow\infty$. 
Moreover, as $\Pi_\mathcal{B}$ extends the policy set to cover more generalizable OOD areas ($\Pi\subseteq\Pi_\mathcal{B}$) and produces a larger feasible region for optimization, lower degree of suboptimality can be achieved (\textit{i.e.},  $\alpha(\Pi_\mathcal{B})\le \alpha(\Pi)$) compared to only performing optimization on $\Pi$.
Therefore, we can see that DOGE enjoys a tighter performance bound than previous more conservative methods when allowed to exploit generalizable OOD areas.

\section{Experiments}

\label{sec:Experiments}

For evaluation, We compare DOGE and prior offline RL methods over D4RL Mujoco and AntMaze tasks~\citep{fu2020d4rl}. 
Mujoco is a standard benchmark commonly used in previous work. AntMaze tasks are far more challenging due to the non-markovian and mixed-quality offline dataset, the stochastic property of environments, and the high dimensional state-action space.
Implementation details, experimental  setup and additional experimental results can be found in Appendix~\ref{sec: implementation details}
 and \ref{sec:extra_experiment_results}.

\subsection{Comparison with SOTA}
\label{subsec:baseline}
We compare DOGE with model-free SOTA methods, such as TD3+BC~\citep{fujimoto2021minimalist}, CQL~\citep{kumar2020conservative} and IQL~\citep{iql}. For fairness, we use the “-v2” datasets for all methods. For most Mujoco tasks, we report the scores from the IQL paper. We obtain the other results using the authors' or our implementations. For AntMaze tasks, we obtain the results of CQL, TD3+BC, and IQL using the authors' implementations. For BC~\citep{pomerleau1988alvinn}, BCQ~\citep{fujimoto2019off} and BEAR~\citep{kumar2019stabilizing}, we report the scores from \citep{fu2020d4rl}. All methods are evaluated over the final 10 evaluations for Mujoco tasks and 100 for AntMaze tasks.

Table \ref{tab:benchmark} shows that DOGE achieves comparable or better performance than SOTA methods on most Mujoco and AntMaze tasks. Compared to other policy constraint approaches such as BCQ, BEAR and TD3+BC, DOGE is the first policy constraint method to successfully solve AntMaze-medium and AntMaze-large tasks. Note that IQL is an algorithm designed for multi-step dynamics programming and attains strong advantage on AntMaze tasks. Nevertheless, DOGE can compete with or even surpass IQL on most AntMaze tasks, by only employing a generalization-oriented policy constraint. These results illustrate the benefits of allowing policy learning on generalizable OOD areas.

\begin{table}[ht]
  \small
  \label{table:benmark results}
  \centering
  \caption{\small Average normalized scores and standard deviations over 5 seeds on benchmark tasks}
  \begin{tabular}{lllllllll}
    \toprule
    Dataset            & BC    & BCQ   & BEAR & TD3+BC                     & CQL                     & IQL                     & DOGE(ours)\\
    \midrule
    hopper-r        & 4.9  & 7.1    & 14.2    & 8.5$\pm$0.6                & 8.3$\pm$0.2             & 7.9$\pm$0.4             & \textbf{21.1$\pm$12.6}\\
    halfcheetah-r   & 0.2  & 8.8    & 15.1    & 11.0$\pm$1.1               & \textbf{20.0$\pm$0.4 }  & 11.2$\pm$2.9            & 17.8$\pm$1.2\\
    walker2d-r      & 1.7  & 6.5    & \textbf{10.7}    & 1.6$\pm$1.7                & 8.3$\pm$0.1    & 5.9$\pm$0.5             & 0.9 $\pm$2.4\\
    hopper-m        & 52.9  & 56.7  & 51.9    & 59.3$\pm$4.2               & 58.5$\pm$2.1            & 66.2$\pm$5.7            & \textbf{98.6$\pm$2.1}\\
    halfcheetah-m   & 42.6  & 47.0  & 41.0    & \textbf{48.3$\pm$0.3}      & 44.0$\pm$5.4            & 47.4$\pm$0.2            & 45.3$\pm$0.6\\
    walker2d-m      & 75.3  & 72.6  & 80.9    & 83.7$\pm$2.1               & 72.5$\pm$0.8            & 78.3$\pm$8.7            & \textbf{86.8$\pm$0.8}\\
    hopper-m-r      & 18.1  & 53.3  & 37.3    & 60.9$\pm$18.8              & \textbf{95.0$\pm$6.4}   & 94.7$\pm$8.6            & 76.2$\pm$17.7\\
    halfcheetah-m-r & 36.6  & 40.4  & 29.7    & 44.6$\pm$0.5               & \textbf{45.5$\pm$0.5}   & 44.2$\pm$1.2            & 42.8$\pm$0.6\\
    walker2d-m-r    & 26.0  & 52.1  & 18.5    & 81.8$\pm$5.5               & 77.2$\pm$5.5            & 73.8$\pm$7.1            & \textbf{87.3$\pm$2.3}\\
    hopper-m-e      & 52.5  & 81.8  & 17.7    & 98.0$\pm$9.4               & \textbf{105.4$\pm$6.8}  & 91.5$\pm$14.3           & 102.7$\pm$5.2\\
    halfcheetah-m-e & 55.2  & 89.1  & 38.9    & 90.7$\pm$4.3               & \textbf{91.6$\pm$2.8}   & 86.7$\pm$5.3            & 78.7$\pm$8.4\\
    walker2d-m-e    & 107.5 & 109.5 & 95.4    & 110.1$\pm$0.5              & 108.8$\pm$0.7           & 109.6$\pm$1.0           & \textbf{110.4$\pm$1.5}\\
    \midrule
    locomation total& 473.5 & 624.9 & 451.3   & 698.5$\pm$49.0             & 726.1$\pm$31.7          & 717.4$\pm$55.9          & \textbf{768.6$\pm$55.4}\\
    \midrule
    antmaze-u       & 65.0  & 78.9  & 73.0 & 91.3$\pm$5.7       & 84.8$\pm$2.3   & 88.2$\pm$1.9            & \textbf{97.0$\pm$1.8}\\
    antmaze-u-d     & 55.0  & 55.0  & 61.0 & 54.6$\pm$16.2      & 43.3$\pm$5.4   & \textbf{66.7$\pm$4.0}   & 63.5$\pm$9.3\\
    antmaze-m-p     & 0.0   & 0.0   & 0.0  & 0.0                & 65.2$\pm$4.8   & 70.4$\pm$5.3            & \textbf{80.6$\pm$6.5}\\
    antmaze-m-d     & 0.0   & 0.0   & 8.0  & 0.0                & 54.0$\pm$11.7  & 74.6$\pm$3.2            & \textbf{77.6$\pm$6.1}\\
    antmaze-l-p     & 0.0   & 6.7   & 0.0  & 0.0                & 18.8$\pm$15.3  & 43.5$\pm$4.5            & \textbf{48.2$\pm$8.1}\\
    antmaze-l-d     & 0.0   & 2.2   & 0.0  & 0.0                & 31.6$\pm$9.5   & \textbf{45.6$\pm$7.6}   & 36.4$\pm$9.1\\
    \midrule
    antmaze-total      & 120.0 & 142.8 & 142.0& 145.9$\pm$21.9  & 297.7$\pm$49.0 & 389.0$\pm$26.5          & \textbf{403.3$\pm$40.9}\\

    \bottomrule
  \end{tabular}
  \label{tab:benchmark}
\end{table}

\subsection{Evaluation on Generalization}
\label{subsec:Generalization of DOGE}
To evaluate the generalization ability of DOGE, we remove small areas of data from the critical pathways to the destination in AntMaze medium and large tasks, to construct an OOD dataset.
The two removed areas reside in close proximity to the trajectory data (see Figure \ref{fig:subim1}). We evaluate representative methods (such as TD3+BC, CQL, IQL) and DOGE on these modified datasets. Figure \ref{fig:missing_area_large} shows the comparison before and after data removal.

For such a dataset with partial state-action space coverage, existing policy constraint methods tend to over-constrain the policy to stay inside the support of a dataset, where the optimal policy is not well-covered. Value regularization methods suffer from deteriorated generalization performance, as the value function is distorted
to assign low value at all OOD areas.
In-sample learning methods are only guaranteed to retain the best policy 
within the partially covered dataset~\citep{iql}. As shown in Figure \ref{fig:missing_area_large}, all these methods struggle to generalize well on the missing areas and suffer severe performance drop, while DOGE maintains competitive performance. This further demonstrates the benefits of relaxing over-conservatism in existing methods. 

\begin{figure}[H]
    \centering
    \includegraphics[width=\textwidth, height=6cm]{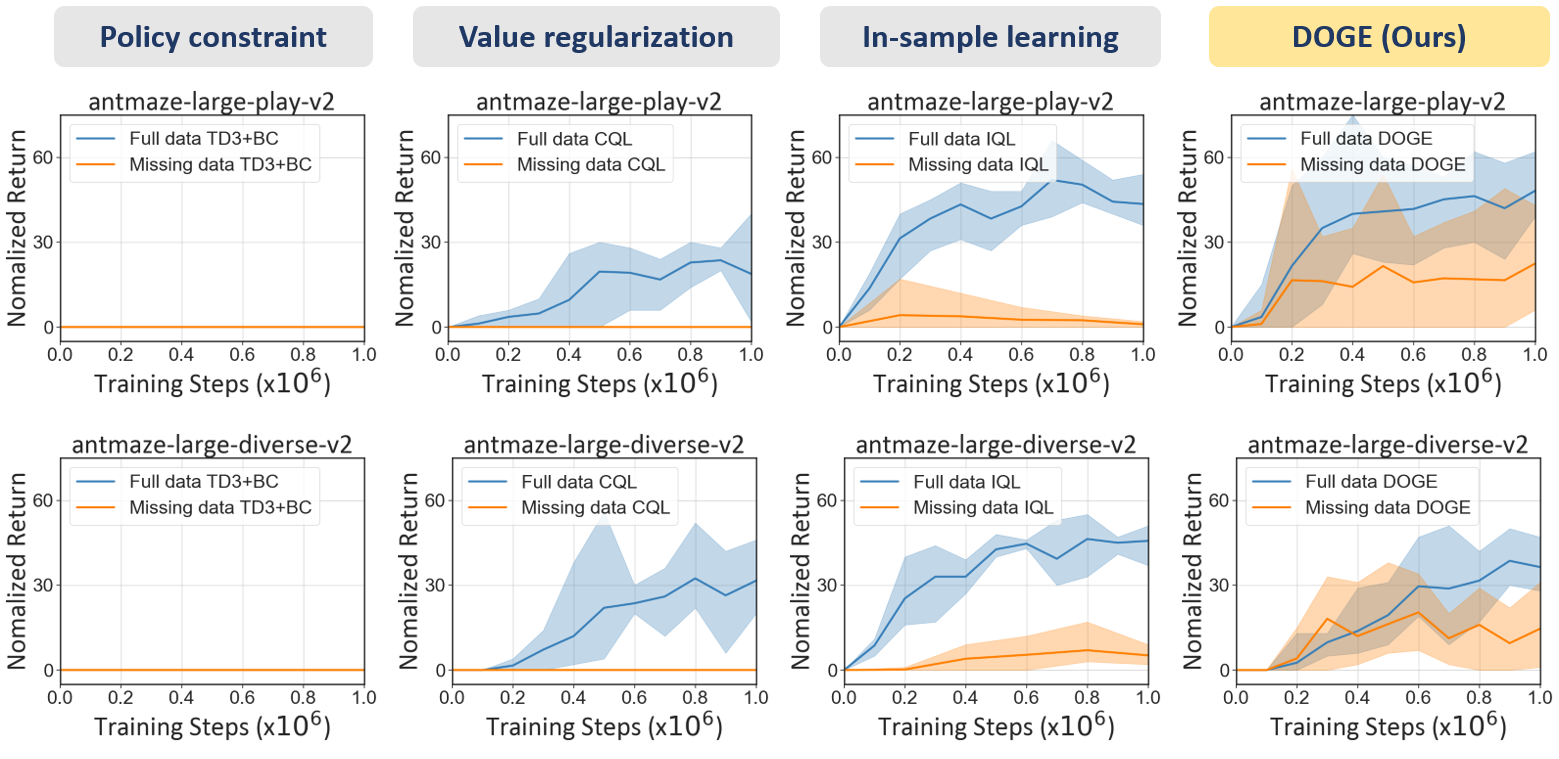}
    \caption{\small Generalization performance after removing data from AntMaze large tasks (see Appendix \ref{comparision_of_Generalization_Ability} for detailed setup and additional results on AntMaze medium tasks).
    }
    \label{fig:missing_area_large}
\end{figure}

\subsection{Ablation Study}
We conduct ablation studies to evaluate the impact of the hyperparameter $\alpha$, the non-parametric distance threshold $G$ in Eq. (\ref{equ: policy learning objective}), and the number of noise actions $N$ used to train the distance function.
For $\alpha$, we add or subtract 2.5 to the original value; for $G$, we choose $30\%$, $50\%, 70\%$ and $90\%$ upper quantile of the distance values in mini-batch samples;
for $N$, we choose $N=10, 20, 30$.

Compared to $N$ and $\alpha$, we find that $G$ has a more significant impact on the performance. Figure \ref{ablation_G/halfcheetah-medium-expert} shows that an overly restrictive $G$ (30\% quantile) results in a policy set too small to cover near-optimal policies. A more tolerant $G$, on the other hand, is unlikely to cause excessive error accumulation and achieves relatively good performance.
In addition, Figure \ref{ablation_alpha/halfcheetah-medium-expert} and Figure \ref{ablation_N/halfcheetah-medium-expert} show that  performance is stable across variations of hyperparameters, indicating that our method is hyperparameter-robust.

\begin{figure}
    \centering
    \subfloat[\small $\alpha$ has little effect on results]{\includegraphics[width=0.32\textwidth, height=3.3cm]{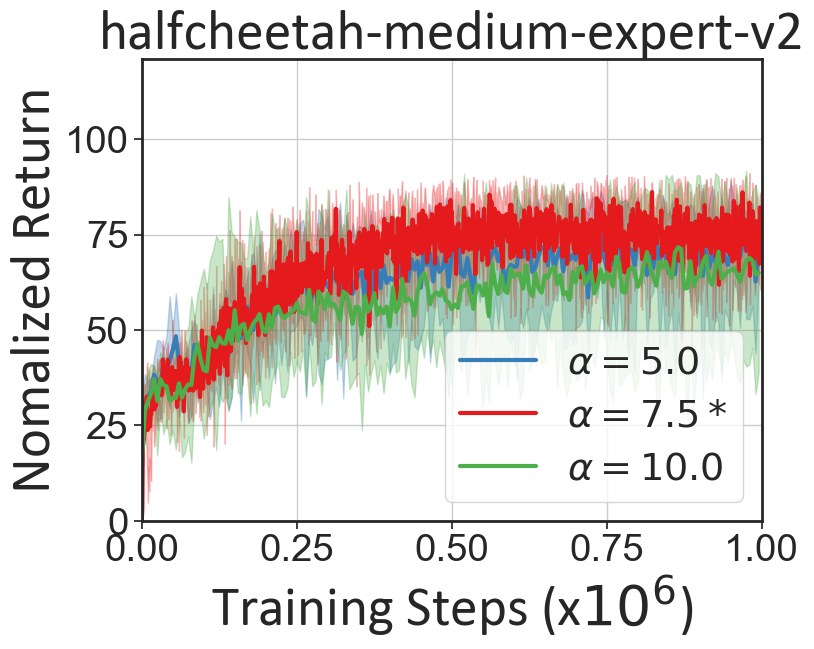}
    \label{ablation_alpha/halfcheetah-medium-expert}}
    \hfill
    \subfloat[\small Small $G$ is harmful to results]{\includegraphics[width=0.32\textwidth, height=3.3cm]{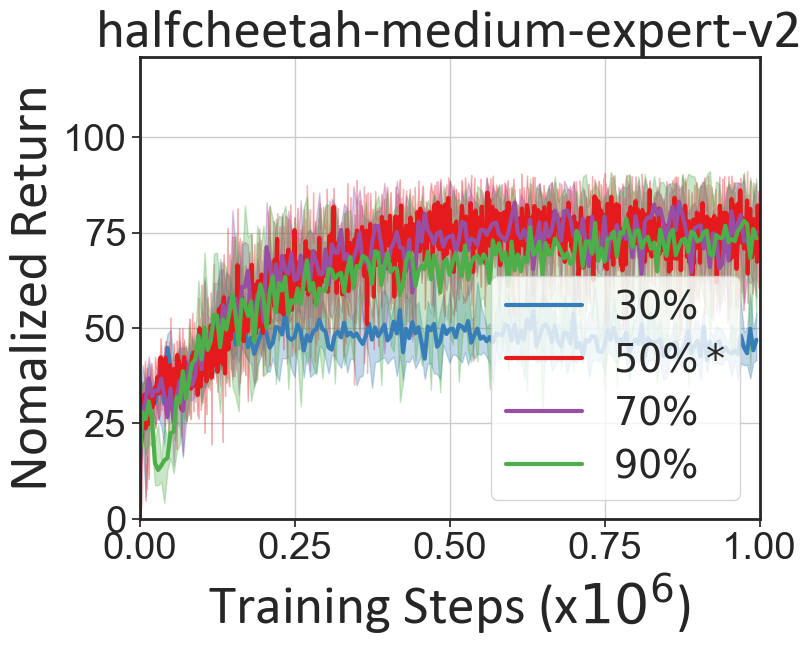}
    \label{ablation_G/halfcheetah-medium-expert}}
    \hfill
    \subfloat[\small $N$ has little effect on results]{\includegraphics[width=0.32\textwidth, height=3.3cm]{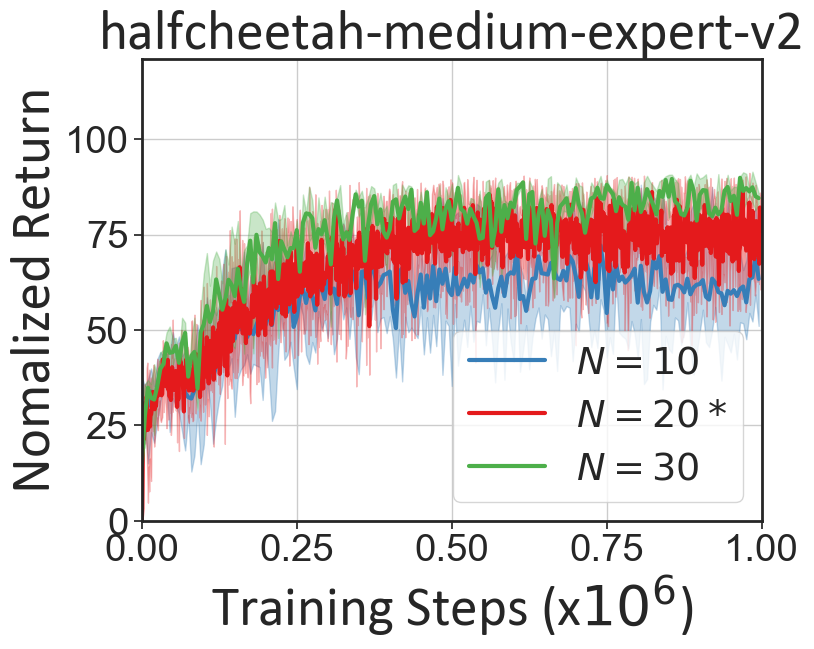}
    \label{ablation_N/halfcheetah-medium-expert}}
    \caption{\small Ablation results. The default parameters in our implementation are marked by *. The error bars indicate min and max over 5 seeds. See Appendix \ref{sec:Ablation} for more detailed ablation studies.} 
    \label{fig:my_label}
\end{figure}

\section{Related Work}
\label{sec: Related Works}



To prevent distributional shift and exploitation error accumulation when inferring the value function at unseen samples, a direct approach is to restrict policy learning from deviating to OOD areas. To make sure the leaned policy stays inside the distribution or support of training data, These policy constraint methods
either carefully parameterize the learned policy \citep{fujimoto2019off, matsushima2020deployment}, or use explicit divergence penalties~\citep{kumar2019stabilizing, wu2019behavior, fujimoto2021minimalist, xu2021offline, dadashi2021offline} or implicit divergence constraints~\citep{peng2019advantage, nair2020accelerating, xu2022a}.
The theories behind these methods typically assume full state-action space coverage of the offline datasets\citep{le2019batch, kumar2019stabilizing}.
However, policy constraint under full-coverage assumption is unrealistic in most real-world settings, especially on datasets with partial coverage and only sub-optimal behavior policies.
Some recent works try to relax the full-coverage assumption to partial coverage by introducing different distribution divergence metrics, but only in theoretical analysis \citep{liu2020provably,zanette2021provable, xie2021policy, uehara2021pessimistic, xie2021bellman}. Our method is an enhanced policy constraint method, where we relax the full-coverage assumption and allow the policy to learn on OOD areas where networks can generalize well.

Another type of offline RL method, value regularization~\citep{kumar2020conservative, kostrikov2021offline, yu2021combo, xu2021constraints, xu2023sparse}, directly penalizes the value function to produce low values at OOD actions. In-sample learning methods~\citep{brandfonbrener2021offline, iql}, on the other hand, only learn the value function within data or treat it as the value function of the behavior policy. Compared with our approach, these methods exercise too much conservatism, which limits the generalization performance of deep neural networks on OOD regions, largely weakening the ability of dynamic programming. There are also uncertainty-based and model-based methods that regularize the value function or policy with epistemic uncertainty estimated from model or value function~\citep{janner2019trust, yu2020mopo, uehara2021pessimistic, wu2021uncertainty,zhan2021deepthermal, bai2021pessimistic}. However, the estimation of the epistemic uncertainty of DNN is still an under-explored area, with results highly dependent on evaluation methods and the structure of DNN.

\section{Conclusion}

In this study, we provide new insights on the relationship between approximation error of deep \textit{Q} functions and geometry of offline datasets. Through empirical and theoretical analysis, we find that deep \textit{Q} functions attain relatively low approximation error when interpolating rather than extrapolating the dataset. This phenomenon motivates us to design a new algorithm, DOGE, to empower policy learning on OOD samples within the convex hull of training data.
DOGE is simple yet elegant, by plugging a dataset geometry-derived distance constraint into TD3. With such a minimal surgery, DOGE outperforms existing model-free offline RL methods on most D4RL tasks. We theoretically prove that DOGE enjoys a tighter performance bound compared with existing policy constraint methods under the more realistic partial-coverage assumption. Empirical results and theoretical analysis suggest the necessity of re-thinking the conservatism principle in offline RL algorithm design, and points to sufficient exploitation of the generalization ability of deep \textit{Q} functions.

\subsubsection*{Acknowledgments}
This work was supported by Baidu Inc. through Apollo-AIR Joint Research Center. The authors would also like to thank the anonymous reviewers for their feedback on the manuscripts. Jianxiong Li would like to thank Zhixu Du, Yimu Wang, Li Jiang, Haoyi Niu and Hao Zhao for valuable discussions.

\bibliography{iclr2023_conference}
\bibliographystyle{iclr2023_conference}

\newpage
\appendix
\section{Sketch of Theoretical Analysis}
{In this section, we present in Figure~\ref{fig:theory_flow_doge} a sketch of the overall logical flow in our theoretical analyses and the proposed algorithm, DOGE. We start by analyzing the effects of data geometry on the generalization patterns of deep Q-functions. We find that a small sample-to-dataset distance leads to a tightened Q-function approximation error and thus interpolation enjoys better generalization properties than extrapolation (Theorem~\ref{corollary:q generalization}). Motivated by this, we propose DOGE, which tries to control the upper bound of the sample-to-centroid distance to be small (Property~\ref{property: convex distance}) and enforces a convex hull based policy constraint (Property~\ref{property: convex constraint}). Then, we dive deeper and find that the upper bound of the Bellman-consistent coefficient is well controlled by sample-to-centroid distance and thus DOGE enjoys a bounded bellman-consistent coefficient (Theorem~\ref{theorem:bellman bound}). Based on these findings, we can derive a tighter performance bound of DOGE as compared to support constraint methods like BEAR (Theorem~\ref{theorem:performance of policy}).}

\begin{figure}[h]
    \centering
    \includegraphics[width=0.99\textwidth]{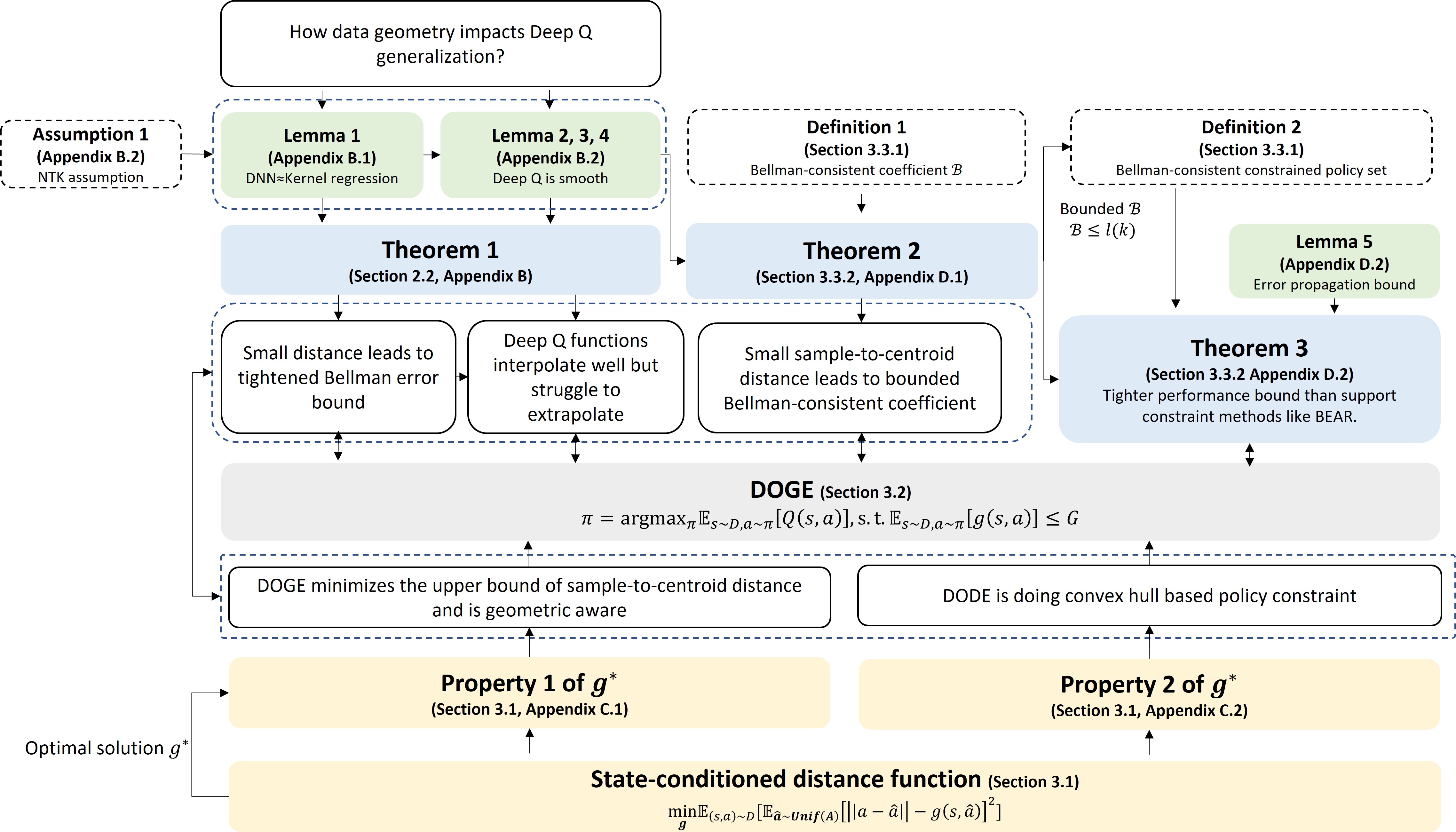}
    \caption{{Sketch of theoretical analysis}}
    \label{fig:theory_flow_doge}
\end{figure}

\section{Theoretical Analysis of the Impact of Data Geometry on Deep \textit{Q} functions}
\label{Theoretical_Backgroud_of_Neural_Tangent_Kernel}

To analyze the generalization of a function approximator, one can refer to some classical methods such as Rademacher complexity \citep{bartlett2002rademacher} and VC-dimension  \citep{vapnik2015uniform}. However, the generalization bounds that obtained by these methods are usually trivial and cannot explain the generalization behavior in the overparameterized regime \citep{zhang2021understanding}. Recent breakthroughs in neural tangent kernel (NTK) shed light on the generalization of DNN. NTK builds the connection between the training dynamics of DNN and the solution of the kernel regression \textit{w.r.t.} NTK, and is widely used in recent analysis of DNN generalization \citep{jacot2018neural, arora2019exact, bietti2019inductive}.  What's more, NTK is also a popular 
analyzing tool in the convergence and optimality of deep RL \citep{cai2019neural, fan2020theoretical, kumar2020implicit, xiao2021understanding} and thus is used in our study.

\subsection{Neural Tangent Kernel}
\label{subsec:Neural_Tangent_Kernel}
We denote a general neural network by $f(\theta, x):\mathbb{R}^d\rightarrow\mathbb{R}$, where $\theta$ is all the parameters in the network and $x\in\mathbb{R}^d$ is the input. Given, a training dataset $\{(x_i,y_i)\}_{i=1}^n$, the parameters $\theta$ are optimized by minimizing the squared loss function, \textit{i.e.},  $\mathcal{L}(\theta)=\frac{1}{2}\sum_{i=1}^{n}(f_{\theta}(x_i)-y_i)^2$ by gradient descent. The dynamics of the networks output can be formulated by Lemma \ref{lemma:dynamics_of_nn} (Lemma 3.1. of \citep{arora2019exact}); see \citep{arora2019exact} for the proof of Lemma \ref{lemma:dynamics_of_nn}.

\begin{lemma}
    Consider minimizing the squared loss $\mathcal{L}(\theta)$ by gradient descent with infinitesimally small learning rate, i,e., $\frac{d\theta(t)}{dt}=- \nabla \mathcal{L}(\theta (t))$. Let $\mathbf u(t)=(f(\theta(t), x_i))_{i\in[n]}\in\mathbb{R}^n$ be the network outputs on all $x_i$'s at time $t$, and $\mathbf Y=(y_i)_{i\in[n]}$ be the desired outputs. Then $\mathbf u(t)$ follows the following evolution, where $\mathbf H(t)$ is an $n\times n$ positive semidefinite matrix whose $(i,j)$-th entry is $\left<\frac{\partial f(\theta(t), x_i)}{\partial \theta}, \frac{\partial f(\theta(t), x_j)}{\partial \theta}\right>$:
    \label{lemma:dynamics_of_nn}
    \begin{equation}
        \frac{d\mathbf{u}(t)}{dt} = -\mathbf H(t)\cdot(\mathbf u(t)-\mathbf{Y}).
    \end{equation}
\end{lemma}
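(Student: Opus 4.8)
The plan is to derive the stated ODE by two applications of the chain rule and then verify positive semidefiniteness of $\mathbf H(t)$ by an elementary Gram-matrix argument; no deep machinery is needed, since the statement is essentially a bookkeeping computation along the gradient flow.

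First I would make the parameter flow explicit. Differentiating $\mathcal{L}(\theta)=\frac{1}{2}\sum_{j=1}^n\big(f(\theta,x_j)-y_j\big)^2$ gives $\nabla_\theta\mathcal{L}(\theta)=\sum_{j=1}^n\big(f(\theta,x_j)-y_j\big)\nabla_\theta f(\theta,x_j)$, so the assumed dynamics $\dot\theta(t)=-\nabla_\theta\mathcal{L}(\theta(t))$ becomes $\dot\theta(t)=-\sum_{j=1}^n\big(u_j(t)-y_j\big)\nabla_\theta f(\theta(t),x_j)$, writing $u_j(t):=f(\theta(t),x_j)$ and $\mathbf Y=(y_j)_{j\in[n]}$.

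Next I would track how each output coordinate evolves along this flow. By the chain rule,
\begin{align*}
\frac{d}{dt}u_i(t) &= \Big\langle \nabla_\theta f(\theta(t),x_i),\ \dot\theta(t)\Big\rangle \\
&= -\sum_{j=1}^n \big(u_j(t)-y_j\big)\,\Big\langle \nabla_\theta f(\theta(t),x_i),\ \nabla_\theta f(\theta(t),x_j)\Big\rangle \\
&= -\sum_{j=1}^n \mathbf H_{ij}(t)\,\big(u_j(t)-y_j\big),
\end{align*}
which is exactly $\dot{\mathbf u}(t)=-\mathbf H(t)\big(\mathbf u(t)-\mathbf Y\big)$ after stacking the coordinates into vectors. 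For the positive-semidefiniteness claim, I would observe that $\mathbf H(t)$ is the Gram matrix of the gradient vectors $g_i:=\nabla_\theta f(\theta(t),x_i)$, so for every $v\in\mathbb{R}^n$ one has $v^\top\mathbf H(t)v=\sum_{i,j}v_iv_j\langle g_i,g_j\rangle=\big\|\sum_{i=1}^n v_i g_i\big\|^2\ge 0$.

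The only real subtlety — and the nearest thing to an obstacle — is regularity: the computation above presumes that $t\mapsto f(\theta(t),x_i)$ is differentiable and that the gradient flow $\dot\theta(t)=-\nabla_\theta\mathcal{L}(\theta(t))$ is well-posed, which requires $f$ to be sufficiently smooth in $\theta$ (or at least that the trajectory remains in a region of differentiability). These conditions are precisely what the NTK / overparameterized-network assumptions underlying Lemma~\ref{lemma:dynamics_of_nn} provide; under them the derivation above is complete, and it coincides with Lemma~3.1 of \citep{arora2019exact}, whose proof supplies the regularity details we would otherwise need to spell out in full.
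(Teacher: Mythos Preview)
Your derivation is correct and is exactly the standard chain-rule computation that underlies this lemma; the paper itself does not give a proof but simply cites Lemma~3.1 of \citep{arora2019exact}, whose argument is the same two-step chain rule plus Gram-matrix observation you wrote out. There is nothing to add.
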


Plenty of works \citep{jacot2018neural, arora2019exact, allen2019learning, xu2020neural} study the dynamics of the neural networks' training process and find that if the width of networks is sufficiently large, $\mathbf H(t)$ stays almost constant during training, \textit{i.e.}, $\mathbf H(t)=\mathbf H(0)$. What's more, if the neural networks' parameters are randomly initialized with certain scales and the networks width goes to infinity,  $\mathbf H(0)$ converges to a fixed matrix $\mathbf K$, called neural tangent kernel (NTK)~\citep{jacot2018neural}.

\begin{equation}
    \begin{aligned}
        \mathbf{K}(x, x')
        &=\mathbb{E}_{\theta \sim W} \left<\frac{\partial f(\theta(t), x)}{\partial \theta}, \frac{\partial f(\theta(t), x')}{\partial \theta}\right>\\
    \end{aligned}
    \label{equ:ntk}
\end{equation}
where, $W$ is Gaussian distribution. The training dynamics in Lemma \ref{lemma:dynamics_of_nn} is identical to the dynamics of kernel regression under gradient flow, because $\mathbf{K}$ stays constant during training when the width of neural networks goes to infinity. Then, the final prediction function ($t\rightarrow\infty$, assuming $\mathbf{u}(0)=0$) is equal to the kernel regression solution:

\begin{equation}
    \begin{aligned}
        f_{ntk}(x) 
        &= \left( \mathbf{K}(x, x_1), ..., \mathbf{K}(x, x_n)\right) \cdot \mathbf{K}^{-1}_{train} \mathbf{Y} \\
    \end{aligned}
    \label{equ:kernel regression}
\end{equation}
where $\mathbf{K}^{-1}_{train}$ is the $n\times n$ NTK for the training data (the state-action pair $x=(s,a)$ in the policy evaluation in offline RL) and stays constant during training once the training data is fixed. $\mathbf{Y}$ is the training labels ($r(s,a) + \gamma \mathbb{E}_{a'\sim\pi(\cdot|s')}[Q_{\theta'}(s',a')]$ in offline RL). $\mathbf{K}(x,x_i)$ is the kernel value between test data $x$ and training data $x_i$. 
We denote the feature map of $\mathbf{K}(\cdot, \cdot)$ as $\Phi(\cdot)$, and $\mathbf{K}(x,x')=\left<\Phi(x), \Phi(x')\right>$. Then, Eq. (\ref{equ:kernel regression}) is equivalent to:

\begin{equation}
    f_{ntk}(x) = \left(\left<\Phi(x), \Phi(x_1)\right>,...,\left<\Phi(x), \Phi(x_n)\right>\right) \cdot \mathbf{K}^{-1}_{train}\mathbf{Y}
    \label{equ:kernel_regression_feature_map}
\end{equation}

\subsection{Impact of Data Geometry on Deep \textit{Q} functions}
\label{subsec:proof of theorem3.1}

In this section, we analyze the impact of data geometry on deep \textit{Q} functions under the NTK regime. We first introduce the smoothness property of the feature map $\Phi(x)$ induced by NTK (Lemma \ref{lemma: smooth}). Then, we introduce the equivalence between the kernel regression solution in Eq. (\ref{equ:kernel_regression_feature_map}) and a min-norm solution (Lemma \ref{lemma:min norm}). Builds on Lemma \ref{lemma: smooth} and Lemma \ref{lemma:min norm}, Lemma \ref{lemma:nn_generalization} analyzes the smoothness of the deep \textit{Q} functions. At last, we study how data geometry affects deep \textit{Q} functions (Theorem \ref{corollary:q generalization}).

\begin{assumption}
    (NTK assumption). We assume the function approximators discussed in our paper are two-layer fully-connected ReLU neural networks with infinity width and are trained with infinitesimally small  learning rate unless otherwise specified.
    \label{assumption:two_lay_relu}
\end{assumption}

Although there exist some gaps between the NTK assumption and the real setting, NTK is one of the most advanced theoretical machinery from the generalization analysis of DNN. In addition, Assumption \ref{assumption:two_lay_relu} is common in previous analysis on the generalization of DNN \citep{jacot2018neural, arora2019fine, bietti2019inductive} and the convergence of DRL \citep{cai2019neural, liu2019neural, xu2020finite, fan2020theoretical}. For more accurate analysis, we should adopt more advanced analysis tools than NTK and hence leave it for future work.

We first introduce Lemma \ref{lemma: smooth} (Proposition 4 of \citep{bietti2019inductive}), which shows the feature map $\Phi(x)$ induced by NTK is not Lipschitz continuous but holds a weaker Hölder smoothness property.

\begin{lemma}
    (Smoothness of the kernel map of two-layer ReLU networks). Let $\Phi$ be the kernel map of the neural tangent kernel induced by a two-layer ReLU neural network, $x$ and $y$ be two inputs, then $\Phi$ satisfies the following smoothness property.
    
    \begin{equation}
        \|\Phi(x)-\Phi(y)\|\le\sqrt{\min(\|x\|,\|y\|)\|x-y\|}+2\|x-y\|.
        \label{equ: smooth}
    \end{equation}
    \label{lemma: smooth}
\end{lemma}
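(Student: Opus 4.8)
The plan is to derive inequality (\ref{equ: smooth}) directly from the closed form of the two-layer ReLU NTK and the positive homogeneity of its feature map, reducing the general statement to a H\"older estimate on the unit sphere. First I would compute $\mathbf{K}(x,y)$ explicitly. For a two-layer ReLU network the kernel in (\ref{equ:ntk}) splits into an outer-layer contribution $\mathbb{E}_w[\sigma(w^\top x)\sigma(w^\top y)]$ and an inner-layer contribution $\langle x,y\rangle\,\mathbb{E}_w[\sigma'(w^\top x)\sigma'(w^\top y)]$, where $\sigma$ is the ReLU and $w$ is Gaussian. Both expectations are the classical arc-cosine kernels (orders $1$ and $0$) and depend on $x,y$ only through $\|x\|$, $\|y\|$ and the cosine $u=\langle x,y\rangle/(\|x\|\,\|y\|)$. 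This gives $\mathbf{K}(x,y)=\|x\|\,\|y\|\,\kappa(u)$ for an explicit scalar function $\kappa$ on $[-1,1]$ built from $\arccos u$ and $\sqrt{1-u^2}$, normalized so that $\kappa(1)=1$. In particular $\|\Phi(x)\|^2=\mathbf{K}(x,x)=\|x\|^2$, so $\Phi$ is positively $1$-homogeneous, $\Phi(cx)=c\,\Phi(x)$ for $c\ge0$, and $\|\Phi(x)\|=\|x\|$.

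Next I would reduce to the sphere. Writing $\bar x=x/\|x\|$, for unit vectors one has $\|\Phi(\bar x)-\Phi(\bar y)\|^2=2\bigl(1-\kappa(u)\bigr)$ while $\|\bar x-\bar y\|^2=2(1-u)$. The heart of the argument is the analytic fact that $\kappa$ is merely H\"older-$\tfrac12$, not Lipschitz, at $u=1$: the $\sqrt{1-u^2}$ term forces $1-\kappa(u)$ to behave like $\sqrt{1-u}$ as $u\to1$. I would establish a \emph{global} bound $1-\kappa(u)\le c\,\sqrt{1-u}$ on $[-1,1]$ by showing the ratio $(1-\kappa(u))/\sqrt{1-u}$ is uniformly bounded, which reduces to an elementary one-variable estimate on $\theta\cos\theta-\tfrac12\sin\theta$ with $u=\cos\theta$. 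This yields the spherical inequality $\|\Phi(\bar x)-\Phi(\bar y)\|\le C_s\sqrt{\|\bar x-\bar y\|}$ for an explicit constant $C_s$.

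Finally I would remove the unit-norm restriction using homogeneity. Assuming without loss of generality $\|x\|\le\|y\|$ (so $\min(\|x\|,\|y\|)=\|x\|$), I decompose $\Phi(x)-\Phi(y)=\|x\|\bigl(\Phi(\bar x)-\Phi(\bar y)\bigr)+(\|x\|-\|y\|)\,\Phi(\bar y)$ and apply the triangle inequality with $\|\Phi(\bar y)\|=1$ and the reverse triangle inequality $|\,\|x\|-\|y\|\,|\le\|x-y\|$. For the angular term I use the geometric identity $\|x\|\,\|\bar x-\bar y\|=\bigl\|x-\tfrac{\|x\|}{\|y\|}y\bigr\|\le 2\|x-y\|$, which follows by adding and subtracting $y$ and bounding $(1-\tfrac{\|x\|}{\|y\|})\|y\|=\|y\|-\|x\|\le\|x-y\|$. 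Combining $\|x\|\sqrt{\|\bar x-\bar y\|}=\sqrt{\|x\|\cdot\bigl(\|x\|\,\|\bar x-\bar y\|\bigr)}\le\sqrt{2C_s^2\,\min(\|x\|,\|y\|)\,\|x-y\|}$ with the radial term $|\,\|x\|-\|y\|\,|\le\|x-y\|$ assembles into the claimed form $\sqrt{\min(\|x\|,\|y\|)\,\|x-y\|}+2\|x-y\|$.

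I expect the genuine difficulty to lie in the global H\"older-$\tfrac12$ estimate on $\kappa$ near $u=1$, since the homogeneity reduction and the angular-to-Euclidean inequalities are routine triangle-inequality manipulations. The delicate point is that the bound must hold uniformly over all $u\in[-1,1]$, not merely asymptotically as $u\to1$. Moreover, the precise constants appearing in (\ref{equ: smooth})---the coefficient $1$ on the square-root term and $2$ on the linear term---are pinned down only by carefully tracking the constant $C_s$ in the spherical estimate together with the factor in the geometric bound; looser bookkeeping would still reproduce this structural form but with worse constants.
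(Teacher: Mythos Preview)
Your approach is essentially correct and is precisely the route taken in the original reference, but note that the paper you are working from does \emph{not} prove this lemma at all: it is quoted verbatim as Proposition~4 of Bietti and Mairal (2019), \emph{On the Inductive Bias of Neural Tangent Kernels}, and is invoked as a black box. So there is no ``paper's own proof'' to compare against here.

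That said, your sketch faithfully reconstructs the Bietti--Mairal argument: the explicit arc-cosine form $\mathbf{K}(x,y)=\|x\|\,\|y\|\,\kappa(u)$, the $1$-homogeneity $\|\Phi(x)\|=\|x\|$, the reduction to the sphere via $\|\Phi(\bar x)-\Phi(\bar y)\|^2=2(1-\kappa(u))$, and the radial--angular decomposition with the bound $\|x\|\,\|\bar x-\bar y\|\le 2\|x-y\|$ are exactly the ingredients they use. You are also right that the only nontrivial analytic content is the global estimate $1-\kappa(u)\le c\sqrt{1-u}$ on $[-1,1]$, driven by the $\sqrt{1-u^2}$ term in the order-$1$ arc-cosine kernel; the rest is triangle-inequality bookkeeping.

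One minor point on constants: your decomposition yields $C_s\sqrt{2}\,\sqrt{\min(\|x\|,\|y\|)\|x-y\|}+\|x-y\|$, whereas the stated inequality has coefficients $1$ and $2$. Matching the exact constants in (\ref{equ: smooth}) does require the specific $C_s$ that Bietti and Mairal extract from the scalar analysis of $\kappa$; your caveat that ``looser bookkeeping would still reproduce this structural form but with worse constants'' is accurate, and for every downstream use in this paper the structural form is all that matters.
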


Lemma \ref{lemma:min norm} (Lemma 2 of \citep{xu2020neural}) builds the connection between the kernel regression solution in Eq. (\ref{equ:kernel regression}) and the a min-norm solution. For the proof of Lemma \ref{lemma:min norm}, we refer the reader to\citep{xu2020neural}.
    
\begin{lemma}
    (Equivalence to a min-norm optimization problem). Let $\Phi(x)$ be the feature map induced by a neural tangent kernel, for any $x\in\mathbb{R}^d$. The solution to the kernel regression in Eq. (\ref{equ:kernel regression}) and Eq. (\ref{equ:kernel_regression_feature_map}) is equivalent to $f_{ntk}(x)=\Phi(x)^T\beta_{ntk}$, where $\beta_{ntk}$ is the optimal solution of a min-norm optimization problem defined as
    
    \begin{equation}
        \begin{aligned}
           \min_{\beta} &\|\beta\| \\
           {\rm s.t.}\ \Phi(x_i)^T\beta &=y_i,\ {\rm for} \ i=1,...,n.
        \end{aligned}
    \end{equation}
    \label{lemma:min norm}
\end{lemma}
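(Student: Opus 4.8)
The plan is to prove the equivalence through the classical representer-theorem decomposition: any feasible $\beta$ splits into a component inside the span of the training features and an orthogonal remainder, and minimizing the norm kills the remainder, leaving a solution that depends only on inner products between features, i.e., on the kernel. First I would assemble the (possibly infinite-dimensional) feature matrix $\mathbf{\Phi} = [\Phi(x_1), \dots, \Phi(x_n)]$ so that the $n$ interpolation constraints read compactly as $\mathbf{\Phi}^\top \beta = \mathbf{Y}$, and note that the training Gram matrix is $\mathbf{K}_{train} = \mathbf{\Phi}^\top \mathbf{\Phi}$ with entries $\langle \Phi(x_i), \Phi(x_j)\rangle = \mathbf{K}(x_i, x_j)$, recovering the same $\mathbf{K}_{train}$ that appears in Eq.~(\ref{equ:kernel regression}).

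The key step is the orthogonal decomposition. Write $\beta = \beta_\parallel + \beta_\perp$, where $\beta_\parallel \in \mathrm{span}\{\Phi(x_1),\dots,\Phi(x_n)\}$ and $\beta_\perp$ is orthogonal to this span. Since $\Phi(x_i)^\top \beta = \Phi(x_i)^\top \beta_\parallel$ for every $i$, the constraints depend only on $\beta_\parallel$, while the Pythagorean identity $\|\beta\|^2 = \|\beta_\parallel\|^2 + \|\beta_\perp\|^2$ shows that any nonzero $\beta_\perp$ strictly increases the objective. Hence the min-norm feasible point $\beta_{ntk}$ must satisfy $\beta_\perp = 0$, so it admits the representation $\beta_{ntk} = \mathbf{\Phi}\alpha = \sum_{i=1}^{n}\alpha_i \Phi(x_i)$ for some $\alpha \in \mathbb{R}^n$.

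Substituting this representation into the constraint gives $\mathbf{\Phi}^\top \mathbf{\Phi}\,\alpha = \mathbf{K}_{train}\,\alpha = \mathbf{Y}$, and since $\mathbf{K}_{train}$ is invertible under the NTK assumption we obtain $\alpha = \mathbf{K}_{train}^{-1}\mathbf{Y}$. Evaluating the prediction at an arbitrary test input $x$ then yields $f_{ntk}(x) = \Phi(x)^\top \beta_{ntk} = \Phi(x)^\top \mathbf{\Phi}\, \mathbf{K}_{train}^{-1}\mathbf{Y} = (\mathbf{K}(x,x_1),\dots,\mathbf{K}(x,x_n))\,\mathbf{K}_{train}^{-1}\mathbf{Y}$, which is exactly the kernel regression solution stated in Eq.~(\ref{equ:kernel regression}) and its feature-map form Eq.~(\ref{equ:kernel_regression_feature_map}). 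This closes the equivalence in both directions: the kernel solution is feasible and lies in the feature span, and the min-norm solution reduces to it.

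The hard part will be handling the infinite-dimensional NTK feature space rigorously rather than treating $\beta$ as living in a finite $\mathbb{R}^d$. I would address this by working in the reproducing kernel Hilbert space induced by $\mathbf{K}$, where the projection theorem guarantees the orthogonal decomposition and the Pythagorean identity still hold, so the representer argument goes through verbatim. The remaining technical requirement is the invertibility of $\mathbf{K}_{train}$, which I would secure by invoking the strict positive-definiteness of the two-layer ReLU NTK on distinct, non-degenerate training inputs (Assumption~\ref{assumption:two_lay_relu}); this ensures both the min-norm problem and the kernel regression expression are well defined, so that they genuinely coincide.
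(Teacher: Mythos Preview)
Your argument is correct: the orthogonal decomposition $\beta=\beta_\parallel+\beta_\perp$ together with the Pythagorean identity is exactly the representer-theorem mechanism that forces the min-norm interpolant into the span of the training features, after which substituting $\beta_{ntk}=\mathbf{\Phi}\alpha$ and solving $\mathbf{K}_{train}\alpha=\mathbf{Y}$ recovers the kernel-regression formula in Eq.~(\ref{equ:kernel regression}). Your remarks about working in the RKHS to make the projection rigorous in infinite dimensions, and about invoking strict positive-definiteness of the ReLU NTK to secure invertibility of $\mathbf{K}_{train}$, are the right technical safeguards.

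As for comparison: the paper does not supply its own proof of this lemma. It states the result as Lemma~2 of \cite{xu2020neural} and refers the reader there. Your representer-style argument is the standard derivation and is what one would expect to find in that reference, so there is no meaningful divergence to discuss.
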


Then, deep \textit{Q} functions satisfy the following smoothness property.
\begin{lemma}
    (Smoothness for deep \textit{Q} functions). Given two inputs $x$ and $x'$, the distance between these two data points is $d=\|x-x'\|$. $C_1:=\sup \|\beta_{ntk}\|_\infty$ is a finite constant. Then the difference between the output at $x$ and the output at $x'$ can be bounded by:
    \begin{equation}
        \|Q_{\theta}(x)-Q_{\theta}(x')\| \le C_1(\sqrt{\min(\|x\|,\|x'\|)}\sqrt{d}+2d)
        \label{equ:gen gap}
    \end{equation}
    \label{lemma:nn_generalization}
\end{lemma}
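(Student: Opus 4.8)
The plan is to combine the min-norm representation of the NTK prediction (Lemma \ref{lemma:min norm}) with the Hölder-type smoothness of the NTK feature map (Lemma \ref{lemma: smooth}). By Lemma \ref{lemma:min norm}, the learned $Q$-function is $Q_\theta(x) = \Phi(x)^\top \beta_{ntk}$, where $\Phi(\cdot)$ is the NTK feature map and $\beta_{ntk}$ is the min-norm interpolant. Therefore the difference $Q_\theta(x) - Q_\theta(x')$ equals $(\Phi(x) - \Phi(x'))^\top \beta_{ntk}$, and the task reduces to bounding this inner product.

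First I would apply a coordinatewise Hölder / triangle-inequality estimate: $\|(\Phi(x)-\Phi(x'))^\top \beta_{ntk}\| \le \|\beta_{ntk}\|_\infty \cdot \|\Phi(x)-\Phi(x')\|_1$ — or, if one prefers to stay with the Euclidean norm used in Lemma \ref{lemma: smooth}, to bound it by $\|\beta_{ntk}\| \cdot \|\Phi(x)-\Phi(x')\|$ via Cauchy–Schwarz. Either way, the constant $C_1 := \sup\|\beta_{ntk}\|_\infty$ is finite by assumption, absorbing the norm of the interpolating weights. Then I would substitute the feature-map smoothness bound from Lemma \ref{lemma: smooth}, namely $\|\Phi(x)-\Phi(x')\| \le \sqrt{\min(\|x\|,\|x'\|)\,\|x-x'\|} + 2\|x-x'\|$, and write $d = \|x-x'\|$ to get
\begin{equation}
    \|Q_\theta(x) - Q_\theta(x')\| \le C_1\left(\sqrt{\min(\|x\|,\|x'\|)}\,\sqrt{d} + 2d\right),
\end{equation}
which is exactly the claimed inequality after factoring $\sqrt{\min(\|x\|,\|x'\|)\,d} = \sqrt{\min(\|x\|,\|x'\|)}\sqrt{d}$.

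The routine part is the chain of norm inequalities; the only genuine subtlety — and the place I would be most careful — is making sure the constant $C_1$ is legitimately finite and that the same scalar output of the network is what both lemmas refer to. Lemma \ref{lemma:min norm} is stated for scalar-output networks $f(\theta,x):\mathbb{R}^d\to\mathbb{R}$, so I would note that $Q_\theta$ here is treated as such a scalar regressor on the input $x=(s,a)$, with the Bellman target playing the role of the labels $y_i$; the finiteness of $\sup\|\beta_{ntk}\|_\infty$ is then an assumption carried over from the NTK regime (bounded targets, well-conditioned $\mathbf{K}_{train}$), which I would simply invoke rather than re-derive. With those bookkeeping points in place the lemma follows directly, and it is then the engine for Theorem \ref{corollary:q generalization} by choosing $x' = \mathrm{Proj}_{\mathcal{D}}(x)$.
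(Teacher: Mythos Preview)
Your proposal is correct and follows essentially the same route as the paper: write $Q_\theta(x)=\Phi(x)^\top\beta$ via the NTK/min-norm representation, bound the difference by a norm of $\beta$ times $\|\Phi(x)-\Phi(x')\|$, and then invoke Lemma~\ref{lemma: smooth}. You are in fact slightly more careful than the paper about which H\"older pairing (\,$\|\cdot\|_\infty$ with $\|\cdot\|_1$ versus Cauchy--Schwarz\,) justifies the norm inequality, and your remarks about treating the Bellman target as the regression label and invoking finiteness of $C_1$ mirror the paper's own bookkeeping discussion.
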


\begin{proof}
In offline RL, we denote a general \textit{Q} network by $Q_{\theta}(x):\mathbb{R}^{|\mathcal{S}|+|\mathcal{A}|}\rightarrow\mathbb{R}$, where $\theta$ is all the parameters in the network and $x=(s,a)\in\mathbb{R}^{|\mathcal{S}|+|\mathcal{A}|}$ is the brief notation for state-action pair $(s,a)$. The \textit{Q} function is trained via minimizing the temporal difference error defined as $\frac{1}{2}\sum_{i=1}^n(Q_{\theta}(x_i)-y_i)^2$ by gradient descent, where $y_i=r(x_i)+\gamma\mathbb{E}_{a_i'\sim\pi(\cdot|s_i')}\left[Q_{\theta'}^{\pi}(x_i')\right]\in\mathbb{R}$ is the target value.

Using kernel method from NTK, \textit{Q} function can be formulated as $Q_\theta(x)=\Phi(x)^T\beta$, where $\Phi(x)$ is independent of the changes on training labels when NTK assumption holds. This is because as the width of a neural net goes to infinity, the NTK kernel $\mathbf{K}(x,x')=<\Phi(x),\Phi(x')>$ produced by this network stays constant during training, and so is the property of the feature map $\Phi(x)$ \citep{jacot2018neural}. So, the learning process under NTK framework is actually adjusting $\beta$ to fit the label rather than $\Phi(x)$. As a result, Lemma \ref{lemma: smooth} holds when deep \textit{Q} function satisfies NTK assumptions. Given two inputs $x$ and $x'$, the distance between these two inputs is $d=\|x-x'\|$. Based on Lemma \ref{lemma: smooth}, it is easy to see that

\begin{equation}
    \begin{aligned}
        \|Q_{\theta}(x)-Q_{\theta}(x')\|  &= \|\Phi(x)^T\beta-\Phi(x')^T\beta\|\\
        &\le \|\Phi(x)-\Phi(x')\|\|\beta\|_{\infty} \ \ \ \ \ \ \ \rm{(Infinity \ norm)}\\
        &\le \|\beta\|_{\infty}(\sqrt{\min(\|x\|,\|x'\|)\cdot \|x-x'\|}+2\|x-x'\|) \ \rm{(Lemma \ref{lemma: smooth})}\\
        &=\|\beta\|_{\infty}(\sqrt{\min(\|x\|,\|x'\|)\cdot d}+2d)\\
        &\le C_\beta(\sqrt{\min(\|x\|,\|x'\|)\cdot d}+2d) \ \ \ \ \ \ \ (C_\beta:=\sup\|\beta\|_{\infty})
    \end{aligned}
\end{equation}

Additionally, if we consider the delayed \textit{Q} target and delayed actor updates during policy learning, we can assume the target value used for \textit{Q} evaluation stays relatively stable during each policy evaluation step and the problem can be seen as solving a series of regression problems. Under this mild assumption, we can learn the actual $\beta_{ntk}$ at each step ($\beta\rightarrow\beta_{ntk}$ and so $C_\beta\rightarrow C_1$, where $C_1:=\sup\|\beta_{ntk}\|_{\infty}$) and thus complete the proof. Similar assumptions and treatments are also used in Section 4 of \citep{kumar2020implicit} that Q function at each iteration can fit its label well, Appendix A.8 of \citep{xiao2021understanding}, as well as Appendix F of \citep{ghasemipourso}.

\end{proof}

Lemma \ref{lemma:nn_generalization} states the value difference of a deep \textit{Q} function for two inputs is related to the distance between these two inputs. The closer the distance, the smaller the value difference.

\subsubsection{Proof of Theorem \ref{corollary:q generalization}}

Builds on Lemma \ref{lemma:nn_generalization}, we can combine the data geometry and analyze the impact of data geometry on deep \textit{Q} functions. 
\begin{proof}
We first review the definition of interpolated data and extrapolated data. Under continuous state-action space, state-action pairs within the convex hull of the dataset can be represented in an interpolated manner (referred as interpolated data $x_{in}$):
\begin{equation}
    x_{in}=\sum_{i=1}^{n}\alpha_ix_i,\ \ \  \sum_{i=1}^{n}\alpha_i=1, \alpha_i\ge0
\end{equation}

Similarly, we can define extrapolated data that lie outside the convex hull of the dataset as $x_{out}$:
\begin{equation}
    x_{out}=\sum_{i=1}^{n}\beta_ix_i, 
\end{equation}
where $\sum_{i=1}^{n}\beta_i=1$ and $\beta_i\ge0$ does not hold simultaneously.

We define ${\rm Proj}_{\mathcal{D}}(x) := \arg \min_{x_i\in\mathcal{D}}\|x-x_i\|$ as a projector that projects unseen data $x$ to its nearest data in dataset $\mathcal{D}$. Given an interpolated data $x_{in}$ and an extrapolated data $x_{out}$, the distances to their nearest data in dataset are $d_{x_{in}}=\|x_{in}-{\rm Proj}_{\mathcal{D}}(x_{in}) \|$ and $d_{x_{out}}=\|x_{out}-{\rm Proj}_{\mathcal{D}}(x_{out})\|$. Because interpolated data lie inside the convex hull of training data, $d_{x_{in}}\le\max_{x_i\in\mathcal{D}}\|x_{in}-x_i\|\le B$ is bounded, where $B:=\max_{x_i,x_j\in\mathcal{D}}\|x_i-x_j\|$ is a finite constant. Then, by applying Lemma \ref{lemma:nn_generalization}, the value difference of deep \textit{Q} function for interpolated and extrapolated data can be formulated as the following shows.
\begin{equation}
    \begin{aligned}
        \|Q_{\theta}(x_{in})-Q_{\theta}({\rm Proj}_{\mathcal{D}}(x_{in}))\|
        &\le C_1 (\sqrt{\min (\|x_{in}\|, \|{\rm Proj}_{\mathcal{D}}(x_{in})\|)}\sqrt{d_{x_{in}}}+2d_{x_{in}}) \\
        &\le C_1 (\sqrt{\min (\|x_{in}\|, \|{\rm Proj}_{\mathcal{D}}(x_{in})\|)}\sqrt{B}+2B)
    \end{aligned}
\end{equation}
\begin{equation}
        \|Q_{\theta}(x_{out})-Q_{\theta}({\rm Proj}_{\mathcal{D}}(x_{out})\|)\le C_1 (\sqrt{\min (\|x_{out}\|, \|{\rm Proj}_{\mathcal{D}}(x_{out})\|)}\sqrt{d_{x_{out}}}+2d_{x_{out}})
\end{equation}
\end{proof}

{\subsection{Quantitative Experiments on Theorem~\ref{corollary:q generalization}}}

{In addition to the one-dimensional random walk experiments presented in Section~\ref{sec: Impact of Data Geometry on Deep Q Functions}, we conduct additional experiments on the more complex and high-dimensional MuJoCo tasks (including D4RL Hopper-medium-v2, Halfcheetah-medium-v2, and Walker2d-medium-v2) to provide quantitative support to Theorem~\ref{corollary:q generalization}, in particular, the pertinence of interpolation and extrapolation. We first synthesize lots of interpolated data $x_{in}$ and extrapolated data $x_{out}$ ($x=(s,a)\in \mathcal{S}\times\mathcal{A}$) and then search for their nearest data points in offline dataset $\mathcal{D}$ accordingly, i.e., ${\rm Proj}_{\mathcal{D}}(x_{in})$ and ${\rm Proj}_{\mathcal{D}}(x_{out})$. Then, we can evaluate the Q-value differences $\|Q_\theta (x)-Q_\theta ({\rm Proj}_\mathcal{D}(x))\|$ (LHS of Theorem~\ref{corollary:q generalization}) at these generated data and see whether the Q-value differences align well with Theorem~\ref{corollary:q generalization}.}

{
For the detailed experiment setup, recall that an interpolated data point $x_{in}$ is a convex combination of the offline dataset, i.e., $x_{in}=\sum_{i=1}^n\alpha_ix_i, x_i\sim\mathcal{D}$ with weights $\alpha_i$ that satisfy $\sum_{i=1}^n\alpha_i=1, \alpha_i\ge0$. Therefore, we can interpolate the offline dataset based on $\alpha_i$ sampled from the Dirichlet distribution to generate the interpolated data. Also, an extrapolated data point $x_{out}$ is expressed as a weighted sum of the offline dataset, i.e., $x_{out}=\sum_{i=1}^n\beta_ix_i, x_i\sim\mathcal{D}$, but its weights $\beta_i$ do not satisfy the non-negativity and the summing to 1 constraint. Therefore, we can generate extrapolated data by setting the sign of some weights to negative values and varying the weights not summing to 1. After obtaining the interpolated and extrapolated data, we search for their closest data points in the offline dataset $\mathcal{D}$ and calculate their corresponding distance $\|x-{\rm Proj}_{\mathcal{D}}(x)\|$ and Q-value difference$\|Q_\theta (x)-Q_\theta ({\rm Proj}_\mathcal{D}(x))\|$. Figure~\ref{subfig:experiments_theorem1_Q} shows the relationship between the distance to dataset $\|x-{\rm Proj}_{\mathcal{D}}(x)\|$ and the Q value difference $\|Q_\theta (x)-Q_\theta ({\rm Proj}_\mathcal{D}(x))\|$ (LHS of Theorem~\ref{corollary:q generalization}). We also report the learned state-conditioned distance value $g(s,a)$ on these generated data in Figure~\ref{subfig:experiments_theorem1_g}.}

\begin{figure}[h]
    \centering
    \subfloat[{Relationship between $\|x-{\rm Proj}_{\mathcal{D}}(x)\|$ and  $\|Q_\theta (x)-Q_\theta ({\rm Proj}_\mathcal{D}(x))\|$. }]{\includegraphics[width=0.99\textwidth]{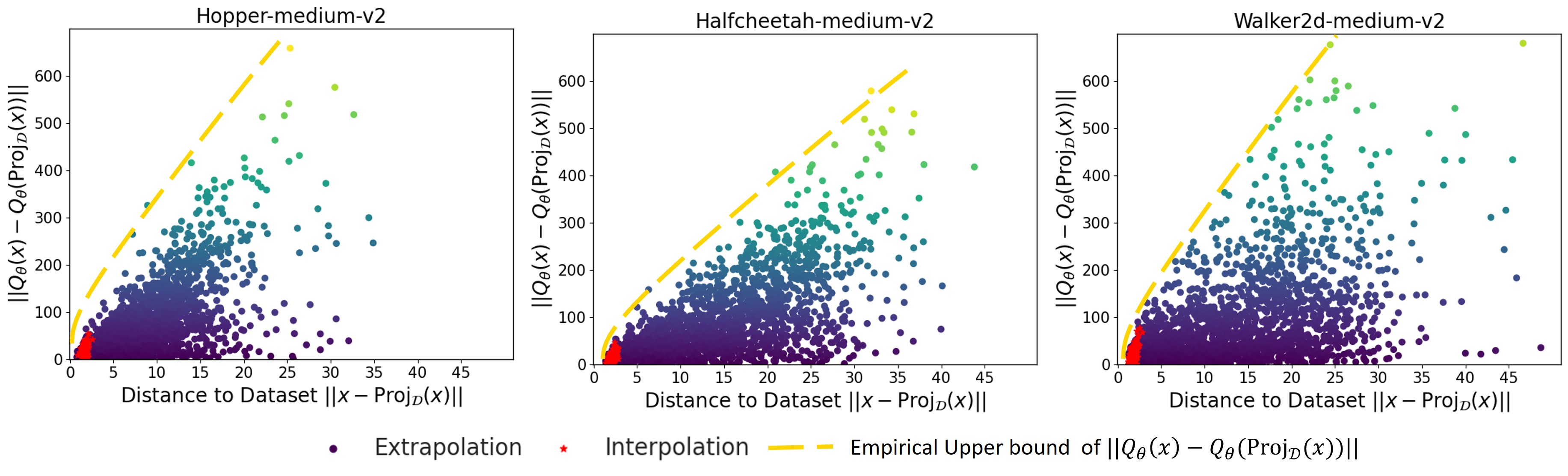}
    \label{subfig:experiments_theorem1_Q}
    }

    \subfloat[{Relationship between $\|x-{\rm Proj}_{\mathcal{D}}(x)\|$ and $g(x)$}]{\includegraphics[width=0.99\textwidth]{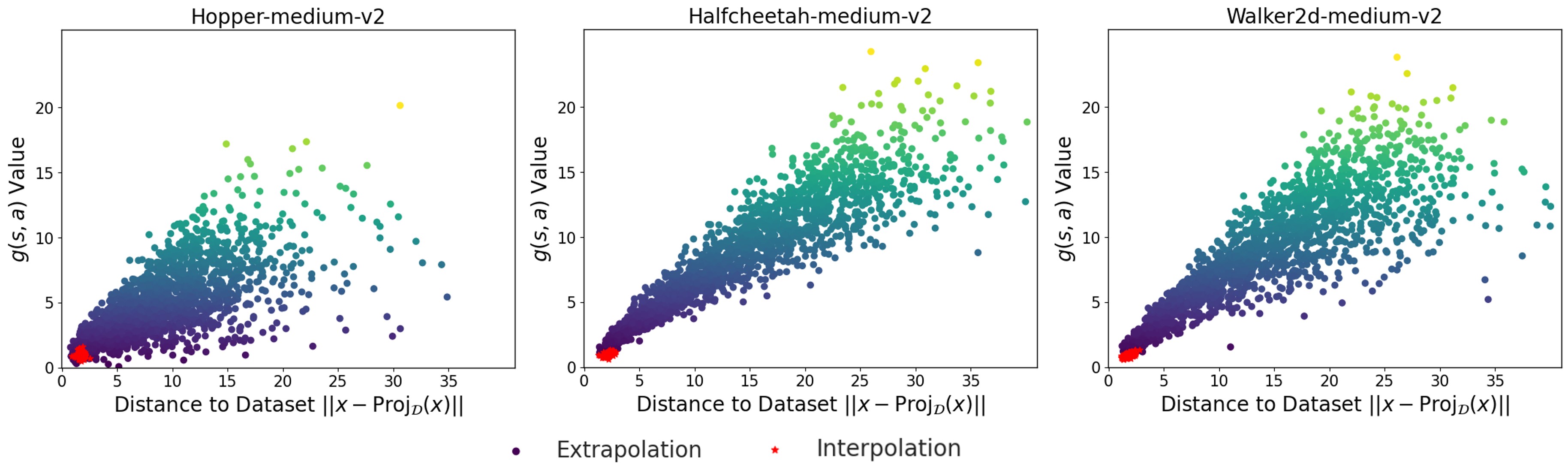}
    \label{subfig:experiments_theorem1_g}
    }
    \caption{{Quantitative experiments of Theorem 1 on the D4RL MuJoCo-medium datasets. The red star-shaped dots are the interpolated data and the circle dots are the extrapolated data. The color of the dots represents $\|Q_\theta (x)-Q_\theta ({\rm Proj}_\mathcal{D}(x))\|$ values in (a) and $g(x)$ values in (b), respectively. The darker the color, the smaller the corresponding value. In (a), the yellow dash line is the empirical upper bound of $\|Q_\theta (x)-Q_\theta ({\rm Proj}_\mathcal{D}(x))\|$.}}
    \label{fig:experiments_theorem1}
\end{figure}

{Figure~\ref{subfig:experiments_theorem1_Q} demonstrates that the interpolated data enjoy a tighter empirical upper bound of $\|Q_\theta (x)-Q_\theta ({\rm Proj}_\mathcal{D}(x))\|$ (LHS of Theorem~\ref{corollary:q generalization}) than most of the extrapolated data. Moreover, the empirical upper bound of the Q-value difference grows with the increase of the sample-to-dataset distance $\|x-{\rm Proj}_{\mathcal{D}(x)}\|$, which is consistent with Theorem~\ref{corollary:q generalization} (the upper bound of value difference of deep Q function is well controlled by distance to the dataset). Figure~\ref{subfig:experiments_theorem1_g} shows that the state-conditioned distance function $g(s,a)$ can output low values for interpolated data and some near-dataset extrapolated data, and thus can be used as a relaxed policy constraint in these OOD regions.}

\section{State-Conditioned Distance Function}
\label{sec:proof of property of distance function}

\subsection{Proof of Property \ref{property: convex distance}}

\begin{proof}
Given a state-action pair from the training data $(s,a)\sim\mathcal{D}$, we synthetic random noise actions from a uniform distribution over the action space, \textit{i.e.} $\hat{a}\sim Unif(\mathcal{A})$. Then the distance function $g(\cdot)$ is trained by Eq. (\ref{equ:train distance appendix}).

\begin{equation}
    \min_{g}\mathbb{E}_{(s,a)\sim \mathcal{D}} \left[\mathbb{E}_{ \hat{a}\sim Unif(\mathcal{A})}\left[\|\hat{a}-a\|-g(s,\hat{a})\right]^2\right]
    \label{equ:train distance appendix}
\end{equation}

$[\|\hat{a}-a\|-g(s,\hat{a})]^2$ can be upper bounded by some finite constants because $\mathcal{S}\times\mathcal{A}$ is compact in our analysis. The optimization problem in Eq. (\ref{equ:train distance appendix}) can be reformulated as the following form according to the Fubini's Theorem.

\begin{equation}
    \min_{g}\mathbb{E}_{ \hat{a}\sim Unif(\mathcal{A})} \left[\mathbb{E}_{(s,a)\sim \mathcal{D}}   \left[\|\hat{a}-a\|-g(s,\hat{a})\right]^2\right]
    \label{equ:train distance_reform}
\end{equation}

Note that the objective of Eq. (\ref{equ:train distance_reform}) can be also written as a functional $J[g(s,\hat{a})]$ with respect to function $g$ in following form: 
\begin{equation}
    J[g(s,\hat{a})]=\int_{\mathcal{A}}\frac{1}{|\mathcal{A}|}\left[\mathbb{E}_{(s,a)\sim\mathcal{D}}[\|\hat{a}-a\|-g(s,\hat{a})]^2\right]\mathrm{d}\hat{a} = \int_{\mathcal{A}} F(s,\hat{a},g(s,\hat{a})) \mathrm{d}\hat{a}
\end{equation}

Based on calculus of variation, the extrema (maxima or minima) of functional $J[g(s,\hat{a})]$ can be obtained by solving the associated Euler-Langrane equation ($\partial F/\partial g =0$). In our case, it requires the optimal state-conditioned distance function $g^*$ satisfies the following conditions:

\begin{equation}
    \begin{aligned}
        &\frac{\partial}{\partial g^*}\mathbb{E}_{(s,a)\sim\mathcal{D}}[\|\hat{a}-a\|-g^*(s,\hat{a})]^2 = 0 \\
        &\Rightarrow\quad \mathbb{E}_{(s,a)\sim\mathcal{D}}\left[\frac{\partial}{\partial g^*}[\|\hat{a}-a\|-g^*(s,\hat{a})]^2\right] = 0 \ \ {(\rm DNN \ is \  continuous)}\\
        &\Rightarrow\quad \mathbb{E}_{(s,a)\sim\mathcal{D}}\left[\|\hat{a}-a\|-g^*(s,\hat{a})]\right] = 0\\
    \end{aligned}
    \label{equ:Euler_Langrane}
\end{equation}

Conditioned on a state $s\in \mathcal{D}$, the optimal state-conditioned distance function in Eq. (\ref{equ:Euler_Langrane}) satisfies the following conditions:
\begin{equation}
    \begin{aligned}
        &\int_{\mathcal{A}}\|\hat{a}-a\|\mu(s, a)\mathrm{d}a - \int_{\mathcal{A}}\mu(s,a)\mathrm{d}a  g^*(s,\hat{a}) = 0, s\in\mathcal{D}\\
        &\Rightarrow
        g^*(s,\hat{a})=\frac{\int_{\mathcal{A}}\|\hat{a}-a\|\mu(s, a)\mathrm{d}a}{\int_{\mathcal{A}}\mu{(s,a)}\mathrm{d}a}, s\in\mathcal{D}\\
        &\Rightarrow
        g^*(s,\hat{a})=\int_{\mathcal{A}}C(s,a)\|\hat{a}-a\| \mathrm{d}a, s\in\mathcal{D}
    \end{aligned}
    \label{equ: optimal distance function middle}
\end{equation}



where, $\mu(s,a)$ is the empirical distribution on a finite offline dataset $\mathcal{D}=\{(x_i)\}_{i=1}^n$, \textit{i.e.}, the sum of the Dirac measures $\frac{1}{n}\sum_{i=1}^n \delta_{x_i}$. $\forall (s,a)\notin\mathcal{D}, \mu(s,a)=0. \forall (s,a)\in\mathcal{D}, \mu(s,a)>0$. $C(s, a)=\frac{\mu(s,a)}{\int_{\mathcal{A}}\mu(s,a)\mathrm{d}a} \ge 0$ and $\int_{\mathcal{A}}C(s, a)\mathrm{d}a=1$. Because $L_2$-norm is convex and the non-negative combination of convex functions is still convex,  $g^*(s,\hat{a})$ is a convex function \textit{w.r.t.} $\hat{a}$. In addition, $\forall \hat{a} \in \mathcal{A}$, by the Jensen inequality, we have:
\begin{equation}
        g^*(s,\hat{a})
        \ge\left\|\hat{a}-\mathbb{E}_{a\sim Unif(\mathcal{A})}[C(s, a)a]\right\|
        = \|\hat{a}-a_o(s)\|, s\in \mathcal{D}
\end{equation}

where $a_o(s):=\mathbb{E}_{a\sim Unif(\mathcal{A})}[C(s, a)a], s\in\mathcal{D}$ is the state-conditioned centroid of training dataset.

\end{proof}
\subsection{Proof of Property \ref{property: convex constraint}}
\begin{proof}
The negative gradient of the optimal state-conditioned distance function can be formulated as:

\begin{equation}
    \begin{aligned}
       -\nabla_{\hat{a}} g^*(s,\hat{a})
       & = -\int_{\mathcal{A}}C(s,a)\frac{\hat{a}-a}{\|\hat{a}-a\|}\mathrm{d}a, \forall \hat{a}\in\mathcal{A}, s\in \mathcal{D} \\
       & = \frac{1}{\int_{\mathcal{A}}\mu(s,a)\mathrm{d}a}\int_{\mathcal{A}}\mu(s,a)\frac{-(\hat{a}-a)}{\|\hat{a}-a\|}\mathrm{d}a, \forall \hat{a}\in\mathcal{A}, s\in \mathcal{D} \\
    \end{aligned}
\end{equation}

Observe that the direction of the negative gradient of $g^*(s,\hat{a})$ is related to the integral of vector $-(\hat{a}-a)$ (points towards $a$). When $(s,a)\notin\mathcal{D}$, $-(\hat{a}-a)$ doesn't influence the final gradient because $\mu(s,a)=0$. Therefore, $-(\hat{a}-a)$ only contribute to the final gradient of $g^*(s,\hat{a})$ for $(s,a)\in\mathcal{D}$ as $\mu(s,a)>0$. For a given $s\in\mathcal{D}$ and any extrapolated action $\hat{a}$ that lies outside the convex hull of training data, the integral of vector $-(\hat{a}-a)$ is basically a non-negative combination of vectors $-(\hat{a}-a)$ that point toward actions $a\in\mathcal{D}$ inside the convex hull. As a result, it's easy to see that $-\nabla_{\hat{a}} g^*(s,\hat{a})$ also points inside the convex hull formed by the data.


\end{proof}

\section{Theoretical Analysis of DOGE}
In this section, we analyze the performance of the policy learned by DOGE. We first adopt the Bellman-consistent coefficient from \citep{xie2021bellman} to quantify the distributional shift from the perspective of deep \textit{Q} functions generalization. Then, we gives the upper bound of the Bellman-consistent coefficient under the NTK regime (Appendix \ref{subsec:proof of theorem3.2}). At last, we give the performance bound of DOGE (Appendix \ref{subsec:proof of theorem3.3}).

\subsection{Upper Bound of Bellman-consistent coefficient}
\label{subsec:proof of theorem3.2}
Let us first review the definition of Bellman-consistent coefficient $\mathcal{B}(v,\mu,\mathcal{F},\pi)$ in \citep{xie2021bellman}.
We define $\mathcal{B}(v,\mu,\mathcal{F},\pi)$ to measure the distributional shift from an arbitrary distribution $v$ to data distribution $\mu$, \textit{w.r.t.} $\mathcal{F}$ and $\pi$. $\mathcal{F}$ is the function class of \textit{Q} networks. 

\begin{equation}
    \mathcal{B}(v,\mu, \mathcal{F}, \pi):=\sup_{Q\in\mathcal{F}}\frac{\|Q-\mathcal{T}^{\pi}Q\|^2_{2,v}}{\|Q-\mathcal{T}^{\pi}Q\|^2_{2,\mu}}
    \label{def:appendix_bellman}
\end{equation}


where the $\mu$-weighted norm (square) is defined as $\|f\|^2_{2, \mu}:=\mathbb{E}_{\mu}[\|f\|^2]$, which is also applicable for any distribution $v$. $\mathcal{T}^{\pi}Q$ is the Bellman operator of policy $\pi$, defined as $\mathcal{T}^{\pi}Q(s,a):=r(s,a)+\gamma \mathbb{E}_{a'\sim \pi(\cdot|s'), s'\sim\mathcal{P}(\cdot|s,a)}[Q(s', a')]:=r(s,a)+\gamma\mathbb{P}^{\pi}[Q(s',a')]$. $\mathbb{P}^{\pi}[\cdot]$ is the brief notation for $\mathbb{E}_{a'\sim\pi(\cdot|s'), s'\sim\mathcal{P}(\cdot|s,a)}[\cdot]$. The smaller the ratio of the Bellman error under $v$ and $\mu$, the more transferable the \textit{Q} function from $\mu$ to $v$, even when $\sup_{(s,a)}\frac{v(s,a)}{\mu(s,a)}= \infty$. Then we give the proof of Theorem \ref{theorem:bellman bound} (Upper bound of Bellman-consistent coefficient).

\begin{proof}
We denote $x=(s,a)$ and $x'=(s',a')$. $x_o=\mathbb{E}_{x\sim\mathcal{D}}[x]$ is the centroid of offline dataset. $d_1=\|x-x_o\|$ and $d_2=\|x'-x_o\|$ are the sample-to-centroid distances. Let $\mu(x)$ be the distribution under the offline dataset and $v(x)$ be any distribution. Then, for the numerator in Eq. ($\ref{equ:bellman consistent}$) and Eq. (\ref{def:appendix_bellman}), we have the following inequalities.
\begin{equation}
\small
    \begin{aligned}
        &\|Q-\mathcal{T}^{\pi}Q\|^2_{2,v} \\
        &= \int_{\mathcal{S}\times\mathcal{A}}v(x)\|Q(x)-r(x)-\gamma\mathbb{P}^{\pi}[Q(x')]\|^2 \\
        &=\int_{\mathcal{S}\times\mathcal{A}}v(x)\|Q(x)-\mathbb{P}^{\pi}[Q(x')]-r(x)+(1-\gamma)\mathbb{P}^{\pi}[Q(x')]\|^2\\
        &\le\int_{\mathcal{S}\times\mathcal{A}}v(x)\left[\|Q(x)-\mathbb{P}^{\pi}[Q(x')]\|+\|r(x)\|+\|(1-\gamma)\mathbb{P}^{\pi}[Q(x')]\|\right]^2 \ \rm{(Triangle)} \\
        &=\int_{\mathcal{S}\times\mathcal{A}}v(x)\left[\|Q(x)-Q(x_o)+Q(x_o)-\mathbb{P}^{\pi}[Q(x')]\|+\|r(x)\|+(1-\gamma)\|\mathbb{P}^{\pi}[Q(x')]-Q(x_o)+Q(x_o)\|\right]^2 \\
        &\le\int_{\mathcal{S}\times\mathcal{A}}v(x)\left[(1-\gamma)\|Q(x_o)\|+\|r(x)\|+\|Q(x)-Q(x_o)\|+(2-\gamma)\|\mathbb{P}^{\pi}[Q(x')]-Q(x_o)\|\right]^2  \ \rm{(Triangle)}\\
        &\le\int_{\mathcal{S}\times\mathcal{A}}v(x)\left[\underbrace{(1-\gamma)\|Q(x_o)\|+\|r(x)\|}_{\mathcal{I}_1}+\underbrace{\|Q(x)-Q(x_o)\|}_{\mathcal{I}_2}+\underbrace{(2-\gamma)\mathbb{P}^{\pi}[\|Q(x')-Q(x_o)\|]}_{\mathcal{I}_3}\right]^2 \ \rm{(Jensen)}\\
    \end{aligned}
    \label{equ: c bound}
\end{equation}

The RHS contains three parts: $\mathcal{I}_1=(1-\gamma)\|Q(x_o)\|+\|r(x)\|$,  $\mathcal{I}_2=\|Q(x)-Q(x_o)\|$ and $\mathcal{I}_3=(2-\gamma)\mathbb{P}^{\pi}[\|Q(x')-Q(x_o)\|]$. Because $\|r(x)\|\in [0,R_{\max}], \forall x \in \mathcal{S}\times\mathcal{A}$, $\mathcal{I}_1$ can be upper bounded as:
\begin{equation}
    \mathcal{I}_1\le(1-\gamma)Q(x_o)+R_{\max}
    \label{equ: a bound}
\end{equation}

By applying Lemma \ref{lemma:nn_generalization}, $\mathcal{I}_2$ is upper bounded as

\begin{equation}
    \mathcal{I}_2\le C_1\left[\sqrt{\min(\|x\|,\|x_o\|)d_1}+2d_1\right]
\end{equation}

$\mathcal{I}_3$ is upper bounded as

\begin{equation}
    \mathcal{I}_3\le C_1(2-\gamma)\mathbb{P}^{\pi}\left[\sqrt{\min(\|x'\|,\|x_o\|)d_2}+2d_2\right]
\end{equation}

In addition, we denote $C_2:=\sqrt{\sup_{x\in\mathcal{S}\times\mathcal{A}}\|x\|}$. Then, $\mathcal{I}_2$ and $\mathcal{I}_3$ can be further upper bounded by

\begin{equation}
    \mathcal{I}_2\le C_1\left(C_2\sqrt{d_1}+2d_1\right)
    \label{equ: b1 bound}
\end{equation}

\begin{equation}
    \mathcal{I}_3\le (2-\gamma)C_1\mathbb{P}^{\pi}(C_2\sqrt{d_2}+2d_2)
    \label{equ: b2 bound}
\end{equation}

The above relaxation of the upper bound in Eq. (\ref{equ: b1 bound}) and Eq. (\ref{equ: b2 bound}) is not necessary, but for notation brevity, we choose to relax the upper bound by treating $C_2:=\sqrt{\sup_{x\in\mathcal{S}\times\mathcal{A}}\|x\|}$.

Plug Eq. (\ref{equ: a bound}), Eq. (\ref{equ: b1 bound}) and Eq. (\ref{equ: b2 bound}) into the RHS of Eq. (\ref{equ: c bound}), we can get
\begin{equation}
    \begin{aligned}
        &\|Q-\mathcal{T}^{\pi}Q\|^2_{2,v} \\
        &\le \int_{\mathcal{S}\times\mathcal{A}}v(x)\left[(1-\gamma)Q(x_o)+R_{\max} + C_1(C_2\sqrt{d_1}+2d_1)+(2-\gamma)C_1\mathbb{P}^{\pi}(C_2\sqrt{d_2}+2d_2)\right]^2\\
        &= \left\|(1-\gamma)Q(s_o,a_o)+R_{\max} + C_1\left(C_2\sqrt{d_1}+2d_1\right)+(2-\gamma)C_1\mathbb{P}^{\pi}\left(C_2\sqrt{d_2}+2d_2\right)\right\|^2_{2,v} \\
    \end{aligned}
    \label{equ: numerator bound}
\end{equation}

For the denominator $\|Q-\mathcal{T^\pi}Q\|^2_{2, \mu}$ in Eq. (\ref{equ:bellman consistent}) and Eq. (\ref{def:appendix_bellman}), because the \textit{Q} function is approximated, there exists approximation error between $Q$ and $\mathcal{T}^{\pi}Q$, \textit{i.e.}, $Q-\mathcal{T}^{\pi}Q\ge\epsilon$. In addition, the distribution $\mu$ contains some mismatch \textit{w.r.t.} the equilibrium distribution induced by policy $\pi$. Therefore, it is reasonable to assume $\|Q-\mathcal{T^\pi}Q\|^2_{2,\mu} \ge \epsilon_\mu>0$.

Then, we can complete the proof by plugging the upper bound in Eq. (\ref{equ: numerator bound}) and $\|Q-\mathcal{T^\pi}Q\|^2_{2,\mu} \ge \epsilon_{\mu}>0$ into Eq. (\ref{equ:bellman consistent}) or Eq. (\ref{def:appendix_bellman}).
\begin{equation}
\small
        \mathcal{B}(v,\mu,\mathcal{F},\pi)\le\frac{1}{\epsilon_\mu}\left\|\underbrace{(1-\gamma)Q(s_o,a_o)+R_{\max}}_{\mathcal{B}_1}+\underbrace{C_1\left(C_2\sqrt{d_1}+2d_1\right)}_{\mathcal{B}_2}+\underbrace{(2-\gamma)C_1\mathbb{P}^{\pi}\left(C_2\sqrt{d_2}+2d_2\right)}_{\mathcal{B}_3}\right\|^2_{2,v}
\end{equation}
\end{proof}

To be mentioned, the distance regularization in DOGE compels the leaned policy to output the action that near the state-conditioned centroid of dataset and thus $\mathcal{B}_2$ and $\mathcal{B}_3$ can be driven to some small values. $\mathcal{B}_1$ is independent on the distributional shift. Therefore, $\mathcal{B}(v,\mu,\mathcal{F},\pi)$ can be bounded by some finite constants under DOGE. Therefore, the constrained policy set induced by DOGE is essentially a Bellman-consistent constrained policy set $\Pi_{\mathcal{B}}$ defined in Definition \ref{def:bellman_set}. In addition, other policy constraint methods such as BEAR \citep{kumar2019stabilizing} can also have bounded $\mathcal{B}$. However, these policy constraint methods do not allow the learned policy shifts to those generalizable distributions where $\mathcal{B}(v, \mu, \mathcal{F}, \pi)$ is small but $\sup_{(s,a)}\frac{v(s,a)}{\mu(s,a)}\rightarrow\infty$, which is essentially different with DOGE.

\subsection{Performance of the Policy learned by DOGE}
\label{subsec:proof of theorem3.3}


Here, we briefly review the definition of the Bellman-consistent constrained policy set $\Pi_{\mathcal{B}}$ defined in Definition \ref{def:bellman_set}. The Bellman-consistent coefficient under the transition induced by $\Pi_\mathcal{B}$ can be bounded by some finite constants $l(k)$:

\begin{equation}
    \mathcal{B}(\rho_k, \mu, \mathcal{F}, \pi)\le l(k)
    \label{equ:bellman_concentration}
\end{equation}

where, $\rho_0$ is the initial state-action distribution and $\mu$ is the distribution of training data. $\rho_k = \rho_0P^{\pi_1}P^{\pi_2}...P^{\pi_k}, \forall\pi_1, \pi_2, ..., \pi_k \in \Pi_\mathcal{B}$ and $P^{\pi_i}$ is the transition operator on states induced by $\pi_i$, \textit{i.e.}, $P^{\pi_i}(s',a'|s,a)=\mathcal{P}(s'|s,a)\pi_i(a'|s')$. 

We denote the constrained Bellman operator induced by $\Pi_{\mathcal{B}}$ as $\mathcal{T}^{\Pi_\mathcal{B}}$, and $\mathcal{T}^{\Pi_\mathcal{B}}Q(s,a):=r(s,a)+\max_{\pi\in\Pi_{\mathcal{B}}}\gamma\mathbb{P}^{\pi}[Q(s',a')]$. $\mathcal{T}^{\Pi_\mathcal{B}}$ can be seen as a operator in a redefined MDP and hence is a contraction mapping and exists a fixed point. We denote $Q^{\Pi_{\mathcal{B}}}$ as the fixed point of $\mathcal{T}^{\Pi_{\mathcal{B}}}$, \textit{i.e.}, $Q^{\Pi_{\mathcal{B}}}=\mathcal{T}^{\Pi_{\mathcal{B}}}Q^{\Pi_{\mathcal{B}}}$.

The Bellman optimal operator $\mathcal{T}$ is
\begin{equation}
    \mathcal{T}Q(s,a):=r(s,a)+\max_{\pi}\gamma \mathbb{P}^{\pi}[Q(s',a')]
\end{equation}

$\mathcal{T}$ is also a contraction mapping. Its fixed point is the optimal value function $Q^*$ and $Q^*=\mathcal{T}Q^*$.

Then, by the triangle inequality, we have:


\begin{equation}
    \begin{aligned}
        \|Q^*-Q^{\pi_n}\|_{\rho_0} &= \|Q^*-Q^{\Pi_{\mathcal{B}}}+Q^{\Pi_{\mathcal{B}}}-Q^{\pi_n}\|_{\rho_0} \\
        &\le \underbrace{\|Q^*-Q^{\Pi_{\mathcal{B}}}\|_{\rho_0}}_{L_1} + \underbrace{\|Q^{\Pi_{\mathcal{B}}}-Q^{\pi_n}\|_{\rho_0}}_{L_2}
    \end{aligned}
    \label{equ: performance gap 1}
\end{equation}

where $Q^{\pi_n}$ is the true \textit{Q} value of policy $\pi_n$. $ \pi_n$ is the greedy policy \textit{w.r.t.} to $Q_n$ in the Bellman-consistent constrained policy set $\Pi_{\mathcal{B}}$, \textit{i.e.}, $\pi_n=\sup_{\pi\in\Pi_{\mathcal{B}}}\mathbb{E}_{a\sim \pi(\cdot|s)}[Q_n(s,a)]$. $Q_n$ is the \textit{Q} function after $n$-th value iteration under the constrained Bellman operator $\mathcal{T}^{\Pi_{\mathcal{B}}}$.

For $L_1$ part in Eq. (\ref{equ: performance gap 1}), we first focus on the infinity norm.

\begin{equation}
    \begin{aligned}
        \|Q^*-Q^{\Pi_{\mathcal{B}}}\|_{\infty} &= \|\mathcal{T}Q^*-\mathcal{T}^{\Pi_{\mathcal{B}}}Q^{\Pi_{\mathcal{B}}}\|_{\infty} \\
        &\le \|\mathcal{T}Q^*-\mathcal{T}^{\Pi_{\mathcal{B}}}Q^{\Pi_{\mathcal{B}}}\|_{\infty} + \|\mathcal{T}^{\Pi_{\mathcal{B}}}Q^{\Pi_{\mathcal{B}}}-\mathcal{T}^{\Pi_{\mathcal{B}}}Q^*\|_{\infty} \\
        &\le \|\mathcal{T}Q^*-\mathcal{T}^{\Pi_{\mathcal{B}}}Q^{\Pi_{\mathcal{B}}}\|_{\infty} + \gamma\|Q^*-Q^{\Pi_{\mathcal{B}}}\|_{\infty} \ \ (\mathcal{T}^{\Pi_{\mathcal{B}}} \ {\rm is} \ \gamma{\rm-contraction})\\
        & = \alpha(\Pi_{\mathcal{B}}) + \gamma\|Q^*-Q^{\Pi_{\mathcal{B}}}\|_{\infty} \\
    \end{aligned}
\end{equation}

where $\alpha(\Pi_{\mathcal{B}}):=\|\mathcal{T}Q^*-\mathcal{T}^{\Pi_{\mathcal{B}}}Q^{\Pi_{\mathcal{B}}}\|_{\infty}$ is the suboptimality constant. Then, we get $\|Q^*-Q^{\Pi_{\mathcal{B}}}\|_{\infty} \le \frac{\alpha(\Pi_{\mathcal{B}})}{1-\gamma}$ and $L_1\le\|Q^*-Q^{\Pi_{\mathcal{B}}}\|_{\infty}\le \frac{\alpha(\Pi_{\mathcal{B}})}{1-\gamma}$.

For $L_2$, we introduce Lemma \ref{lemma:error_prop}, which upper bounds $\|Q^{\Pi_{\mathcal{B}}}-Q^{\pi_n}\|^2_{2, \rho_0}$. The proof of Lemma \ref{lemma:error_prop} can be get by directly replacing $Q^{*}$ with $Q^{\Pi_{\mathcal{B}}}$ in the Appendix F.3. in \citep{le2019batch}, because $Q^{\Pi_{\mathcal{B}}}$ is the optimal value function under the modified MDP induced by $\mathcal{T}^{\Pi_{\mathcal{B}}}$.

\begin{lemma}
    (Upper bound of error propagation). $\|Q^{\Pi_{\mathcal{B}}}-Q^{\pi_n}\|^2_{2, \rho_0}$ can be upper bounded as
     \begin{equation}
        \small
        \begin{aligned}
            \|Q^{\Pi_{\mathcal{B}}}-Q^{\pi_n}\|^2_{2, \rho_0}
        &\le \left[\frac{2\gamma(1-\gamma^{n+1})}{(1-\gamma)^2}\right]^2\int_{\mathcal{S}\times\mathcal{A}}\rho_0(ds,da)\left[\sum_{k=0}^{n-1}\alpha_k A_k \epsilon_k^2 + \alpha_nA_n(Q^{\Pi_{\mathcal{B}}}-Q_0)^2\right](s,a)
        \end{aligned}
    \label{equ: value erro propagation}
    \end{equation}
where
    \begin{equation}
        \epsilon_k=Q_{k+1}-\mathcal{T}^{\Pi_{\mathcal{B}}}Q_k
    \end{equation}
    \begin{equation}
    \begin{aligned}
        \alpha_k &=\frac{(1-\gamma)\gamma^{n-k-1}}{1-\gamma^{n+1}} \ \ {\rm for} \ k<n \\
        \alpha_n &= \frac{(1-\gamma)\gamma^n}{1-\gamma^{n+1}}
    \end{aligned}
    \end{equation} 
\begin{equation}
    \begin{aligned}
        A_k &= \frac{1-\gamma}{2}\sum_{m\ge0}\gamma^m(P^{\pi_n})^m\left[(P^{\pi^{\Pi_{\mathcal{B}}}})^{n-k}+P^{\pi_n}P^{\pi_{n-1}}...P^{\pi_{k+1}}\right] \ \ {\rm for} \ k <n \\
        A_n &= \frac{1-\gamma}{2}\sum_{m\ge0}\gamma^m(P^{\pi_n})^m\left[(P^{\pi^{\Pi_{\mathcal{B}}}})^{n+1}+P^{\pi_n}P^{\pi_{n-1}}...P^{\pi_{0}}\right]
    \end{aligned}
    \label{equ:error_prop}
    \end{equation}
    \label{lemma:error_prop}
\end{lemma}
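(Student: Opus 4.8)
The plan is to prove this as a direct instance of the classical approximate value iteration (AVI) error-propagation bound, exploiting the fact that $\mathcal{T}^{\Pi_\mathcal{B}}$ is the Bellman optimality operator of a restricted MDP. First I would record the structural facts that make the reduction legitimate: $\mathcal{T}^{\Pi_\mathcal{B}} Q(s,a) = r(s,a) + \max_{\pi\in\Pi_\mathcal{B}} \gamma \mathbb{P}^\pi[Q(s',a')]$ is a $\gamma$-contraction in $\|\cdot\|_\infty$, is monotone, and dominates every policy operator $\mathcal{T}^\pi$ with $\pi\in\Pi_\mathcal{B}$; its fixed point $Q^{\Pi_\mathcal{B}}$ is therefore the optimal value function of the MDP whose admissible policies are exactly $\Pi_\mathcal{B}$. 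With this identification, every step the standard analysis performs with the pair $(\mathcal{T}, Q^*)$ transfers verbatim to $(\mathcal{T}^{\Pi_\mathcal{B}}, Q^{\Pi_\mathcal{B}})$, which is precisely the substitution invoked by the reference to Appendix F.3 of \citep{le2019batch}.

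Second, I would set up the one-step recursions. Let $\pi_{k+1}$ be a maximizer in $\Pi_\mathcal{B}$ of $\mathbb{P}^\pi[Q_k]$, so that $\mathcal{T}^{\Pi_\mathcal{B}} Q_k = \mathcal{T}^{\pi_{k+1}} Q_k$, let $\pi^{\Pi_\mathcal{B}}$ denote the optimal policy of the restricted MDP (so $Q^{\Pi_\mathcal{B}} = \mathcal{T}^{\pi^{\Pi_\mathcal{B}}} Q^{\Pi_\mathcal{B}}$), and recall $\epsilon_k = Q_{k+1} - \mathcal{T}^{\Pi_\mathcal{B}} Q_k$. Writing $Q^{\Pi_\mathcal{B}} - Q_{k+1} = (\mathcal{T}^{\Pi_\mathcal{B}} Q^{\Pi_\mathcal{B}} - \mathcal{T}^{\Pi_\mathcal{B}} Q_k) - \epsilon_k$ and applying optimality-operator dominance on each side yields the two-sided pointwise sandwich $\gamma P^{\pi_{k+1}}(Q^{\Pi_\mathcal{B}} - Q_k) - \epsilon_k \le Q^{\Pi_\mathcal{B}} - Q_{k+1} \le \gamma P^{\pi^{\Pi_\mathcal{B}}}(Q^{\Pi_\mathcal{B}} - Q_k) - \epsilon_k$, where I use the linearity $\mathcal{T}^\pi Q - \mathcal{T}^\pi Q' = \gamma P^\pi(Q - Q')$. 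Unrolling both inequalities from $k=0$ to $n-1$ sandwiches $Q^{\Pi_\mathcal{B}} - Q_n$ between two $\gamma$-discounted sums of the residuals $\epsilon_k$, weighted respectively by the operator products $(P^{\pi^{\Pi_\mathcal{B}}})^{n-k}$ and $P^{\pi_n}\cdots P^{\pi_{k+1}}$, plus an initial term carrying $Q^{\Pi_\mathcal{B}} - Q_0$ with weight $\gamma^n$.

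Third, I would convert the iterate error into the policy error $Q^{\Pi_\mathcal{B}} - Q^{\pi_n}$. Because $\pi_n$ is greedy in $\Pi_\mathcal{B}$ with respect to $Q_n$, a standard manipulation bounds $Q^{\Pi_\mathcal{B}} - Q^{\pi_n}$ pointwise by $(I - \gamma P^{\pi_n})^{-1} = \sum_{m\ge0}\gamma^m(P^{\pi_n})^m$ applied to $Q^{\Pi_\mathcal{B}} - Q_n$; this geometric series is the origin of the $\frac{1-\gamma}{2}\sum_{m\ge0}\gamma^m(P^{\pi_n})^m$ prefactor in each $A_k$, and composing it with the two families of transition-operator products from the unrolled recursion assembles the bracketed operators $[(P^{\pi^{\Pi_\mathcal{B}}})^{n-k} + P^{\pi_n}\cdots P^{\pi_{k+1}}]$ exactly as written. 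Finally I would take the $\rho_0$-weighted squared $L_2$ norm of this pointwise bound. Since the coefficients $\alpha_k$ are constructed to sum to one (which is what the normalization factor $(1-\gamma)/(1-\gamma^{n+1})$ enforces), Jensen's inequality moves the square inside the convex combination $\sum_k \alpha_k A_k (\cdot)^2$; the leftover $\gamma$-powers together with the factor of $2$ from averaging the two sandwich bounds collect into the overall prefactor $[2\gamma(1-\gamma^{n+1})/(1-\gamma)^2]^2$, giving the claimed inequality with the bracketed term $\sum_{k=0}^{n-1}\alpha_k A_k \epsilon_k^2 + \alpha_n A_n (Q^{\Pi_\mathcal{B}} - Q_0)^2$.

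I expect the main difficulty to lie not in the chain of inequalities, which is routine once the reduction is granted, but in justifying the reduction itself. The argument requires that $\Pi_\mathcal{B}$ admit a maximizing policy at each iterate so that $\mathcal{T}^{\Pi_\mathcal{B}} Q_k = \mathcal{T}^{\pi_{k+1}} Q_k$ holds with $\pi_{k+1}\in\Pi_\mathcal{B}$, and that $\mathcal{T}^{\Pi_\mathcal{B}}$ is a genuine $\gamma$-contraction whose fixed point $Q^{\Pi_\mathcal{B}}$ coincides with the value $Q^{\pi^{\Pi_\mathcal{B}}}$ of the restricted MDP's optimal policy --- precisely the properties asserted when $\mathcal{T}^{\Pi_\mathcal{B}}$ was introduced as a Bellman operator on a redefined MDP. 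Granting these, the remaining care points are purely bookkeeping: tracking the transition-operator products so that they assemble into the stated $A_k$, and verifying that $\{\alpha_k\}_{k=0}^{n}$ is indeed a probability distribution so that Jensen applies with exactly the stated constant. This is the content of Appendix F.3 of \citep{le2019batch} with $Q^*$ and $\mathcal{T}$ replaced throughout by $Q^{\Pi_\mathcal{B}}$ and $\mathcal{T}^{\Pi_\mathcal{B}}$.
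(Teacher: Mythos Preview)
Your proposal is correct and takes essentially the same approach as the paper: the paper's entire proof of this lemma is a one-line remark that the result follows by replacing $Q^{*}$ with $Q^{\Pi_{\mathcal{B}}}$ in Appendix~F.3 of \citep{le2019batch}, justified by $Q^{\Pi_{\mathcal{B}}}$ being the optimal value function of the modified MDP induced by $\mathcal{T}^{\Pi_{\mathcal{B}}}$. Your sketch simply unpacks that reduction and the cited AVI error-propagation argument, so it is strictly more detailed than (and fully consistent with) the paper's treatment.
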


$Q_0$ is the \textit{Q} function after initialization. Note that $\lim_{n\rightarrow\infty}\left[\alpha_n A_n (Q^{\Pi_{\mathcal{B}}}-Q_0)^2\right]=0$, we leave out this term for analysis simplicity. In addition, each $A_k$ is a probability kernel that combine $P^{\pi_i}$ and $P^{\pi^\Pi_{\mathcal{B}}}$ (the transition operator on states induced by the constrained optimal policy $\pi^{\Pi_{\mathcal{B}}}\in\Pi_{\mathcal{B}}$) and $\sum_ka_k=1$. 


The key part in Eq. (\ref{equ: value erro propagation}) is $\int_{\mathcal{S}\times\mathcal{A}}\rho_0A_k\epsilon_k^2$ and we expand this term as the following shows.
\begin{equation}
    \begin{aligned}
        \int_{\mathcal{S}\times\mathcal{A}}\rho_0A_k\epsilon_k^2
        &= \int_{\mathcal{S}\times\mathcal{A}}\frac{1-\gamma}{2}\rho_0\sum_{m\ge0}\gamma^m(P^{\pi_n})^m\left[(P^{\pi^{\Pi_{\mathcal{B}}}})^{n-k}+P^{\pi_n}P^{\pi_{n-1}}...P^{\pi_{k+1}}\right]\epsilon_k^2 \\
        &= \frac{1-\gamma}{2}\sum_{m\ge0}\gamma^m \int_{\mathcal{S}\times\mathcal{A}} \left[(P^{\pi_n})^m(P^{\pi^{\Pi_{\mathcal{B}}}})^{n-k}+(P^{\pi_n})^mP^{\pi_n}P^{\pi_{n-1}}...P^{\pi_{k+1}}\right]\rho_0\epsilon_k^2 \\
    \end{aligned}
    \label{equ:rho}
\end{equation}

As Eq. (\ref{equ:bellman_concentration}) shows, the policy set induced by DOGE is a Bellman-consistent constrained policy set $\Pi_{\mathcal{B}}$ defined in Definition \ref{def:bellman_set}. Therefore, let $\rho_0$ be the initial state-action distribution and $\mu$ denote the distribution of training data. For any policy $\pi_1, \pi_2, ..., \pi_k \in \Pi_{\mathcal{B}}$, the distribution after $k$-th Bellman-consistent iteration is $\rho_k = \rho_0P^{\pi_1}P^{\pi_2}...P^{\pi_k}$, there exits some finite constants $l(k)$, that $\mathcal{B}(\rho_k, \mu, \mathcal{F}, \pi)\le l(k)$ holds. Then we can get the following inequalities.

\begin{equation}
\begin{aligned}
        \|Q-\mathcal{T^\pi}Q\|^2_{2, \rho_k} & \le \|Q-\mathcal{T^\pi}Q\|^2_{2, \mu}l(k)\\
        \int_{\mathcal{S}\times\mathcal{A}}\rho_k\epsilon^2 &\le\int_{\mathcal{S}\times\mathcal{A}}\mu\epsilon^2 l(k) \ \ \ \ \ (\epsilon = Q-\mathcal{T^\pi}Q) \\
\end{aligned}
\label{equ:bellman_rho}
\end{equation}

As a result, by applying the result of Eq. (\ref{equ:bellman_rho}) to Eq. (\ref{equ:rho}), we can get
\begin{equation}
    \int_{\mathcal{S}\times\mathcal{A}}\rho_0A_k\epsilon_k^2 \le  \int_{\mathcal{S}\times\mathcal{A}}(1-\gamma)\sum_{m\ge0}\gamma^m\epsilon_k^2\mu l(m+n-k)
    \label{equ:rho_0_A_k_episilon_k}
\end{equation}

Plugs Eq. (\ref{equ:rho_0_A_k_episilon_k}) into Eq. (\ref{equ: value erro propagation}) and leaves out $\left[\alpha_n A_n (Q^{\Pi_{\mathcal{B}}}-Q_0)^2\right]$ in Eq. (\ref{equ: value erro propagation}),   we get

\begin{equation}
    \begin{aligned}
        \lim_{n\rightarrow\infty}L_2^2
        & \le \lim_{n\rightarrow\infty}\ \left[\frac{2\gamma(1-\gamma^{n+1})}{(1-\gamma)^2}\right]^2\left[\sum_{k=0}^{n-1}(1-\gamma)\sum_{m\ge0}\gamma^m l(m+n-k)\alpha_k\|\epsilon_k\|^2_{2, \mu}\right]\\
        &=\lim_{n\rightarrow\infty}\left[\frac{2\gamma(1-\gamma^{n+1})}{(1-\gamma)^2}\right]^2\left[\frac{1}{1-\gamma^{n+1}}\sum_{k=0}^{n-1}(1-\gamma)^2\sum_{m\ge0}\gamma^{m+n-k-1}l(m+n-k)\|\epsilon_k\|^2_{2,\mu}\right] \\
        &\le\lim_{n\rightarrow\infty} \left[\frac{2\gamma(1-\gamma^{n+1})}{(1-\gamma)^2}\right]^2\left[\frac{1}{1-\gamma^{n+1}} L(\Pi_{\mathcal{B}})^2 \sup_{k\ge0} \|\epsilon_k\|^2_{2,\mu}\right] \\
        &=\left[\frac{2\gamma}{(1-\gamma)^2}\right]^2 L(\Pi_{\mathcal{B}})^2\sup_{k\ge0}\|\epsilon_k\|^2_{2,\mu}
    \end{aligned}
\end{equation}

where, $L(\Pi_{\mathcal{B}}) =\sqrt{(1-\gamma)^2\sum_{k=1}^{\infty}k\gamma^{k-1}l(k)}$. Then, we can bound $L_2$ by
\begin{equation}
    \lim_{n\rightarrow\infty}L_2 \le         \frac{2\gamma}{(1-\gamma)^2} L(\Pi_{\mathcal{B}})\sup_{k\ge0}\|\epsilon_k\|_{\mu}
\end{equation} 

With the upper bound of $L_1$ and $\lim_{n\rightarrow\infty}L_2$, we can complete the proof by adding these two term together.

\begin{equation}
    \lim_{n\rightarrow\infty}\|Q^{*}-Q^{\pi_n}\|_{\rho_0}\le\frac{2\gamma}{(1-\gamma)^2}\left[L(\Pi_{\mathcal{B}})\sup_{k\ge0}\|\epsilon_k\|_{\mu}+\frac{1-\gamma}{2\gamma}\alpha(\Pi_{\mathcal{B}})\right]
\label{equ:performance of policy}
\end{equation}



\section{Implementation Details}
\label{sec: implementation details}

DOGE can build on top of standard online actor-critic algorithms such as TD3\citep{fujimoto2018addressing} and SAC\citep{haarnoja2018soft}. We choose TD3 as our base because of its simplicity compared to other methods. We build DOGE on top of TD3 by simply plugging the state-conditioned distance function as a policy regularization term during policy training process. Then, the learning objective of policy $\pi$ in Eq. (\ref{equ: final policy learning objective}) can be formulated as:
\begin{equation}
    \pi = \arg \max_{\pi}\min_{\lambda}\mathbb{E}_{s\sim\mathcal{D}}\left[\beta Q(s,\pi(s))-\lambda(g(s,\pi(s))-G)\right]\quad  \mathrm{s.t.}\;\; \lambda\ge 0
    \label{equ:TD3_learning}
\end{equation}

The \textit{Q} function, policy and state-conditioned distance function networks are represented by 3 layers ReLU activated MLPs with 256 units for each hidden layer and are optimized by Adam optimizer. In addition, we normalize each dimension of state to a standard normal distribution for Mujoco tasks. The hyperparameters of DOGE are listed in Table \ref{table:DOGE parameters}. 

\begin{table}[htb]
  \small
  \caption{Hyperparameters of DOGE}
  \label{table:DOGE parameters}
  \centering
  \begin{tabular}{lll}
    \toprule
    ~   & Hyperparameters    &   Value   \\
    \midrule
    \multirow{6}{*}{Shared parameters}   & Optimizer          &   Adam    \\
    ~ & Standard Normalize state & True for Mujoco \\
    ~ & ~ & False for AntMaze \\
    ~ & Batch size & 256 \\
    ~ & Layers & 3 \\
    ~ & Hidden dim & 256 \\
    \midrule
    \multirow{9}{*}{TD3}  & Actor learning rate & $3\times10^{-4}$ \\
    ~   & Critic learning rate & $3\times10^{-4}$ for Mujoco \\
    ~   & ~                    & $1\times10^{-3}$ for AntMaze \\
    ~   & Discount factor $\gamma$ & 0.99 for Mujoco \\
    ~   & ~ & 0.995 for AntMaze \\
    ~   & Number of iterations & $10^6$ \\
    ~   & Target update rate $\tau$ &0.005 \\
    ~   & Policy noise & 0.2\\
    ~   & Policy noise clipping & 0.5 \\
    ~   & Policy update frequency & 2 \\
    \midrule
    \multirow{5}{*}{State-Conditioned Distance Function} & Learning rate & $1\times10^{-3}$ for Mujoco \\
    ~ & ~ & $1\times10^{-4}$ for AntMaze \\
    ~ & Number of noise actions $N$ & 20 \\
    ~ & Number of iterations $N_g$ & $10^5$ for Mujoco \\
    ~ & ~ & $10^6$ for AntMaze \\
    \midrule
    \multirow{5}{*}{DOGE} & \multirow{2}{*}{$\alpha$} & \{7.5, 17.5\} Mujoco \\
    ~ & ~ & \{5, 10, 70\} AntMaze \\
    ~ & Lagrangian multiplier $\lambda$ & clipped to [1, 100] \\
    ~ & $\lambda$ learning rate & $3e-4$ \\
    \bottomrule
  \end{tabular}
\end{table}

\subsection{TD3's implementation details}

For the choice of the Critic learning rate and discount factor $\gamma$, we find that for AntMaze tasks, a high Critic learning rate can improve the stability of value function during training process. This may be because the AntMaze tasks require the value function to dynamic programs more times to "stitch" suboptimal trajectories than Mujoco tasks. Therefore, we choose $1\times10^{-3}$ and $0.995$ as the Critic learning rate and discount factor $\gamma$ for AntMaze tasks, respectively. The other implementations such as policy noise scale and policy noise clipping are the same with author's implementation~\citep{fujimoto2018addressing}. 

\subsection{State-Conditioned Distance function's implementation details}

We sample $N=20$ noise actions from a uniform distribution that covers the full action space to approximate the estimation value in Eq. (\ref{equ:train distance}). We find $N=20$ can balance the computation complexity and estimation accuracy and is the same sample numbers with CQL \citep{kumar2020conservative}. The ablation of $N$ can be found in Fig. \ref{fig:N}. The practical training objective of the state-conditioned distance function is as follows:
\begin{equation}
    \begin{aligned}
        \min_g\mathbb{E}_{(s,a)\in\mathcal{D}, \hat{a}_i\sim Unif(\mathcal{A})}\left[\frac{1}{N}\sum_{i=1}^N\left[\|a-\hat{a}_i\|-g(s,\hat{a}_i)\right]^2\right]\\
    \label{practical_distance}
    \end{aligned}
\end{equation}

We find that a wider sample range than the max action space $[-a_{\max}, a_{\max}]$ is helpful to characterize the geometry of the full offline dataset. This is because some actions in the offline dataset lie at the boundary of the action space, which can only be sampled with little probability when sampling from a narrow distribution. At this time, the noise actions may not cover the geometry information near the boundary. Therefore, we sample noise actions from a uniform distribution that is 3 times wider than the max action space, \textit{i.e.}, $\hat{a}\sim Unif[-3a_{\max}, 3a_{\max}]$. For the learning rate, we find that a high learning rate enables a stable training process in Mujoco tasks. Therefore, we choose $1\times10^{-3}$ and $1\times10^{-4}$ as the distance function learning rate for Mujoco and AntMaze, respectively. We also observe that for Mujoco tasks, $10^5$ iterations can already produce a relatively good state-conditioned distance function, and training more times won't hurt the final results. To reduce computation, we only train the state-conditioned distance function for $10^5$ steps for Mujoco tasks. 

\subsection{Hyperparameters Tuning of DOGE}

The scale of $\alpha$ determines the strength of policy constraint. We tune $\alpha$ to balance the trade-off between policy constraint and policy improvement. To be mentioned, $\alpha$ is tuned within only 5 candidates for 20 tasks (17.5 for hopper-m, hopper-m-r and all Mujoco random datasets; 7.5 for other Mujoco datasets; 5 for antmaze-u; 10 for antmaze-u-d; 70 for other AntMaze tasks). This is acceptable in offline policy tuning following \citep{kumar2019stabilizing, brandfonbrener2021offline}. 
To ensure numerical stability, we clip the Lagrangian multiplier $\lambda$ to $[1, 100]$. We also find a large initial $\lambda$ enables stable training for Mujoco tasks but slows down AntMaze training. Therefore, the initial value of Lagrangian multiplier $\lambda$ is 5 for Mujoco and 1 for AntMaze tasks, respectively. 

\subsection{Pseudocode of DOGE}
\label{pseudocode}

The pseudocode of DOGE is listed in Algorithm \ref{algorithm}. Changes we make based on TD3 \citep{fujimoto2018addressing} are marked in red. The only modification is the training process of the additional state-conditioned distance function and the constrained actor update. We can perform 1M training steps on one GTX 3080Ti GPU in less than 50min for Mujoco tasks and 1h 40min for AntMaze tasks.

\begin{algorithm}
\caption{ Our implementation for DOGE}\label{alg:cap}
\begin{algorithmic}[1]
\Require Dataset $\mathcal{D}$. State-conditioned distance network $g_{\psi}$. Policy network $\pi_{\phi}$ and target policy network $\pi_{\phi'}$ with $\phi'\leftarrow\phi$. Value network $Q_{\theta_i}, i=1,2$ and target value network $Q_{\theta_i'}, i=1,2$ with $\theta_i'\leftarrow\theta_i$. State-conditioned distance network training steps $N_{g}$. Policy update frequency $m$. 
\For {\texttt{$t=0,1,...,M$}}
    \State {Sample mini-batch transitions $\{(s_i, a_i,r_i, s_i')\}\sim\mathcal{D}$}
    \If{$t < N_{g}$}
        \State {\color{red} \textbf{State-Conditioned Distance Function Update:}} Update $\psi$ as Eq. (\ref{practical_distance}) shows.
    \EndIf
    \State {\textbf{Critic Update:}} Update $\theta_i$ using policy evaluation method in TD3.
    \If{$t$ mod $m =0$}
        \State {\color{red} \textbf{Constrained Actor Update:}} Update $\phi, \lambda$ via Eq. (\ref{equ:TD3_learning}).
        \State {Update target networks: $\theta_i'\leftarrow \tau\theta_i+(1-\tau)\theta_i'$, $\phi'\leftarrow\tau\phi+(1-\tau)\phi$}
    \EndIf
\EndFor
\end{algorithmic}
\label{algorithm}
\end{algorithm}


\subsection{Experiment Setup for the Impact of Data Geometry on Deep \textit{Q} Functions}
\label{subsec:toy_setup}
We consider an one-dimensional random walk task with a fixed-horizon (50 steps for each episode), where agents at each step can move in the range of $[-1, +1]$  and the state space is a straight ranges from $[-10, 10]$. The destination is located at $s=10$. The closer the distance to the destination, the larger the reward that the agent can get. The discount factor $\gamma=0.9$. The reward function is defined as follows:
\begin{equation}
    r=\frac{400-(s'-10)^2}{400}
    \label{reward}
\end{equation} 

We generate offline datasets with different geometry and train the agent based on these datasets. Each synthetic dataset consists of 200 transition steps. We get the approximated \textit{Q} value $\hat{Q}$ by training TD3 for $1e+4$ steps each dataset. The learning rate of Actor and Critic networks are both $10^{-3}$. The other implementation details are the same as the implementation of original TD3 \citep{fujimoto2018addressing}. The true \textit{Q} function can be get by Monte-Carlo estimation. We find that the near-destination states hold higher approximation error than that far away from the destination due to the scale of true \textit{Q} value near the destination is large. To alleviate the impact of \textit{Q} value scale on the approximation error, we define the relative approximation error as follows:
\begin{equation}
    \hat{\epsilon}(s,a) = \epsilon(s,a)-\min_{a}\epsilon(s,a)
\end{equation}
where, $\epsilon(s,a)=\hat{Q}(s,a)-Q(s,a)$. The relative error in the above definition eliminates the effect of different states on the approximation error and can capture the over-estimation error that we care about. We plot the relative approximation error of deep \textit{Q} functions with different random seeds and data geometry in Fig. \ref{fig:toycase}.






\section{Additional Experiment Results}\label{sec:extra_experiment_results}
\subsection{Comparison of Generalization Ability}
\label{comparision_of_Generalization_Ability}

In the well known AntMaze task in D4RL benchmark \citep{fu2020d4rl}, where an ant needs to navigate from the start to the destination in a large maze. The trajectories with coordinates at $x\times y\in [4, 13]\times[7, 9] \cup [11.5, 20.5]\times[11, 13]$ in AntMaze medium tasks and $x\times y\in[10.5, 21]\times[7, 9] \cup [19, 29.5]\times[15, 17]$ in AntMaze large tasks are clipped, as Fig. \ref{fig:appendix_maze} shows. 

\begin{figure}[htb]
    \centering
    
    \subfloat[Modified Medium AntMaze]{\includegraphics[width=0.32\textwidth]{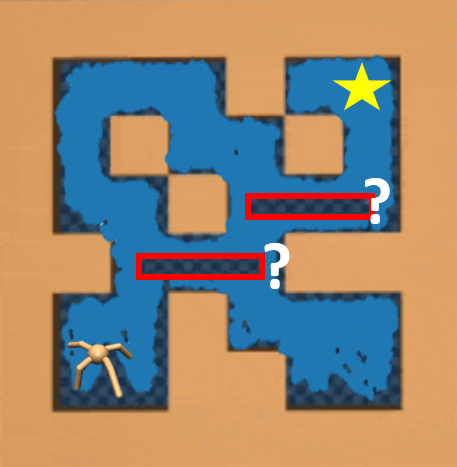}
    \label{fig:appendix_maze_medium}
    }
    \hspace{10mm}
    \subfloat[Modified Large AntMaze]{\includegraphics[width=0.33\textwidth]{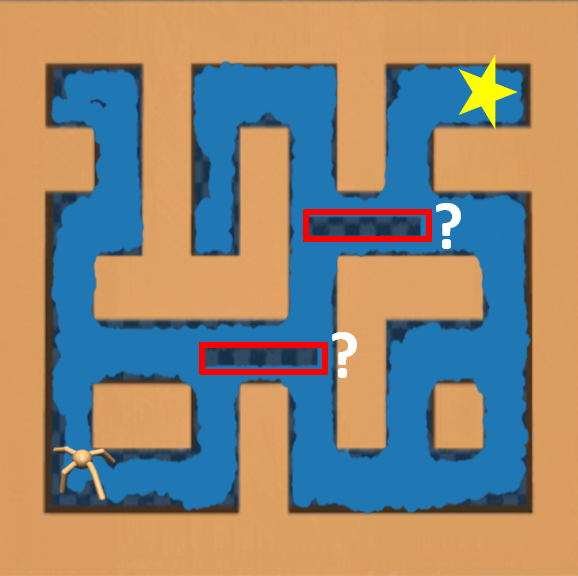}
    \label{fig:appendix_maze_large}
    }
    \caption{The trajectories in the offline dataset are visualized as blue. Data transitions of two small areas on the crtical pathways to the destination have been removed (red box).}
    \label{fig:appendix_maze}
\end{figure}

These clipped data counts only about one-tenth of the original dataset and lies in the close proximity of the original trajectories. Under these modified datasets, simply relaying on ''stitching'' data transitions is not enough to solve the navigation problems. We evaluate representative policy constraint method (TD3+BC \citep{fujimoto2021minimalist}), value regularization method (CQL \citep{kumar2020conservative}), in-sample learning method (IQL \citep{iql}) and DOGE (our method) on these modified datasets. The evaluation results before and after clipping the trajectories are listed in Table \ref{tab:maze_large_appendix}. The learning curves for the modified AntMaze medium and AntMaze large tasks are listed in Fig. \ref{fig:missing_area_learning_curve_medium} and Fig. \ref{fig:missing_area_large}.


Observe in Table \ref{tab:maze_large_appendix} that existing offline RL methods fail miserably and suffer from severe performance drops. By contrast, DOGE maintains competitive performance after the modification of the dataset and shows good generalization ability on unknown areas.

Apart from above experiments, we also evaluate DOGE when removing only one area: $[10.5, 21]\times[7,9], [10.5,21]\times[7,9]$ for AntMaze-large datasets and $[4, 13]\times[7, 9], [4,13]\times[7,9]$ for AntMaze-medium datasets. The final results can be seen in Table \ref{tab:generalization_ablation}.

\begin{table}[htb]
    \centering
    \caption{ The performance drop after removing the data at the only way to destination. }
    \begin{tabular}{llllll}
        \toprule
        \multicolumn{2}{c}{Datatset type}       & TD3+BC    & CQL                   & IQL               & DOGE(ours)\\
        \midrule
        \multirow{2}{*}{antmaze-m-p-v2} &  full data    & 0 & 65.2$\pm$4.8          & 70.4$\pm$5.3      & \textbf{80.6$\pm$6.5}\\
        ~                               &  miss data    & 0 & 10.7$\pm$18.4         & 10.2$\pm$2.2      & \textbf{33.2$\pm$27.3}\\
        \midrule
         Performance drop $\downarrow$   & ~             & - & 84$\%$                & 86$\%$            & \textbf{59$\%$}\\
        \midrule
        \multirow{2}{*}{antmaze-m-d-v2} &  full data    & 0 & 54.0$\pm$11.7         & 74.6$\pm$3.2      & \textbf{77.6$\pm$6.1}\\
        ~                               &  miss data    & 0 & 8.5$\pm$5.3           & 7.6$\pm$5.7       & \textbf{40.2$\pm$32.9}\\
        \midrule
        Performance drop $\downarrow$   & ~             & - & 84$\%$                & 90$\%$            & \textbf{48$\%$}\\
        \midrule
        \multirow{2}{*}{antmaze-l-p-v2} &  full data    & 0 & 18.8$\pm$15.3         & 43.5$\pm$4.5      & \textbf{48.2$\pm$8.1}\\
        ~                               &  miss data    & 0 & 0                     & 1.0$\pm$0.7       & \textbf{22.4$\pm$15.9}\\
        \midrule
        Performance drop $\downarrow$   & ~             & - & 100$\%$               & 98$\%$            & \textbf{54$\%$}\\
        \midrule
        \multirow{2}{*}{antmaze-l-d-v2} &  full data    & 0 & 31.6$\pm$9.5          & \textbf{45.6$\pm$7.6}      & 36.4$\pm$9.1\\
        ~                               & miss data     & 0 & 0                     & 5.2$\pm$3.1       & \textbf{14.6$\pm$11.1}\\        
        \midrule
        Performance drop $\downarrow$   & ~             & - & 100$\%$               & 89$\%$            &\textbf{60$\%$} \\
        \bottomrule
    \end{tabular}
    \label{tab:maze_large_appendix}
\end{table}

\begin{figure}[h]
    \centering
    \includegraphics[width=0.99\textwidth]{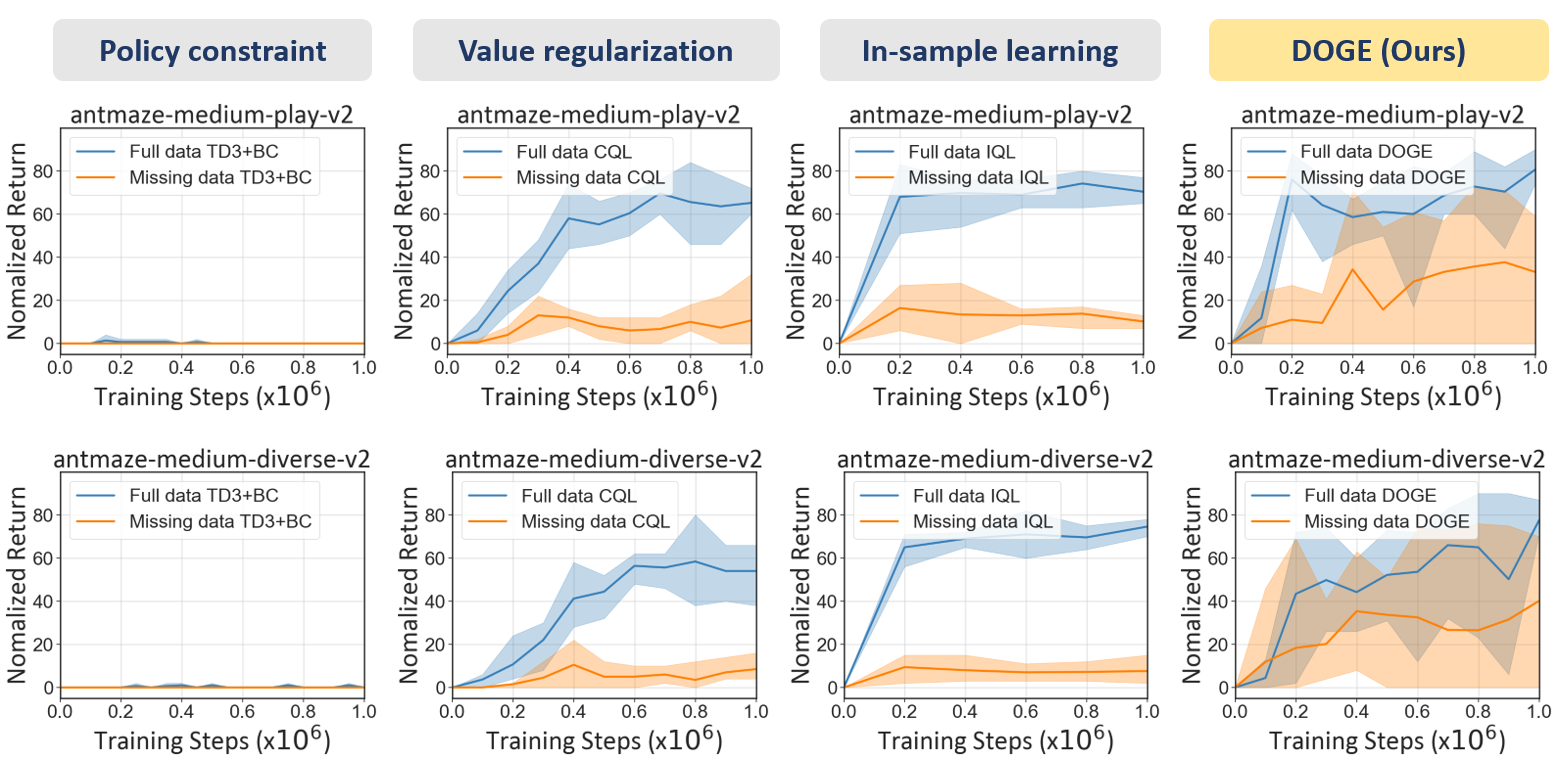}
    \caption{Evaluation on TD3+BC\citep{fujimoto2021minimalist}, CQL\citep{kumar2020conservative}, IQL\citep{iql}, and DOGE (ours) before and after removing the data shown in Fig.\ref{fig:appendix_maze_medium} for AntMaze medium tasks.}
    \label{fig:missing_area_learning_curve_medium}
\end{figure}

\begin{table}[h]
    \centering
    \caption{Ablation for DOGE generalization with different removal areas.}
    \begin{tabular}{lllll}
    \toprule
        &Dataset	&Full dataset	&One removal	&Two removal \\
    \midrule
        &antmaze-m-p-v2	&80.6±6.5	&62.3±7.5	&33.2±27.3 \\
        &antmaze-m-d-v2	&77.6±6.1	&41.3±42.8	&40.2±32.9 \\
        &antmaze-l-p-v2	&48.2±8.1	&26.4±19.4	&22.4±15.9 \\
        &antmaze-l-d-v2	&36.4±9.1	&12.3±4.2	&14.6±11.1 \\
    \midrule
        &Total score	&242.8±29.8	&142.3±73.9	&110.4±87.2 \\
    \bottomrule
    \end{tabular}
    \label{tab:generalization_ablation}
\end{table}
\newpage

\newpage
\subsection{Additional Comparison with TD3+BC}
{In this section, we further demonstrate the superiority of DOGE over our most related practical work TD3+BC~\citep{fujimoto2021minimalist}. One can find that the biggest difference between DOGE and TD3+BC lies in the policy constraint used for policy optimization:}

{- \textbf{TD3+BC}: constrains the policy to minimize the MSE BC loss.}

{- \textbf{DOGE}: constrains the policy to minimize the learned state-conditioned distance function $g(s,a)$.}

{As discussed in Section~\ref{subsec:Training and Property of Distance Function}, the learned distance function $g(s,a)$ can capture the global geometric information of the offline dataset, while the MSE BC loss can only provide local sample-to-sample regularization, which may be noisy, especially in datasets that contain low-quality samples. 
Taking Figure~\ref{fig:illustration_td3_bc} as an illustration, under strict BC constraint, policy learning on noisy low-quality samples may provide contradicting learning signals to near-optimal samples, which can cause inferior policy performance and unstable training process.
By contrast, the state-conditioned distance function $g(s,a)$ in DOGE is trained on the whole dataset and hence brings global geometric information, which is far more informative and stable as compared with the MSE BC loss.}

\begin{figure}[h]
    \centering
    \includegraphics[width=0.75\textwidth]{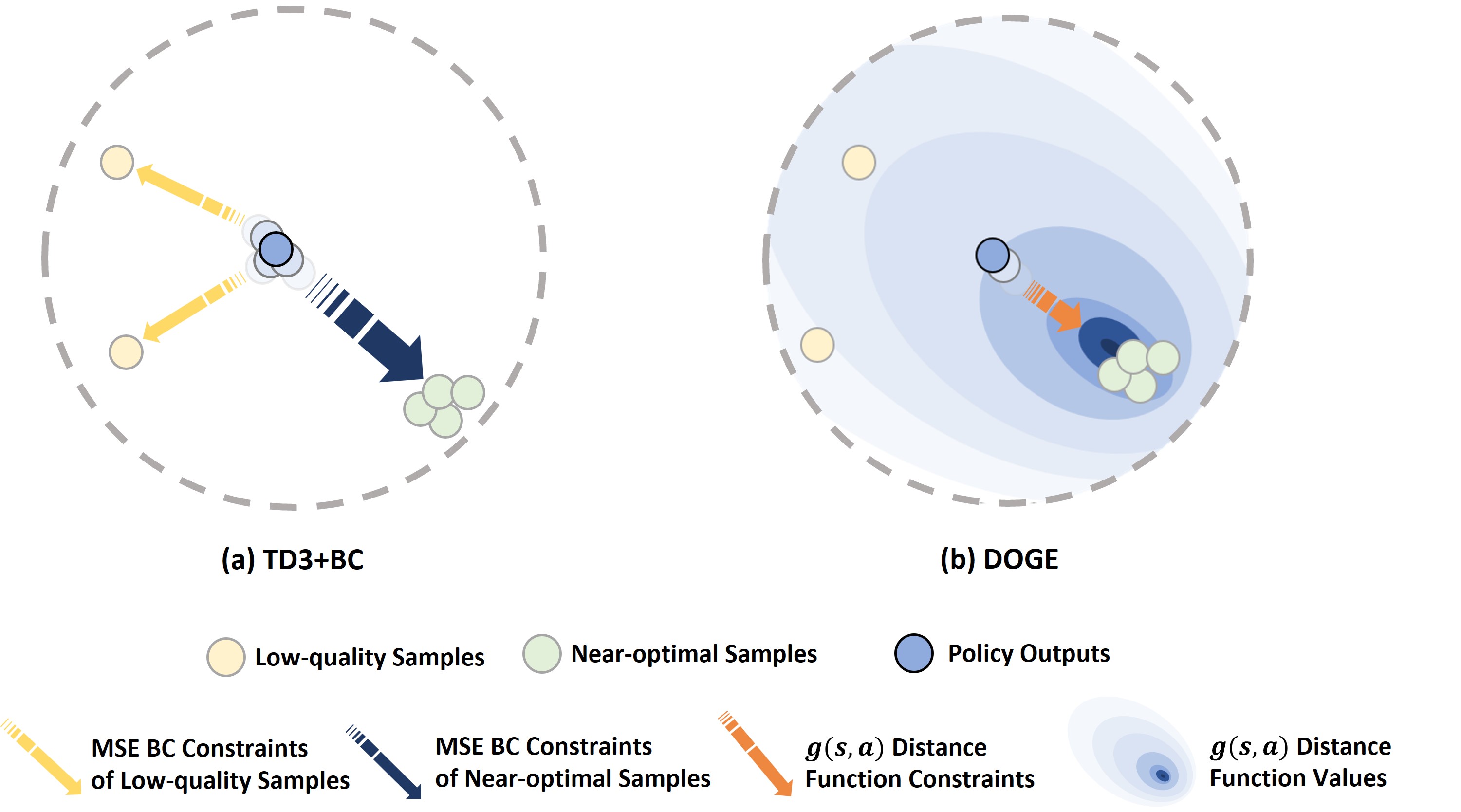}
    \caption{\small {Illustrations of the differences between (a) the MSE BC constraint of TD3+BC and (b) the state-conditioned distance function constraint of DOGE. In (a), the MSE BC constraint in TD3+BC blindly enforces the imitation behavior on any data samples, which may lead to an inferior policy in the presence of noisy low-quality samples. In (b), the state-conditioned distance function $g(s,a)$ can provide more informative global dataset geometry information to guide the stable learning of the policy.}}
    \label{fig:illustration_td3_bc}
\end{figure}

{To better illustrate the superiority of DOGE over TD3+BC, we add extra comparative experiments with TD3+BC on a new set of mixed-quality datasets. In halfcheetah-random dataset, we add different proportions ($1\%$ to $20\%$) of the near-optimal halfcheetah-medium-expert dataset to form new mixed datasets and evaluate how TD3+BC and DOGE perform. See Figure~\ref{fig:compare_td3+bc} for detailed results.}


\begin{figure}[htb]
    \centering
    \includegraphics[width=0.19\textwidth]{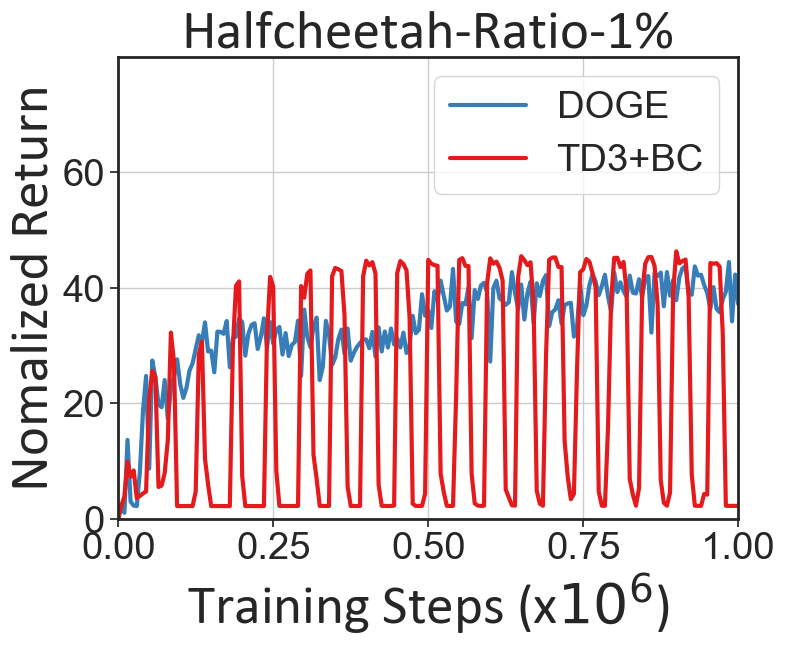}
    \includegraphics[width=0.19\textwidth]{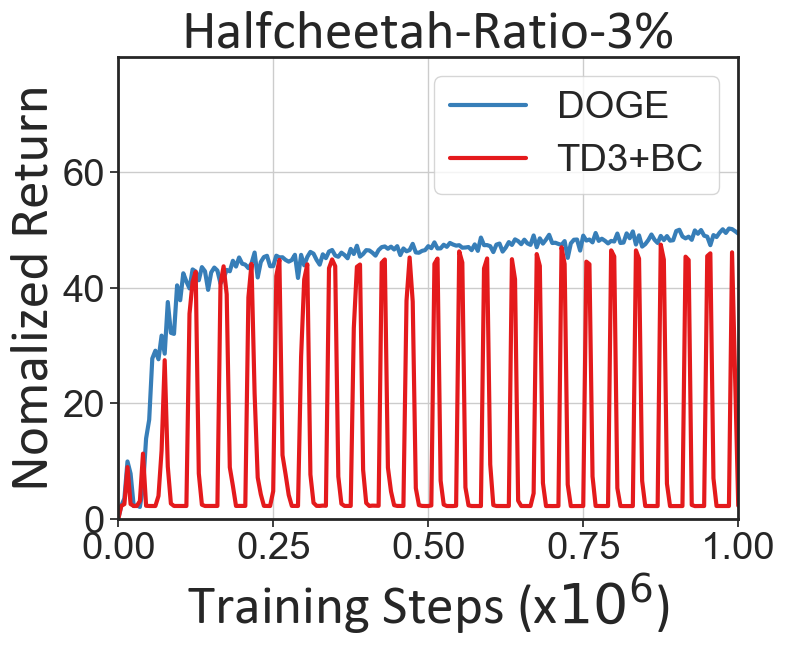}
    \includegraphics[width=0.19\textwidth]{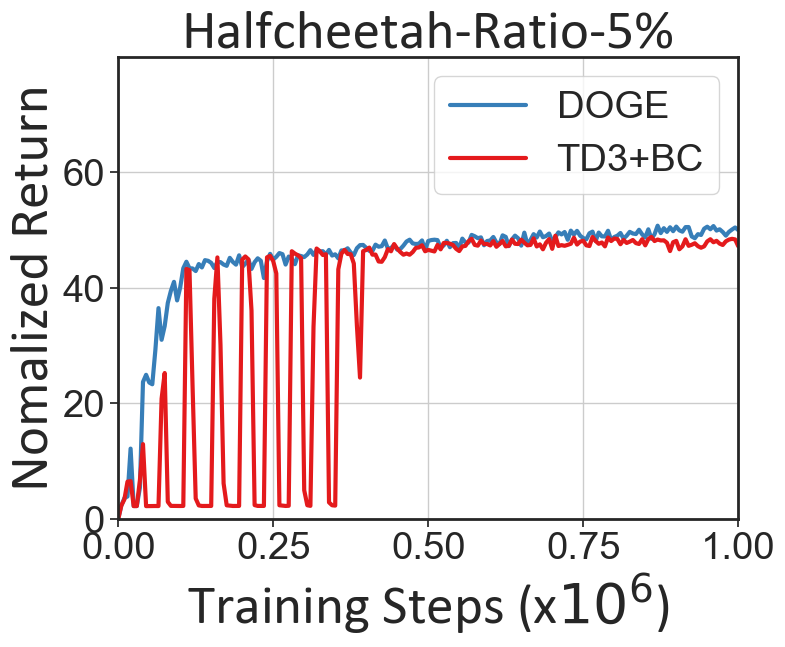}
    \includegraphics[width=0.19\textwidth]{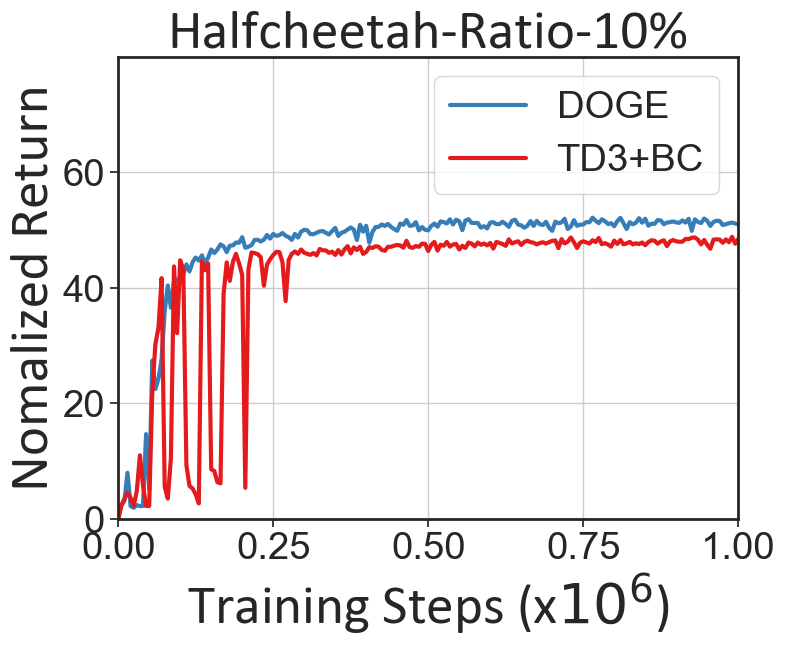}
    \includegraphics[width=0.19\textwidth]{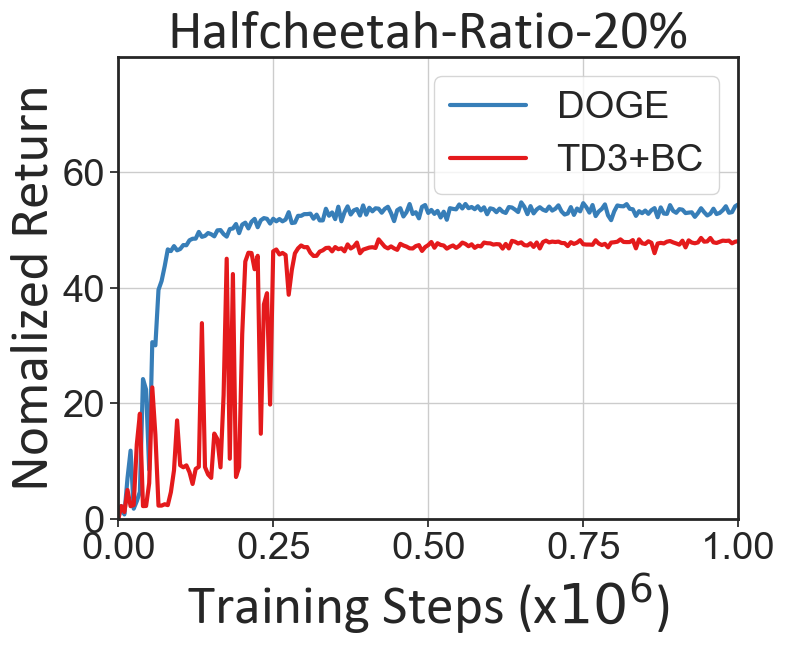}
    \caption{\small {Comparisons between DOGE and TD3+BC on mixed datasets with different proportions of halfcheetah-medium-expert dataset added into halfcheetah-random dataset. Ratio-$1\%$ means $1\%$ medium-expert dataset is added into the original halfcheetah-random dataset. TD3+BC suffers severe oscillation and training instability, while DOGE enjoys stable training processes and substantial performance gains.}}
    \label{fig:compare_td3+bc}
\end{figure}

{Figure~\ref{fig:compare_td3+bc} shows that DOGE enjoys more performance gains when the random dataset involves near-optimal data, while TD3+BC is heavily influenced by the local information from the larger proportion of the low-quality random data. Moreover, TD3+BC suffers from severe oscillation and training instability, while DOGE enjoys a stable training process due to the use of the more informative state-conditioned distance constraint that captures the overall dataset geometry.}


\subsection{Comparison with Uncertainty-based methods}
{We also compare DOGE with SOTA uncertainty-based offline RL approaches, including EDAC~\citep{an2021uncertainty} and PBRL~\citep{bai2021pessimistic} are more complex D4RL AntMaze tasks. The final results are presented in Table~\ref{tab:compare_w_unc}. Table~\ref{tab:compare_w_unc} shows that the SOTA uncertainty-based methods are unable to provide reasonable performance on the difficult Antmaze tasks, despite that they can achieve good performance on simpler MuJoCo tasks. A similar finding is also reported in a recent offline RL study~\citep{anonymous2023lightweight}}. 

{In practical implementation of EDAC and PBRL, to obtain relatively accurate uncertainty measures and achieve reasonable performance, these methods typically need dozens of ensemble Q-networks, which can be quite costly and inefficient. Moreover, heavy hyperparameter tuning is also required for them to obtain the best performance. In contrast, our method quantifies the generalization ability of the Q-function from the perspective of dataset geometry and is trained using a simple regression loss in Eq.~(\ref{equ:train distance}), which enjoys better training stability and simplicity.}

\begin{table}[htb]
    \centering
    \caption{{Average normalized scores over 5 seeds on Antmaze tasks}}
    \begin{tabular}{lllll}
    \toprule
        &Dataset	&EDAC	&PBRL	&DOGE(Ours) \\
    \midrule
        &antmaze-u-v2	&0	&0	&\textbf{97.0±1.8} \\
        &antmaze-u-p-v2	&0	&0	&\textbf{63.5±9.3} \\
        &antmaze-m-p-v2	&0	&0	&\textbf{80.6±6.5} \\
        &antmaze-m-d-v2	&0	&0	&\textbf{77.6±6.1} \\
        &antmaze-l-p-v2	&0	&0	&\textbf{48.2±8.1} \\
        &antmaze-l-d-v2	&0	&0	&\textbf{36.4±9.1} \\
    \bottomrule
    \end{tabular}
    \label{tab:compare_w_unc}
\end{table}

\subsection{Additional analysis on distance function}
\label{subsec:distance_loss}
{We report the learning curves of the state-conditioned distance function $g(s,a)$ trained on different datasets (including hopper-m-v2, halfcheetah-m-v2, and walker2d-m-v2 in Figure~\ref{fig:distance_loss}. Our proposed state-conditioned distance function is learned through a simple regression task~(Eq. (\ref{equ:train distance})), which is very easy to train. Figure~\ref{fig:distance_loss} shows that it reaches convergence within only 1K training steps on D4RL MuJoCo medium datasets.}

{We also change the network configurations (i.e., number of hidden layers and hidden units) of the state-conditioned distance function $g(s,a)$ to investigate how the expressivity of $g$ influences the performance of the policy.Table~\ref{tab:g_complexity} shows that DOGE achieves similar performance across different $g$ network configurations, indicating that DOGE is robust to model complexity and expressivity of the state-conditioned distance function.}

\begin{figure}[H]
    \centering
    \includegraphics[width=0.3\textwidth]{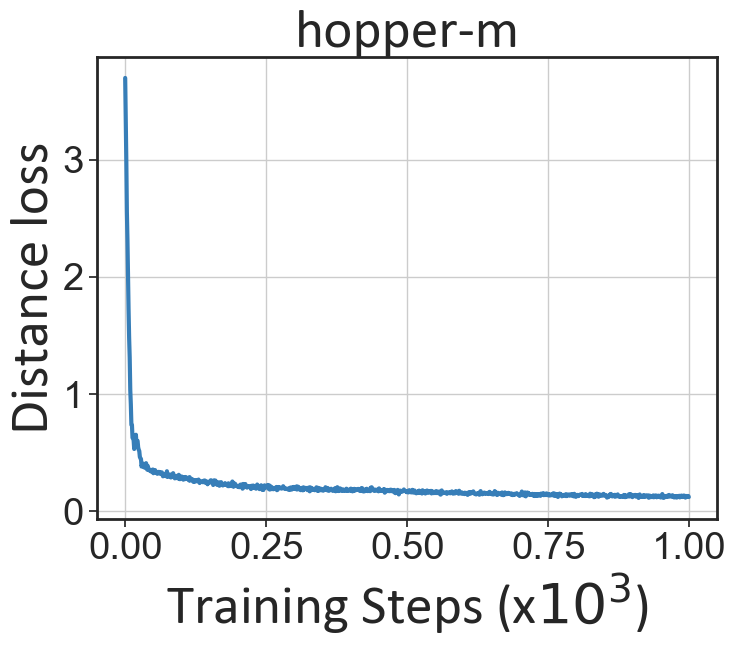}
    \hfill
    \includegraphics[width=0.3\textwidth]{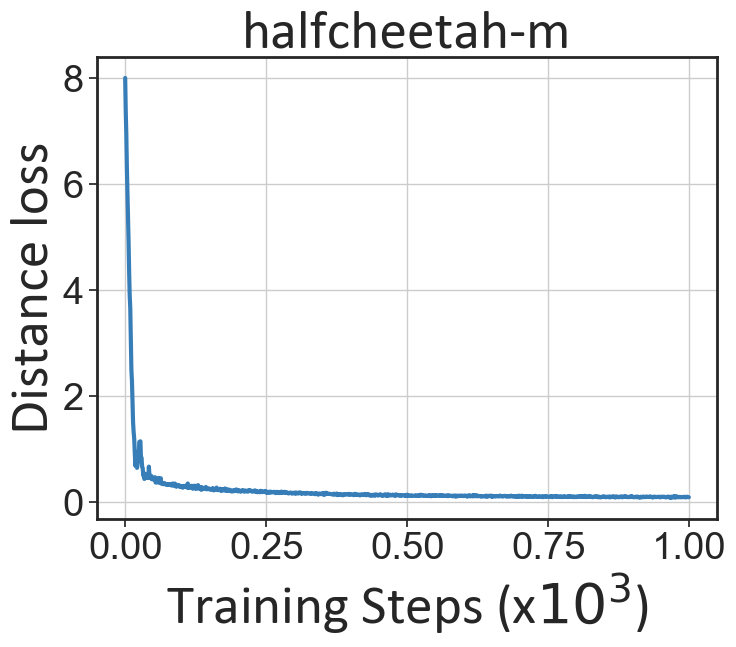}
    \hfill
    \includegraphics[width=0.3\textwidth]{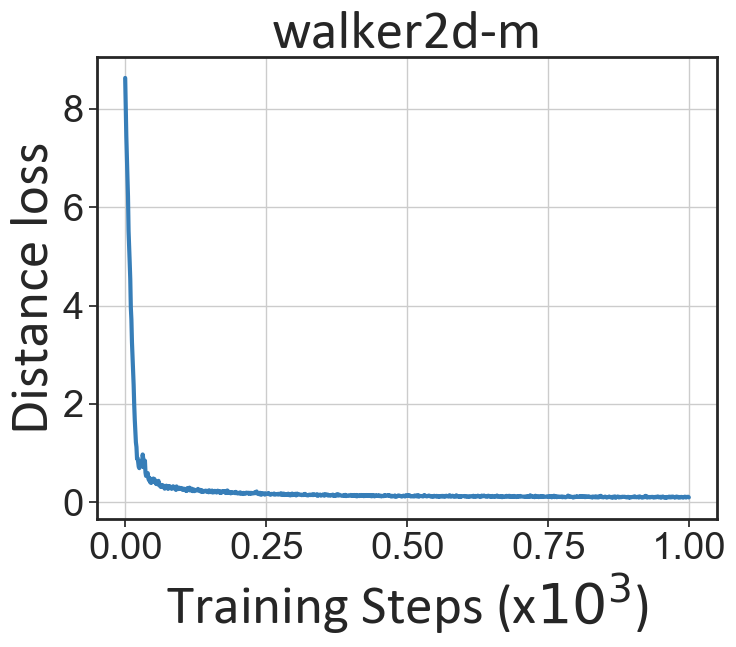}
    \caption{{Learning curves of the state-conditioned distance function $g(s,a)$}}
    \label{fig:distance_loss}
\end{figure}
\begin{table}[h]
    \caption{{Normalized scores of DOGE trained on distance functions with different network configurations. [128, 128] means $g$ network has 2 hidden layers with 128 units. [256, 256, 256] means 3 hidden layers with 256 units.}}
    \centering
    \begin{tabular}{lccc}
    \toprule
    Dataset  & [128, 128] & [256, 256] & [256, 256, 256] \\
    \midrule
    hopper-m & 99.4 & 101.4 & 98.6\\
    halfcheetah-m &47.4 &46.9 &45.3 \\
    walker2d-m &85.3 &86.4 & 86.8 \\
    \bottomrule
    \end{tabular}
    \label{tab:g_complexity}
\end{table}

\subsection{Additional Experiments of the Impact of Data Geometry on Deep \textit{Q} Functions}
    We run several experiments with different random seeds (see Figure \ref{fig:toycase}). Although the approximation error pattern of different random seeds is not the same, they all perform in the same manner that deep \textit{Q} functions produce relatively low approximation error inside the convex hull of training data. We refer to this phenomenon as \textbf{deep \textit{Q} functions interpolate well but struggle to extrapolate}.
    
\begin{figure}[htb]
    \centering
    \includegraphics[width=0.32\textwidth, height=3cm]{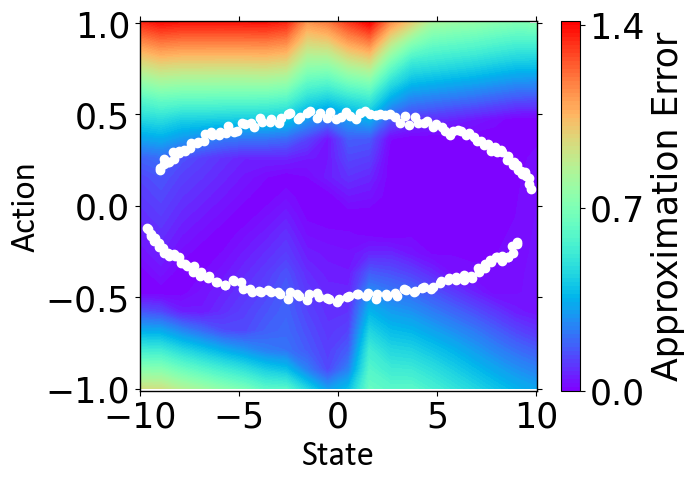}
    \includegraphics[width=0.32\textwidth, height=3cm]{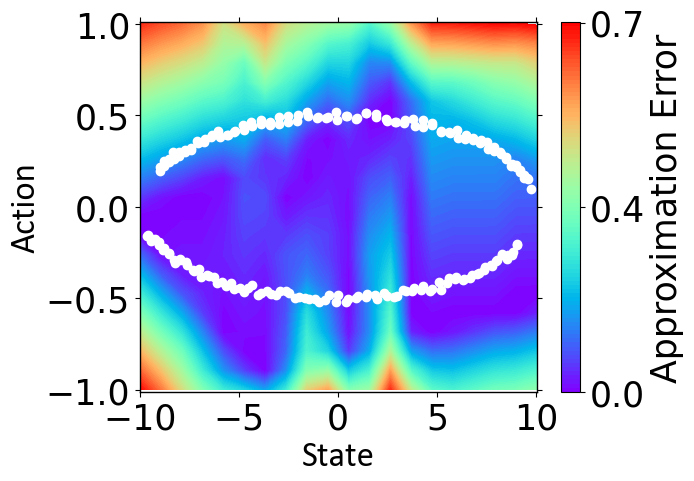} \includegraphics[width=0.32\textwidth, height=3cm]{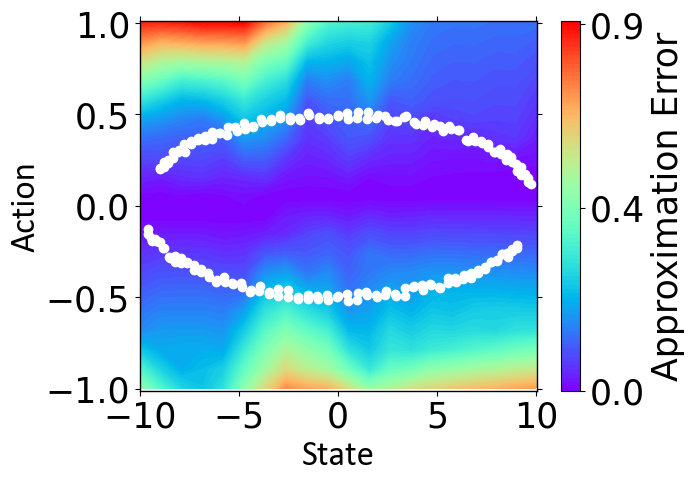}   

    \includegraphics[width=0.32\textwidth, height=3cm]{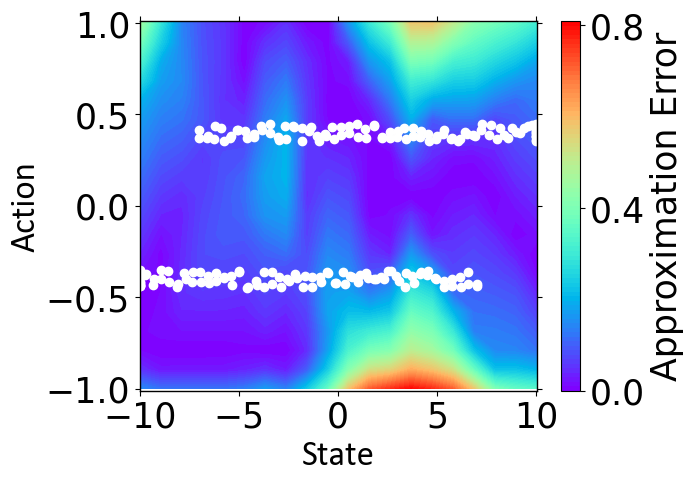}
    \includegraphics[width=0.32\textwidth, height=3cm]{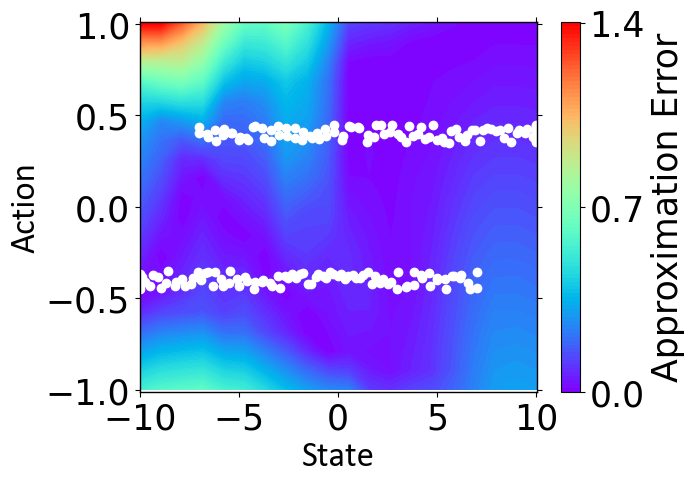}
    \includegraphics[width=0.32\textwidth, height=3cm]{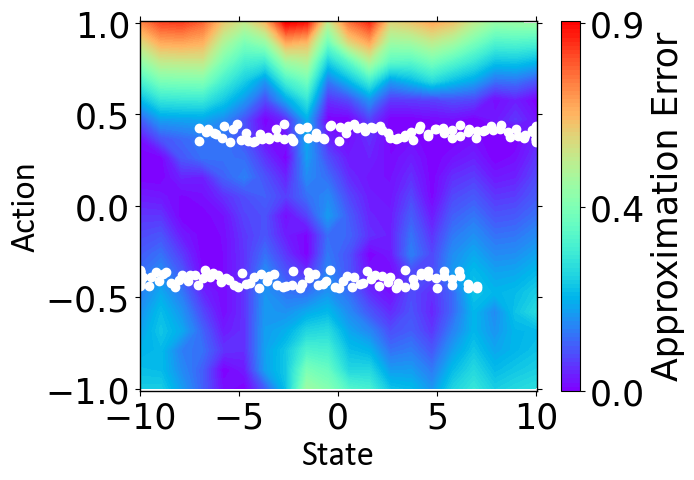}

    \includegraphics[width=0.32\textwidth, height=3cm]{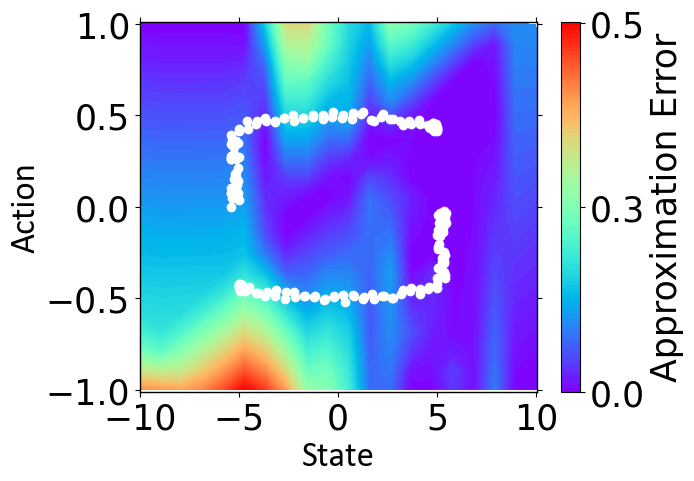}
    \includegraphics[width=0.32\textwidth, height=3cm]{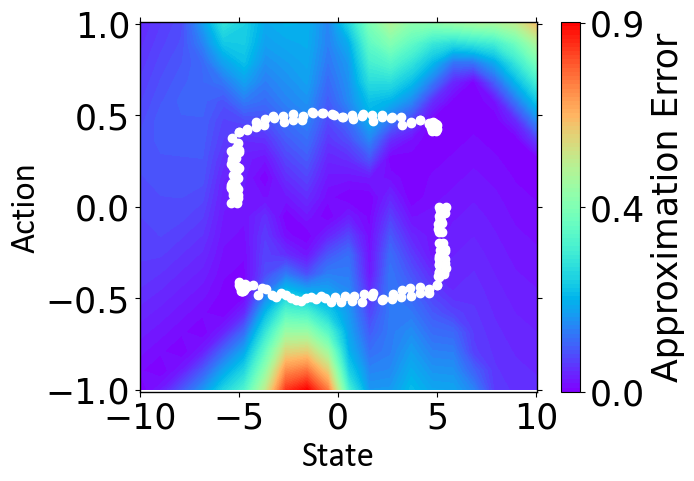}
    \includegraphics[width=0.32\textwidth, height=3cm]{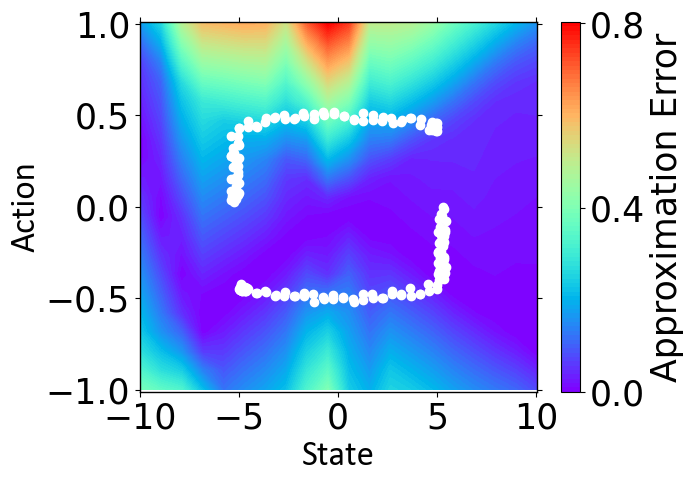}

    \caption{The figures above depict the effect of different data geometries on the final deep \textit{Q} functions approximation error. The training data are marked as white dots.}
    \label{fig:toycase}
\end{figure}

\newpage
\section{Ablations}
\label{sec:Ablation}

We conduct ablation studies on the effect of $\alpha$ in $\beta=\frac{\alpha}{\frac{1}{n}\sum_{i=1}^n|Q(s_i,a_i)|}$ (see Figure \ref{fig:alpha}), the non-parametric threshold $G$ in Eq. (\ref{equ: policy learning objective}) (see Figure \ref{fig:G}) and the non-parametric number of noise actions $N$ to train state-conditioned distance function (see Figure \ref{fig:N}) on the performance of the final algorithm. We also conduct ablation studies on the effect of $G$ on the Lagrangian multiplier $\lambda$ (see Figure \ref{fig:lambda}).

For $\alpha$, we add or subtract 2.5 to the original value.
For $N$, we choose $N=10, 20, 30$ to conduct experiments respectively.
For $G$, we choose $30\%$, $50\%$, $70\%$, $90\%$ and $100\%$ upper quantile of the distance value in mini-batch samples and the results can be found in Table \ref{tab:ablation on G}.

\begin{table}[h]
    \caption{Ablations on G with different quantile.}
    \centering
    \begin{tabular}{llllll}
    \toprule
        Dataset & $G=30\%$ & $G=50\%$ & $G=70\%$ & $G=90\%$ & $G=100\%$\\
    \midrule
        hopper-r-v2 & 19.8±0.3 & \textbf{21.1±12.6} & 15.5±13.5 &	17.6±12.2  &16.4±12.4\\
        halfcheetah-r-v2 & \textbf{19.4±0.6}	& 17.8±1.2 & 17.8±0.7 &	17.7±1.0 &17.7±0.8\\
        walker2d-r-v2 & \textbf{2.6±3.9}	& 0.9±2.4 &	2.2±2.6 & 1.8±3.3 &2.2±3.2\\
        hopper-m-v2	& 44.6±5.7&	98.6±2.1&	\textbf{99.4±0.4}&	91.5±9.9 &32.9±54.3\\
        halfcheetah-m-v2&	41.3±1.2&	45.3±0.6&	46.0±0.1&	46.0±0.8 &\textbf{46.1±0.5}\\
        walker2d-m-v2&	83.7±7.5&	86.8±0.8&	\textbf{87.3±1.6}&	69.9±28.9 &84.2±1.0\\
        hopper-m-r-v2&	51.5±11.2&	76.2±17.7&	\textbf{79.6±36.9}&	78.4±27.6&65.7±37.2\\
        halfcheetah-m-r-v2&	5.9±5.7&	42.8±0.6&	\textbf{43.2±0.1}&	42.2±0.8&42.0±0.6\\
        walker2d-m-r-v2&	28.3±14.3&	87.3±2.3&	\textbf{87.9±2.4}&	77.8±21.6&78.6±24.1\\
        hopper-m-e-v2&	61.7±10.4&	\textbf{102.7±5.2}&	82.8±5.8&	88.9±17.7&70.0±48.4\\
        halfcheetah-m-e-v2&	46.9±5.2&	\textbf{78.7±8.4}&	75.1±15.4&	73.5±13.6&69.9±8.7\\
        walker2d-m-e-v2&	110.5±0.7&	110.4±1.5&	\textbf{111.1±0.5}&	110.2±22.5&80.0±54.3\\
    \bottomrule
    \end{tabular}
    \label{tab:ablation on G}
\end{table}

Seen from Table \ref{tab:ablation on G} that using different $G$ for different tasks may achieve even better performance. Particularly, for some datasets with diverse data distributions that need to find good data from suboptimal data, a more tolerant quantile (e.g., $G=70\%$) can reasonably extend feasible region and increase the opportunity to find the optimal policy, such as hopper-m-r, halfcheetah-m-r, walker2d-m-r, hopper-m-e, halfcheetah-m-e. However, an overly relaxed quantile (e.g., $G=90\%$ and $100\%$) increases the risk of including problematic OOD actions in policy learning, causing performance drop due to value overestimation and high variance.

By contrast, an overly restrictive quantile such as $G=30\%$ can be over-conservative and cause significant constraints violations that impede policy learning, as constraints satisfaction is favored over the max-Q operation in most updates. This can be reflected in the additional results for the Lagrangian multiplier $\lambda$ (see Appendix E.2 for learning curves and Figure 11 for additional ablations), where $\lambda\rightarrow\infty$ for some tasks under $G=30\%$. This will cause the suboptimality gap ($\frac{1-\gamma}{2\gamma}\alpha(\Pi_{\mathcal{D}})$) in Theorem 3 to dominate the performance bound, leading to inferior policy.

As hyperparameter tuning in practical offline RL applications without online interaction is very difficult, to reduce the computational load, we set $G=50\%$ as default in a non-parametric manner, since it consistently achieves good performance, and is neither too conservative nor too aggressive for most tasks.

Observe in Figure \ref{fig:alpha} that DOGE maintains the similar performance with the changes of $\alpha$ on most of Mujoco tasks. At the same time, we also observe that the effect of $N$ on the experiment is not obvious. Compared with $N$ and $\alpha$, we find that $G$ has a more significant effect on the experimental results. Observe in Figure \ref{fig:G} that a small $G$ usually causes the policy set induced by DOGE to be too small to obtain near-optimal policy. By contrast, a large $G$ is not likely to cause excessive error accumulation and hence maintains relatively good performance.


In addition, the ablation studies show that our method is hyperparameter-robust and maintains good performance with changes in hyperparameters.

\begin{figure}[h]
    \centering
    \includegraphics[width=0.24\textwidth]{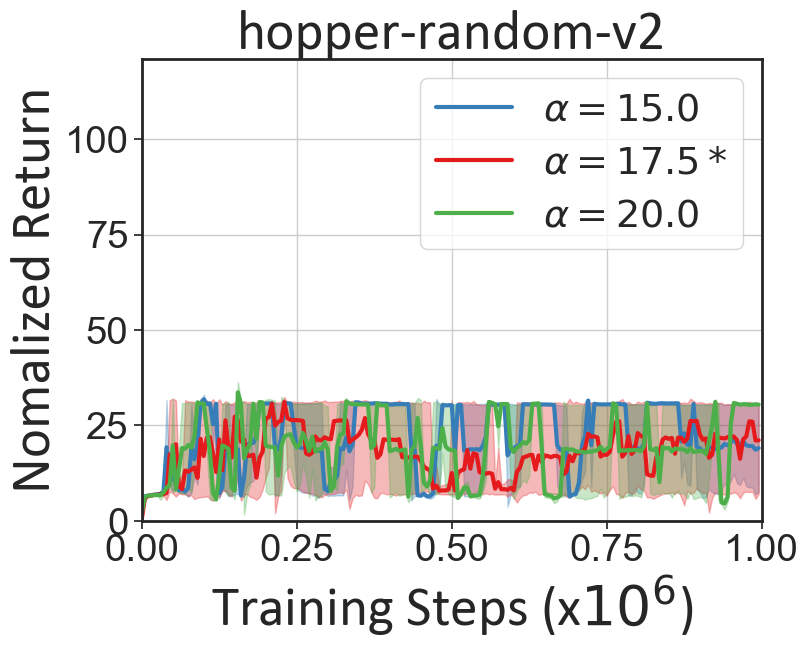}
    \includegraphics[width=0.24\textwidth]{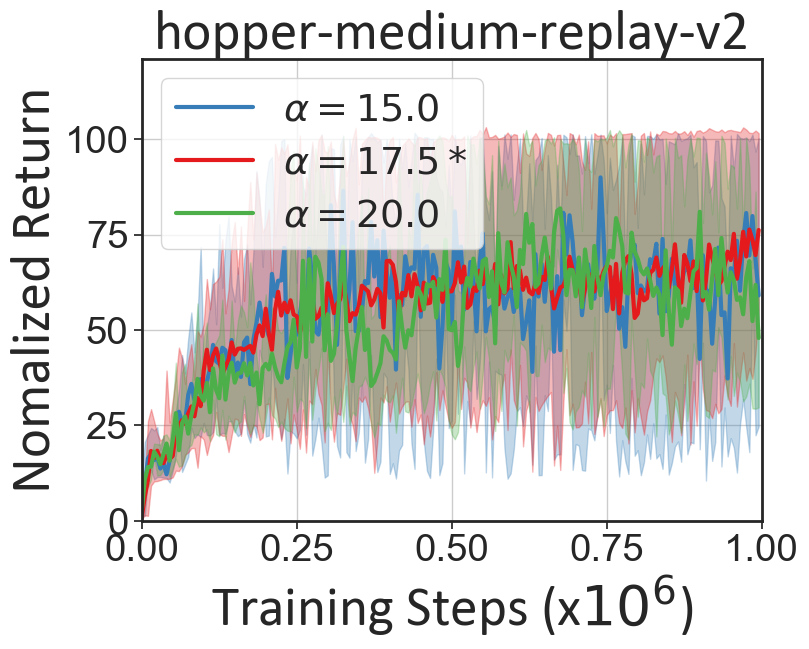}
    \includegraphics[width=0.24\textwidth]{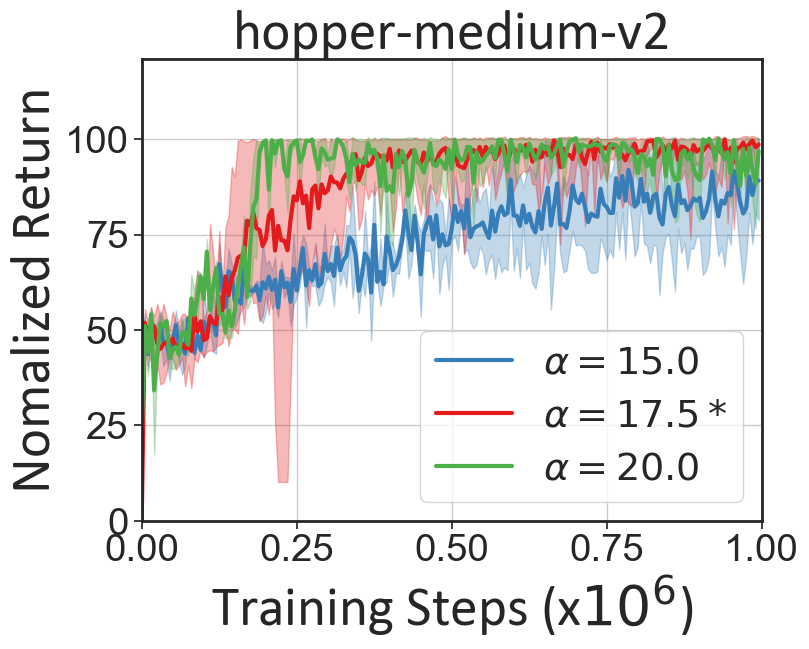}
    \includegraphics[width=0.24\textwidth]{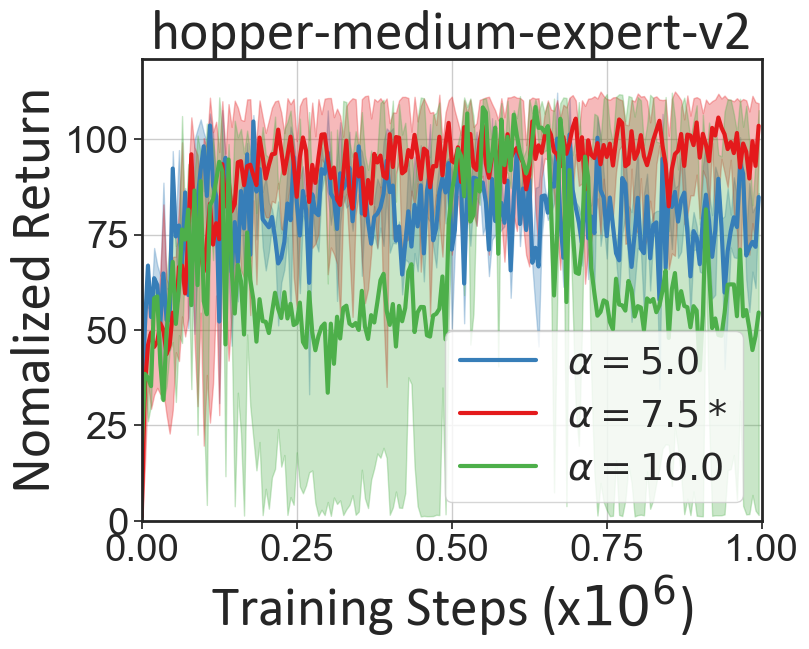}
    
    \includegraphics[width=0.24\textwidth]{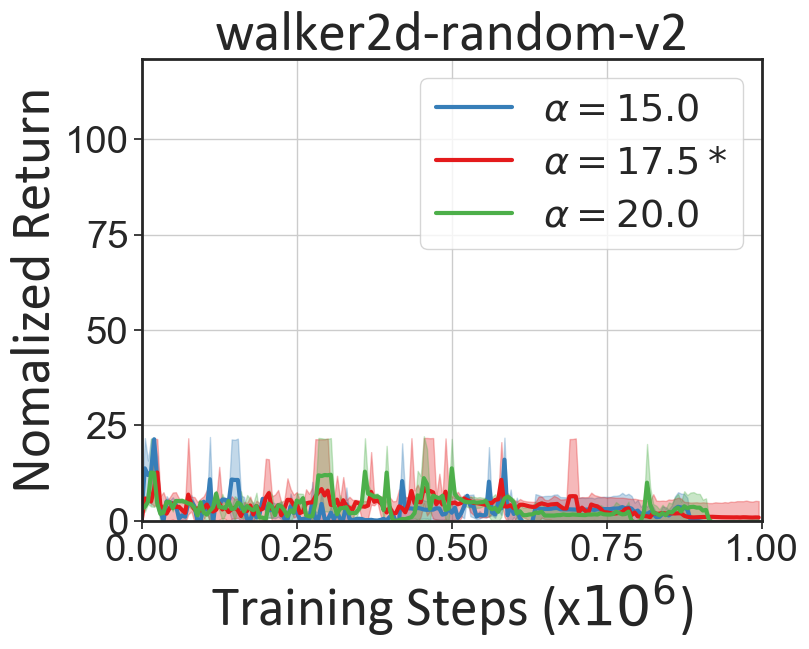}
    \includegraphics[width=0.24\textwidth]{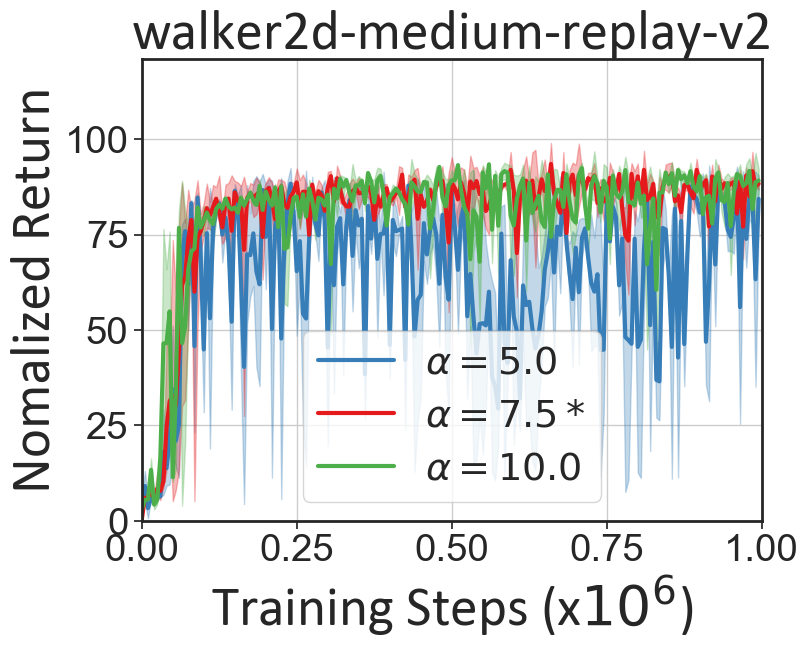}
    \includegraphics[width=0.24\textwidth]{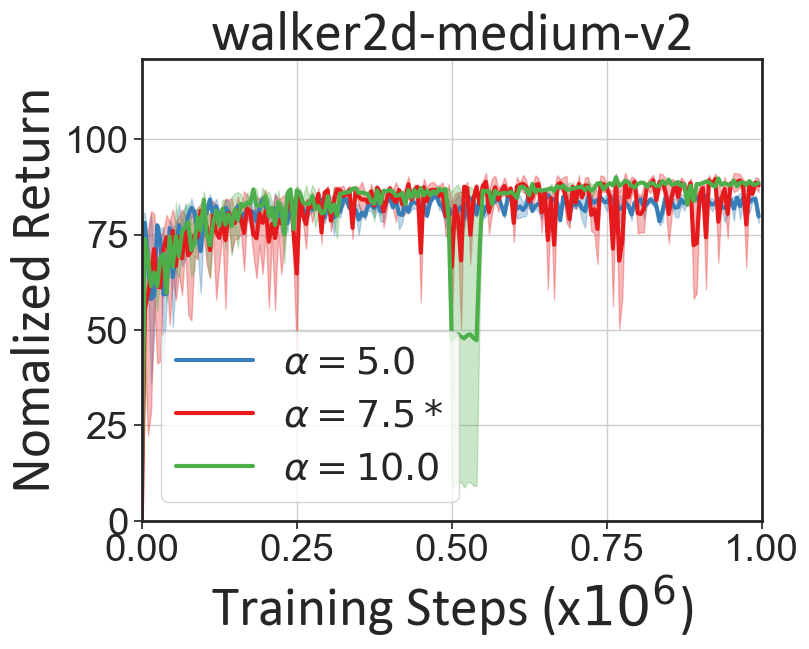}
    \includegraphics[width=0.24\textwidth]{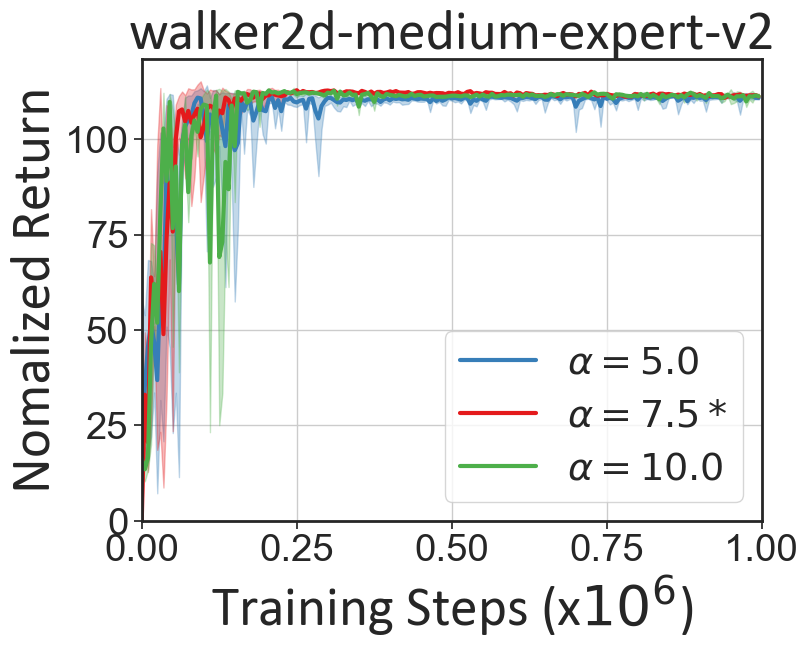}
    
    \includegraphics[width=0.24\textwidth]{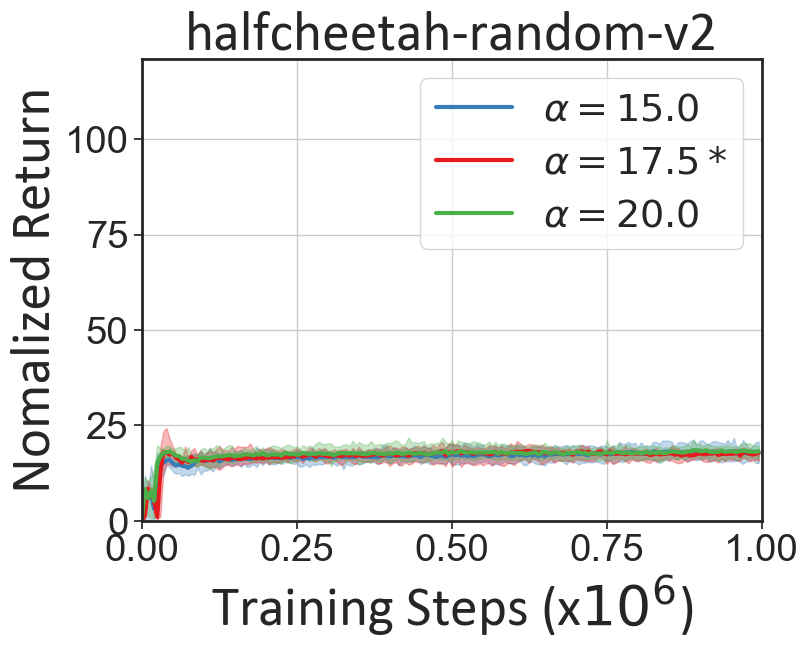}
    \includegraphics[width=0.24\textwidth]{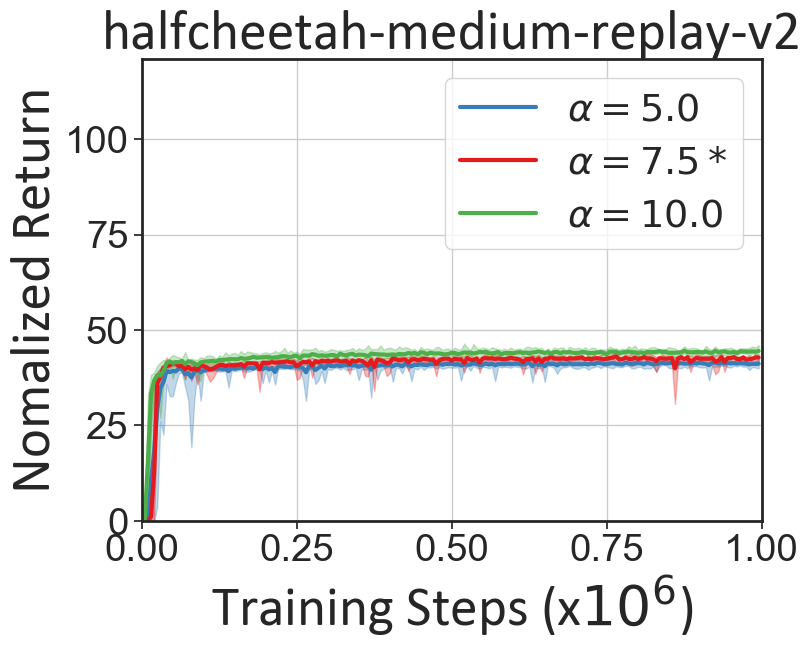}
    \includegraphics[width=0.24\textwidth]{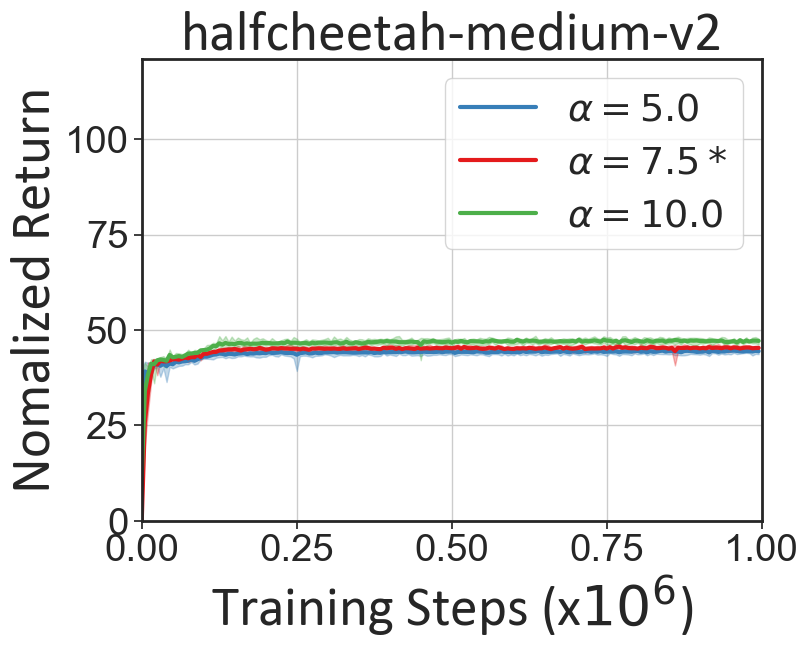}
    \includegraphics[width=0.24\textwidth]{ablation_alpha/halfcheetah-medium-expert-v2.png}
    \caption{Ablation for $\alpha$. Error bars indicate min and max over 5 seeds.}
    \label{fig:alpha}
\end{figure}
\hspace{5mm}

\begin{figure}[h]
    \centering
    \includegraphics[width=0.24\textwidth]{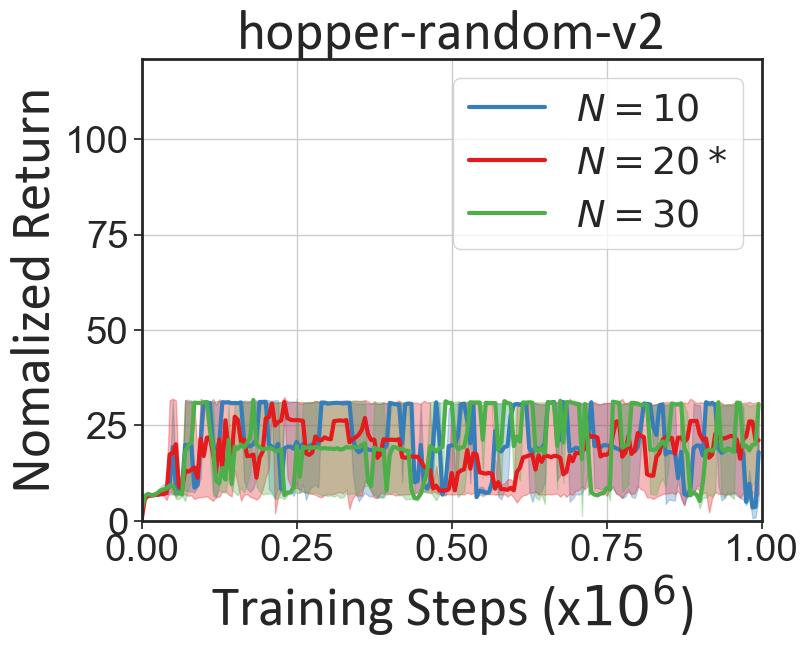}
    \includegraphics[width=0.24\textwidth]{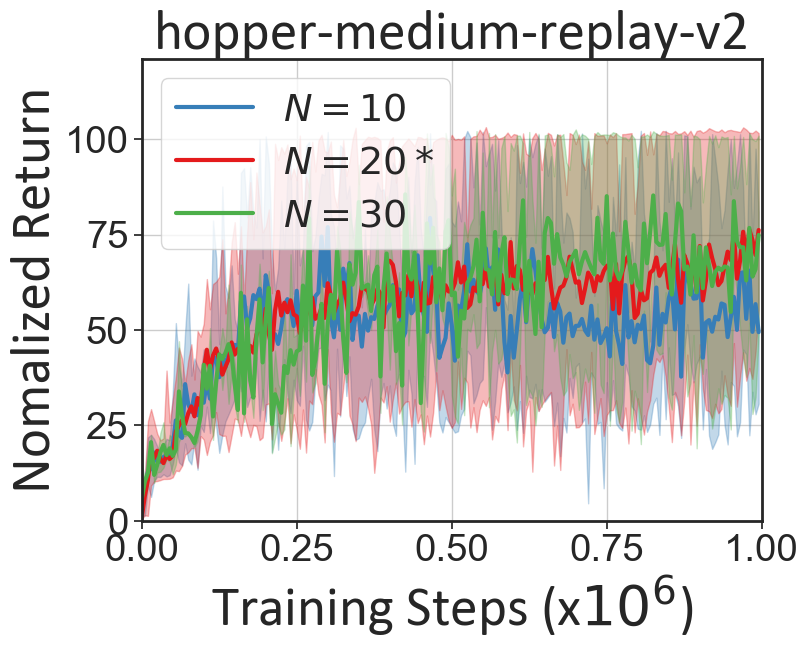}
    \includegraphics[width=0.24\textwidth]{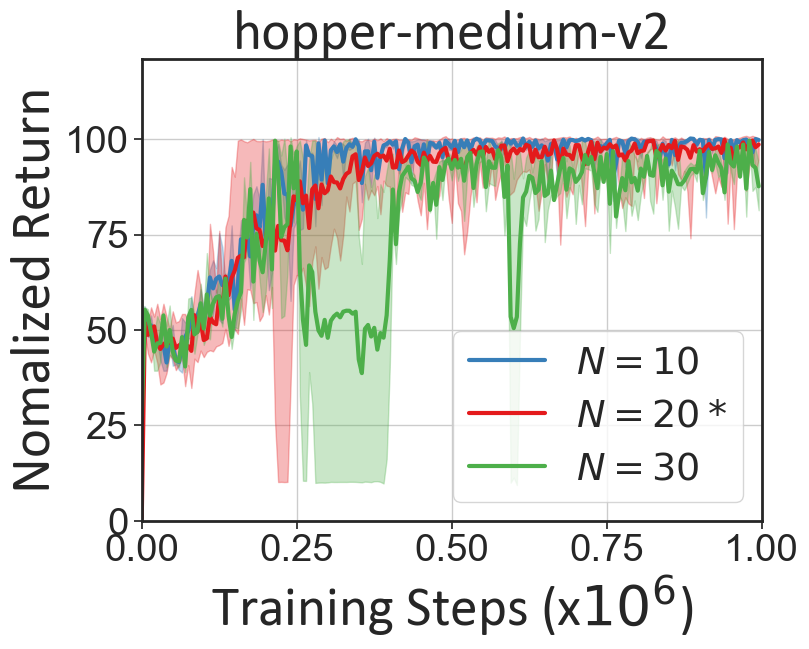}
    \includegraphics[width=0.24\textwidth]{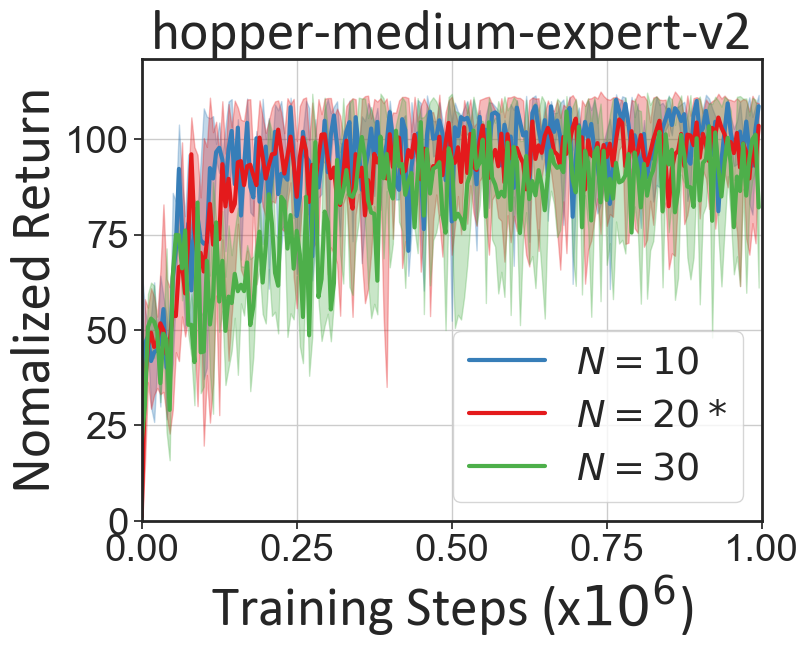}
    
    \includegraphics[width=0.24\textwidth]{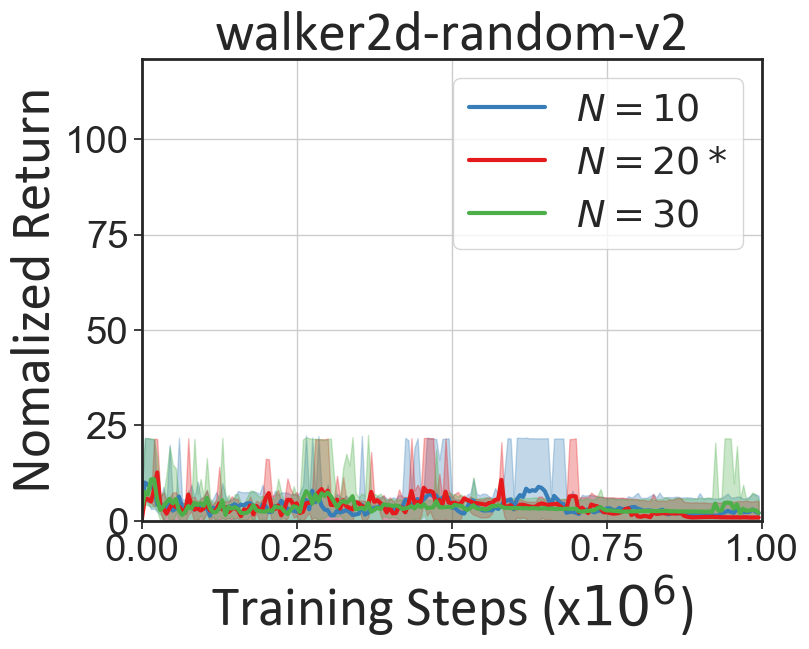}
    \includegraphics[width=0.24\textwidth]{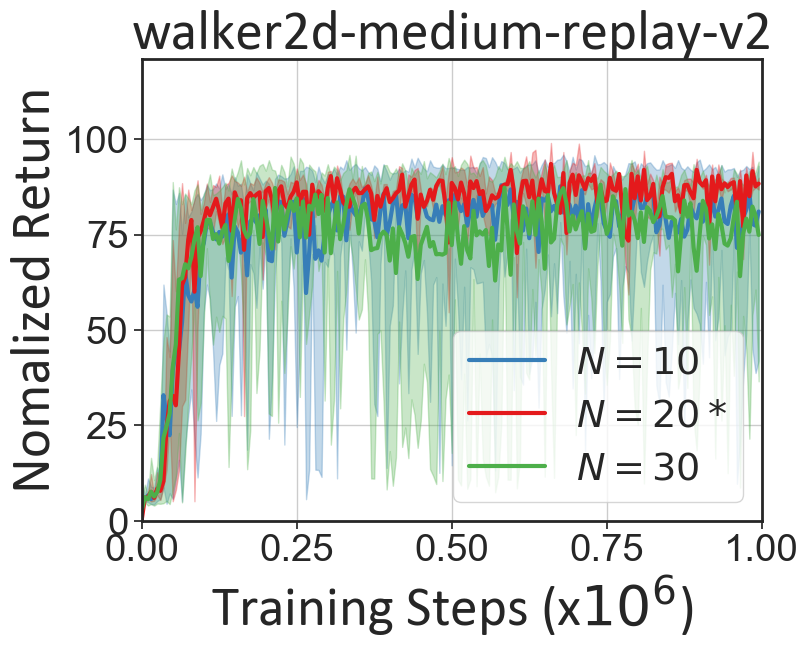}
    \includegraphics[width=0.24\textwidth]{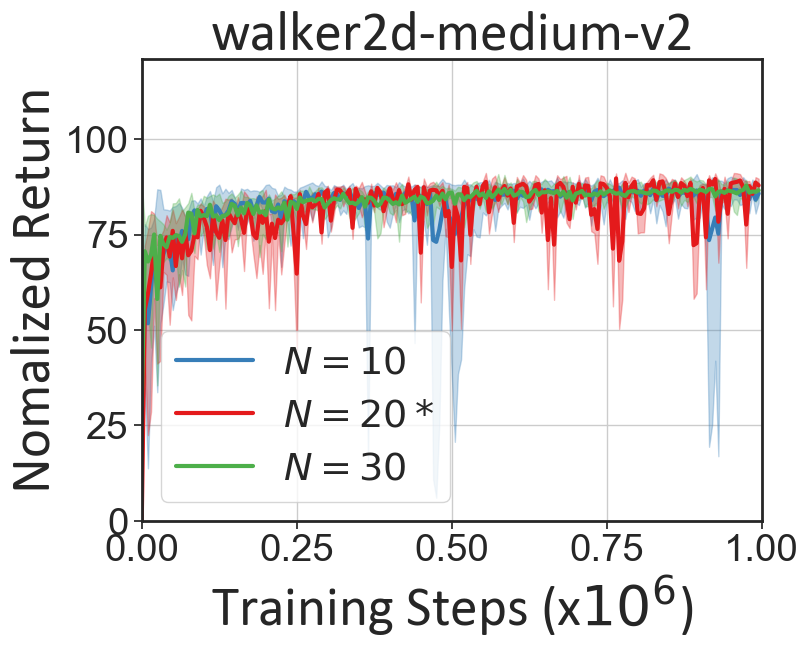}
    \includegraphics[width=0.24\textwidth]{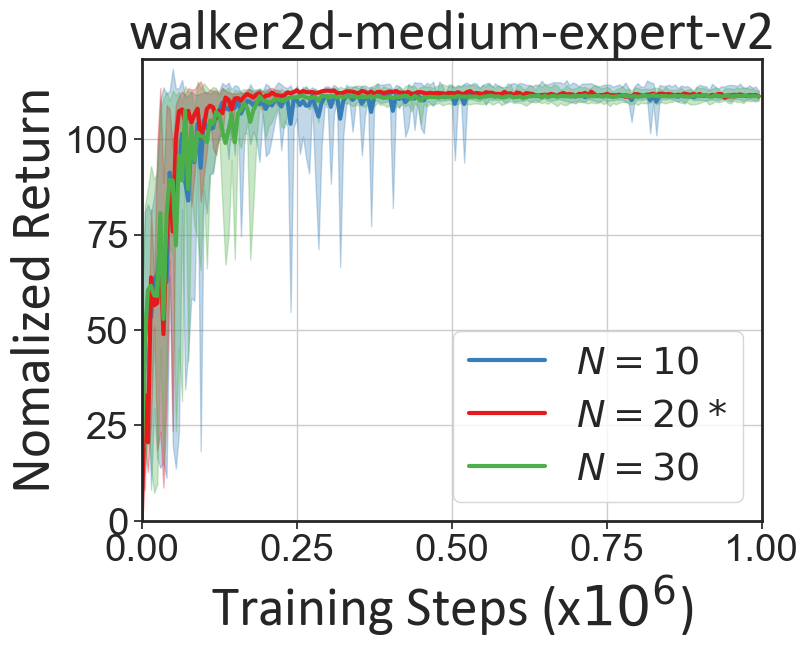}
    
    \includegraphics[width=0.24\textwidth]{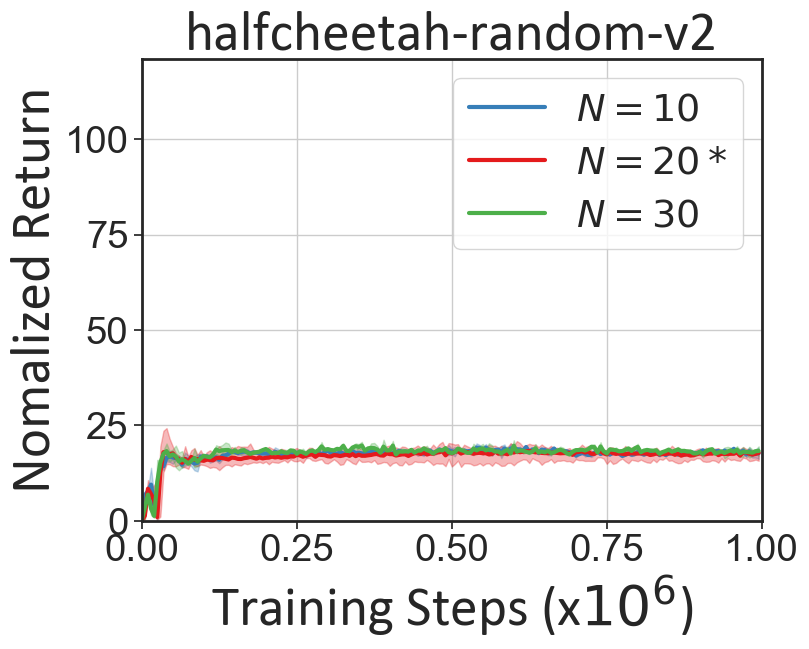}
    \includegraphics[width=0.24\textwidth]{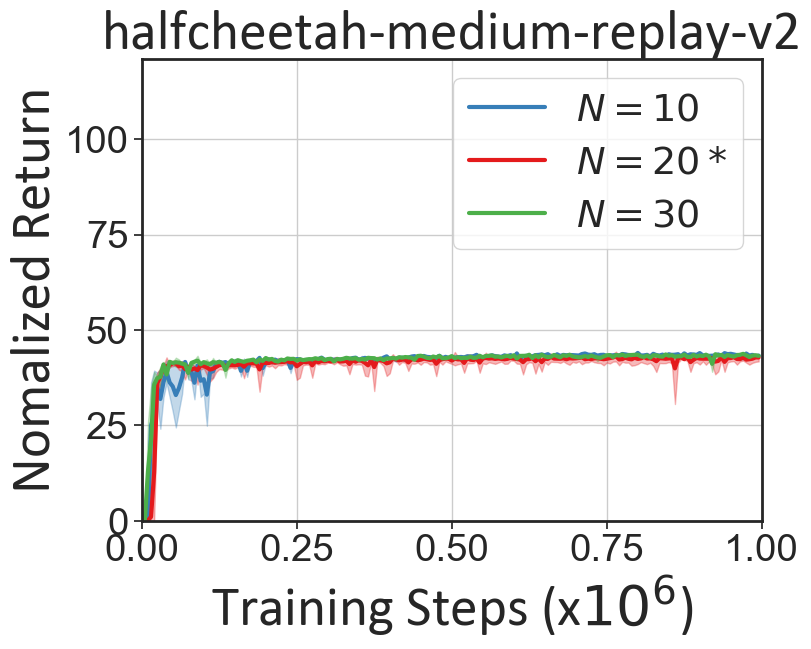}
    \includegraphics[width=0.24\textwidth]{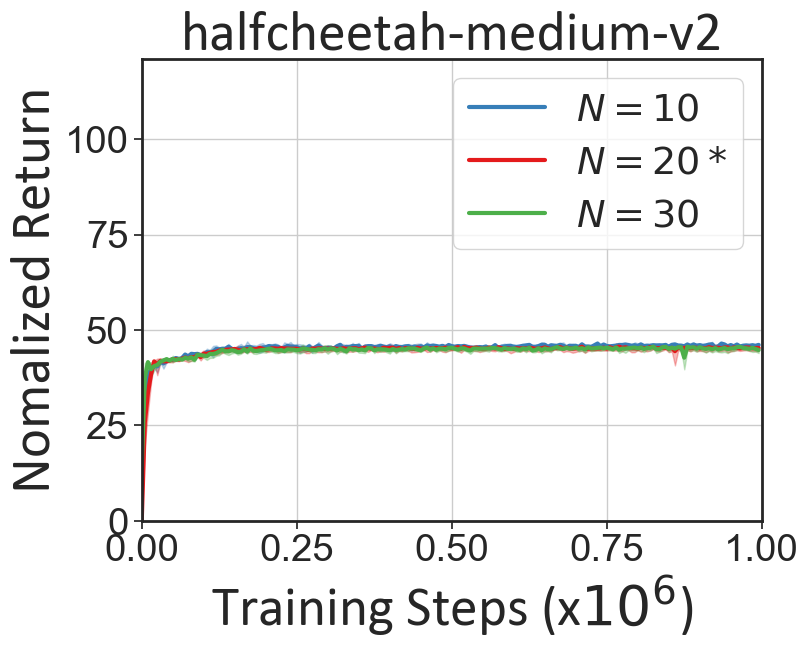}
    \includegraphics[width=0.24\textwidth]{ablation_N/halfcheetah-medium-expert-v2.png}
    \caption{Ablation for $N$. Error bars indicate min and max over 5 seeds.}
    \label{fig:N}
\end{figure}

\begin{figure}[h]
    \centering
    \includegraphics[width=0.24\textwidth]{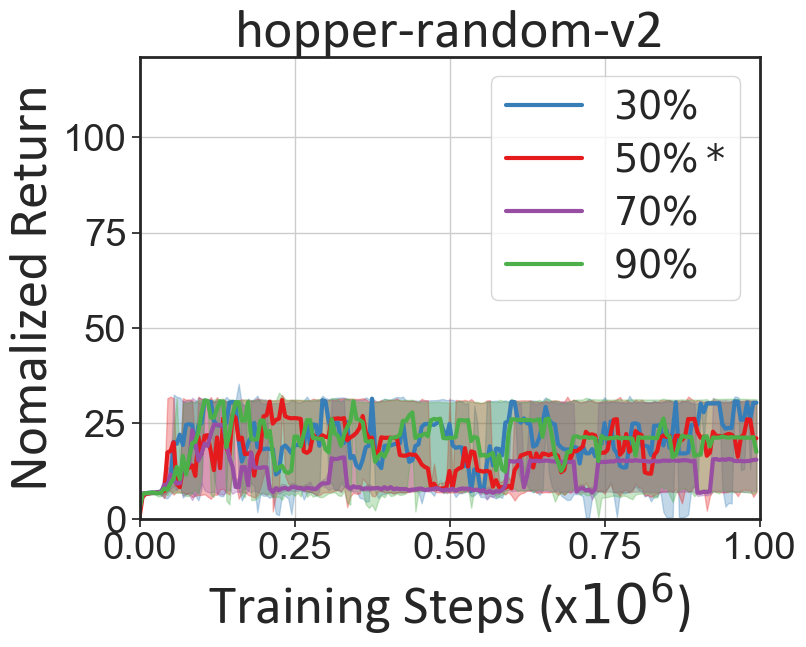}
    \includegraphics[width=0.24\textwidth]{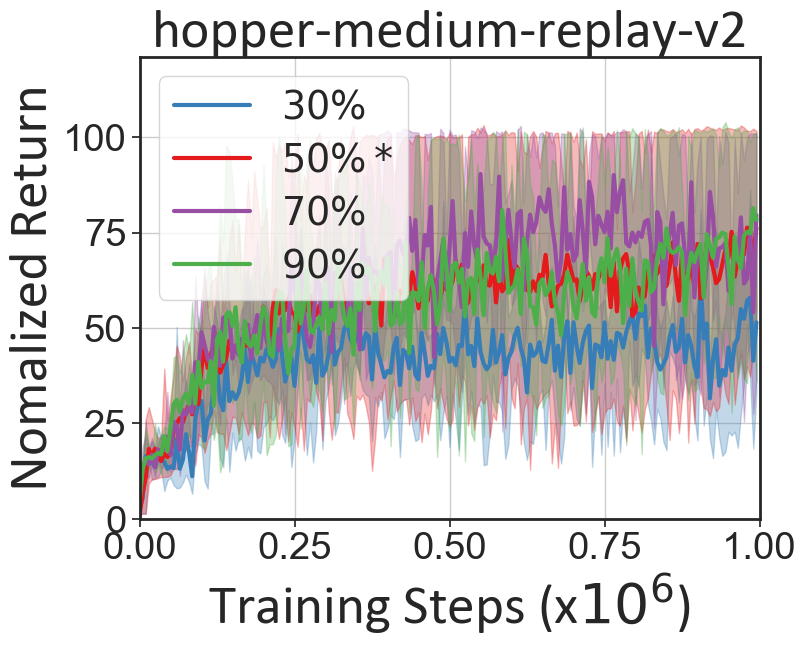}
    \includegraphics[width=0.24\textwidth]{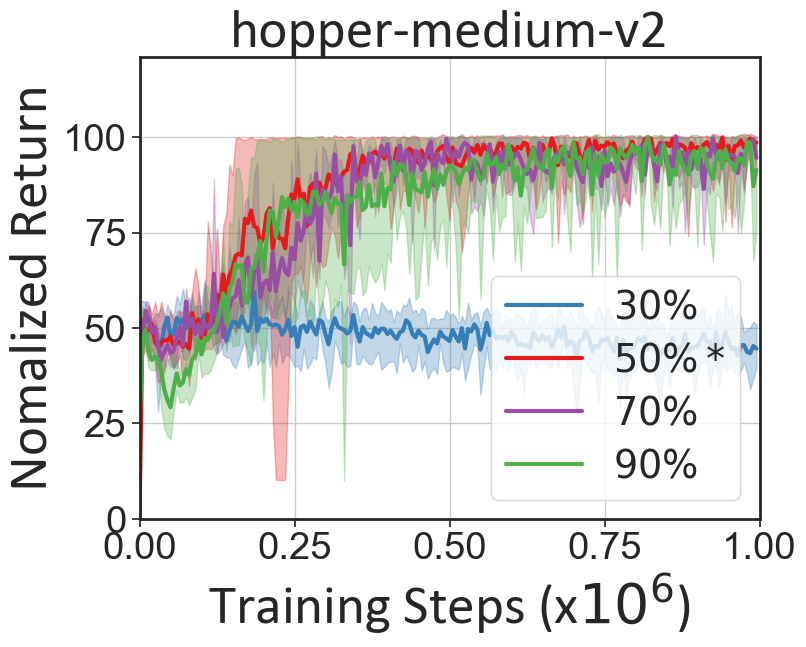}
    \includegraphics[width=0.24\textwidth]{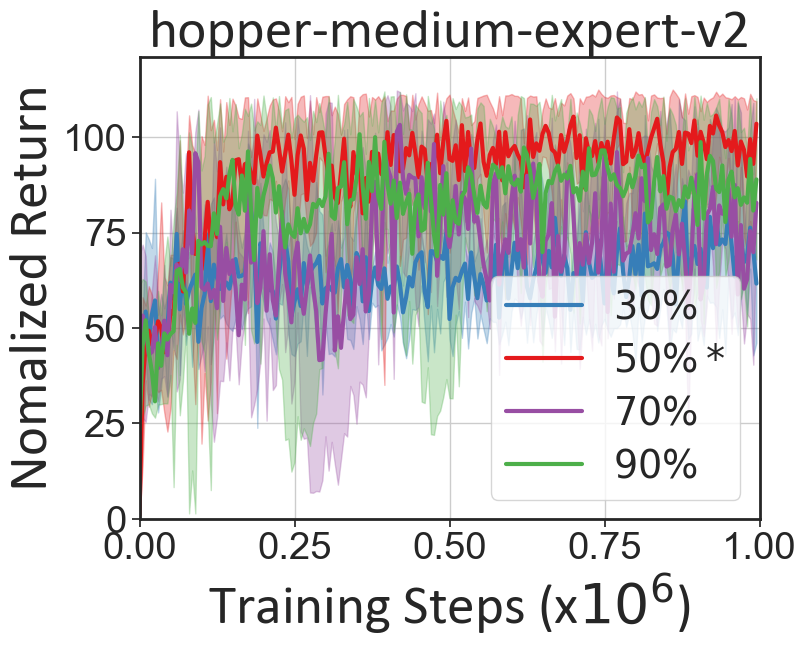}
    
    \includegraphics[width=0.24\textwidth]{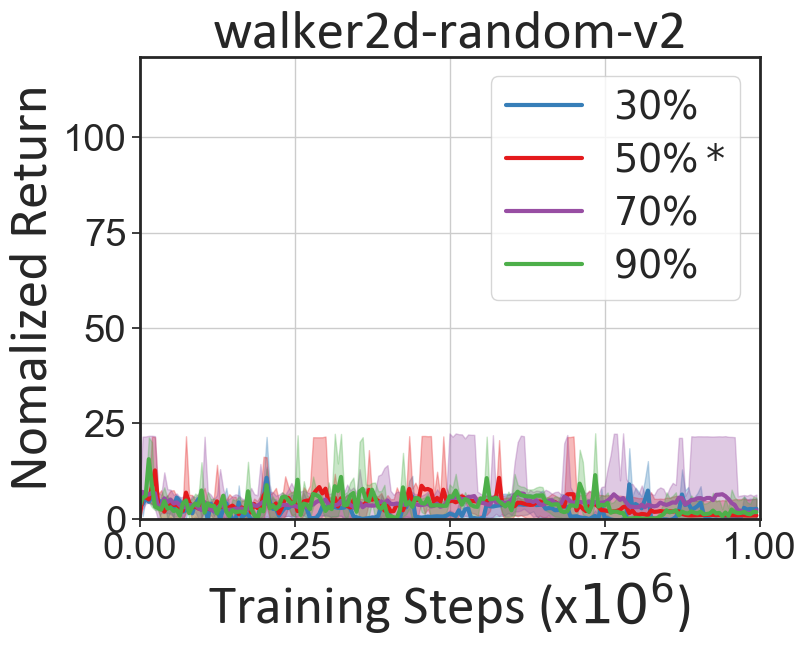}
    \includegraphics[width=0.24\textwidth]{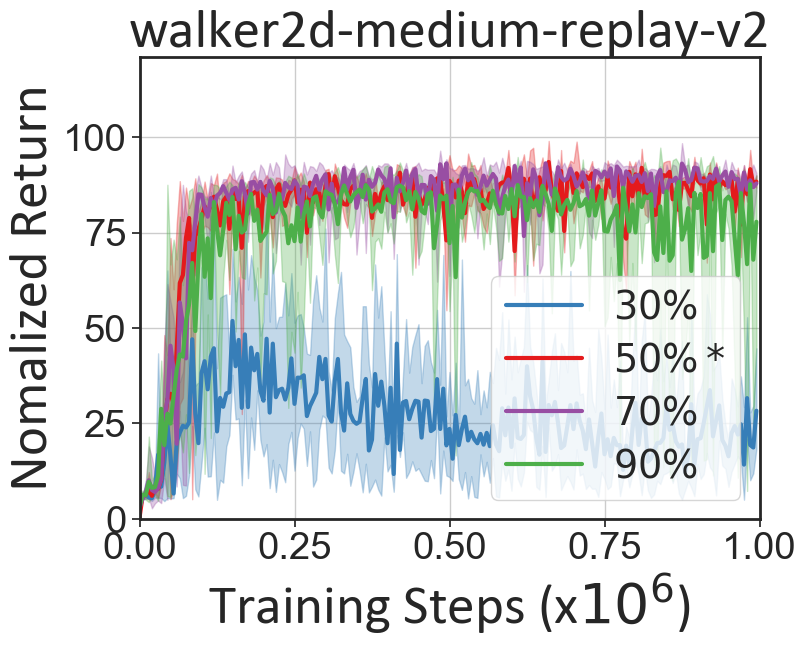}
    \includegraphics[width=0.24\textwidth]{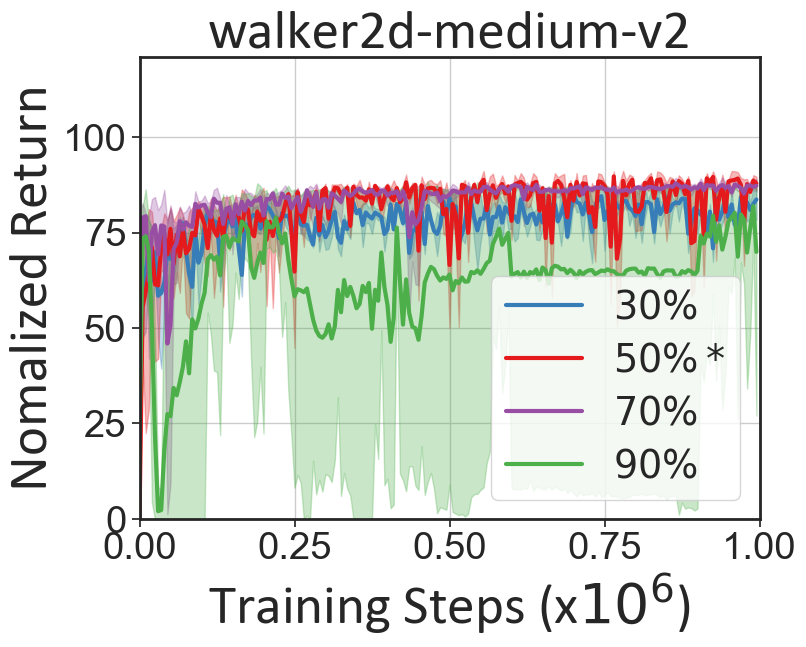}
    \includegraphics[width=0.24\textwidth]{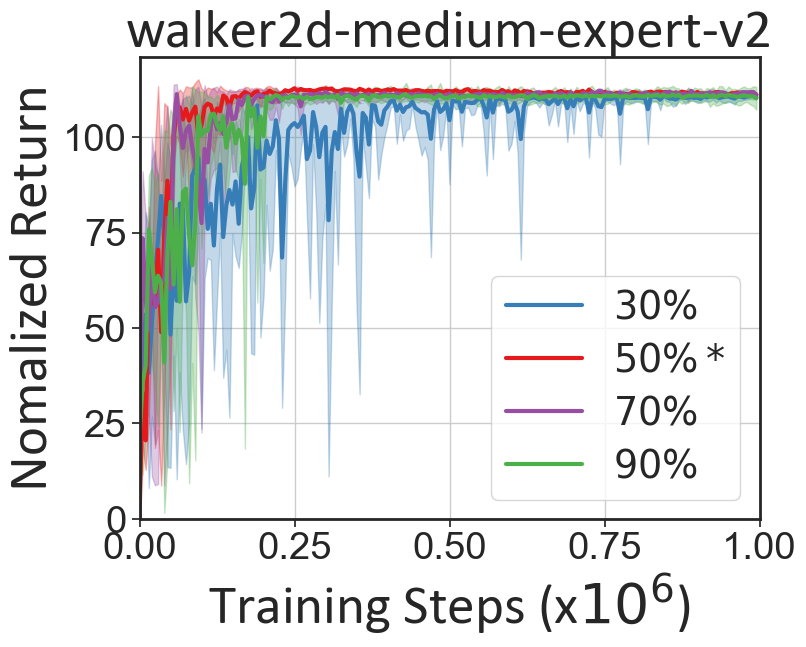}
    
    \includegraphics[width=0.24\textwidth]{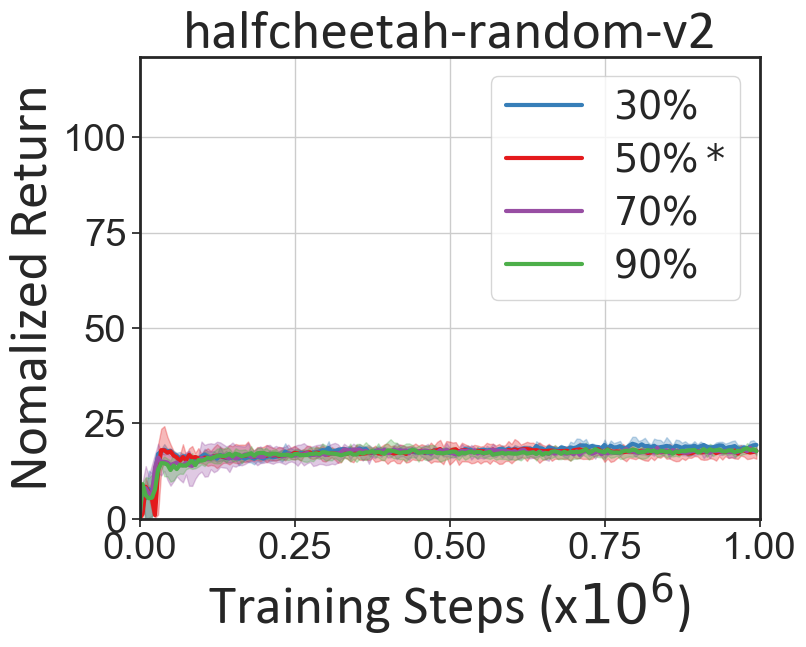}
    \includegraphics[width=0.24\textwidth]{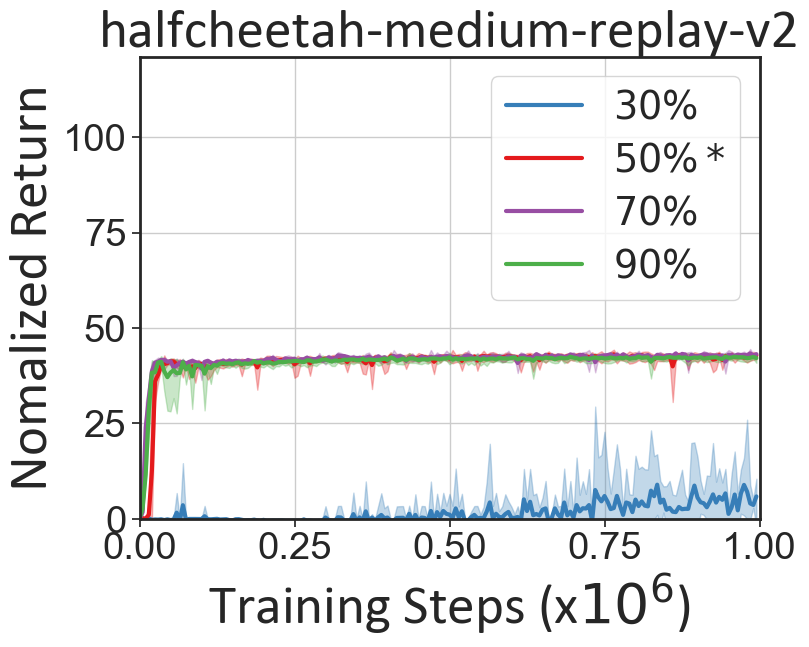}
    \includegraphics[width=0.24\textwidth]{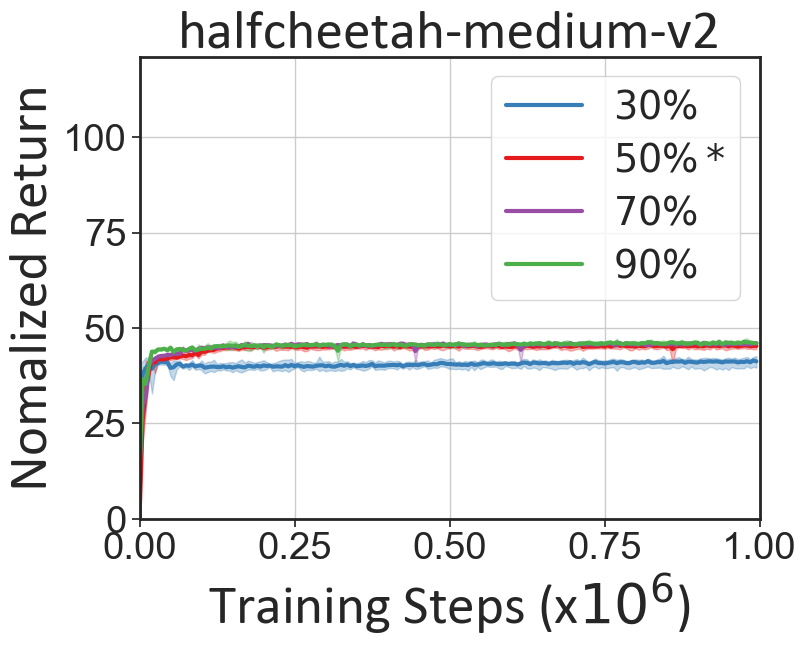}
    \includegraphics[width=0.24\textwidth]{ablation_G/ablation_G_with_0.7/halfcheetah-medium-expert-v2.png}
    \caption{Ablation for $G$. Error bars indicate min and max over 5 seeds.}
    \label{fig:G}
\end{figure}

\begin{figure}[h]
    \centering
    \includegraphics[width=0.24\textwidth]{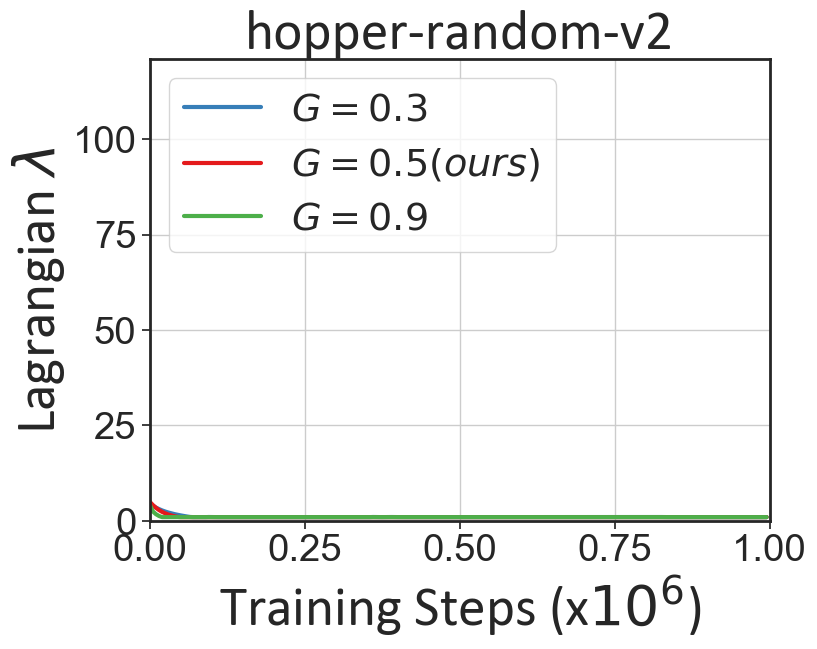}
    \includegraphics[width=0.24\textwidth]{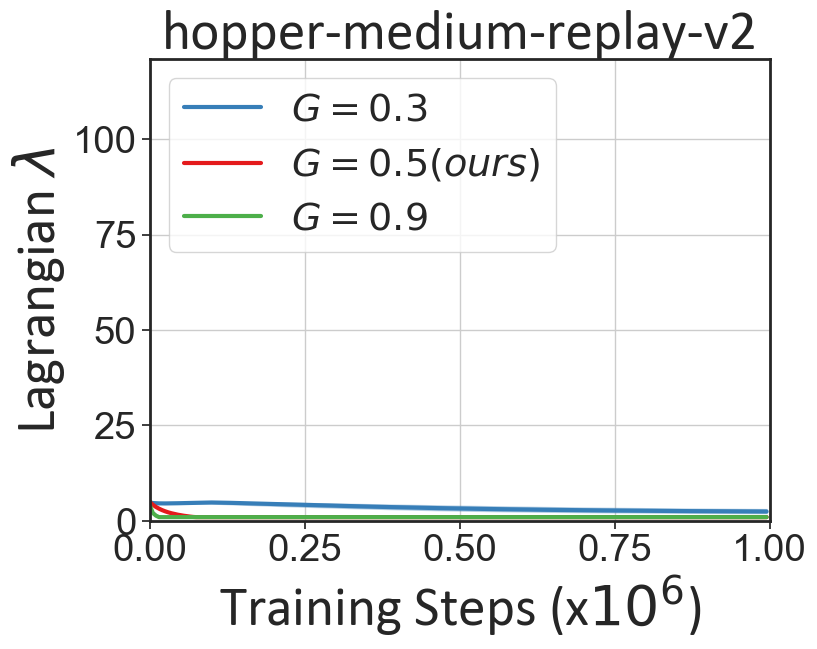}
    \includegraphics[width=0.24\textwidth]{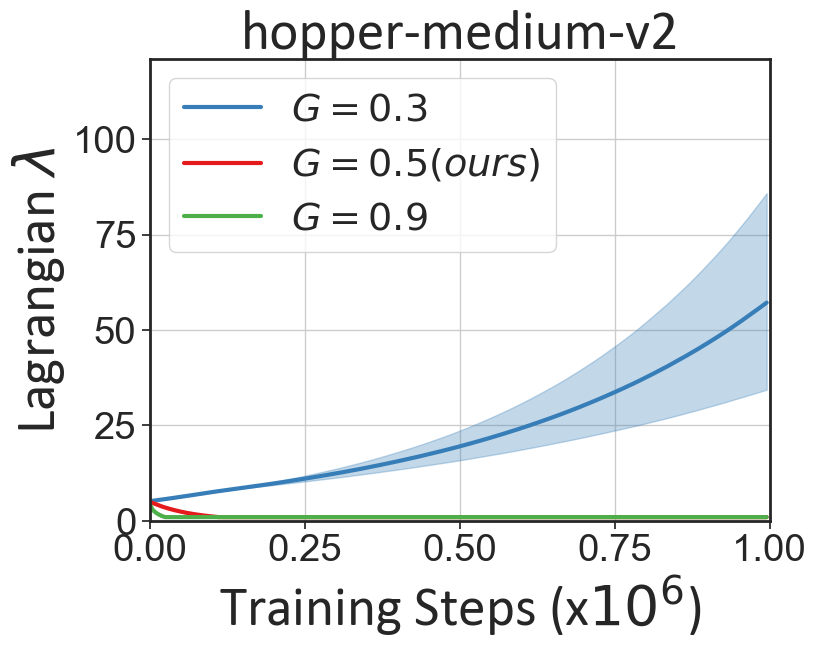}
    \includegraphics[width=0.24\textwidth]{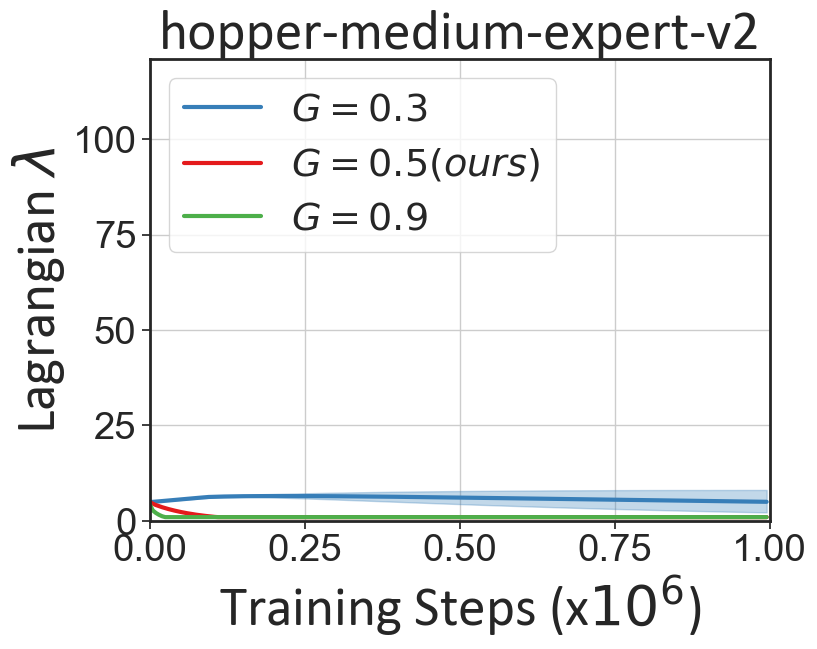}
    
    \includegraphics[width=0.24\textwidth]{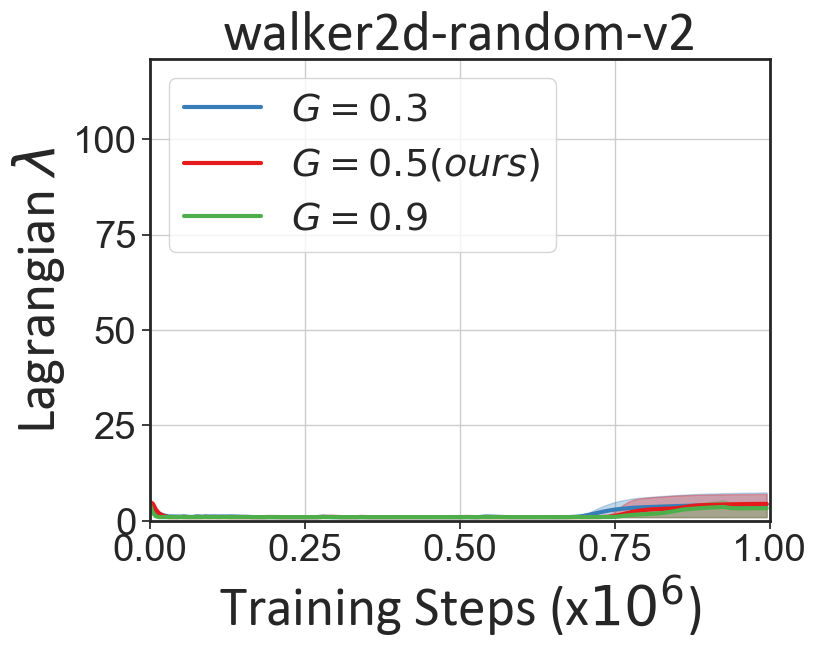}
    \includegraphics[width=0.24\textwidth]{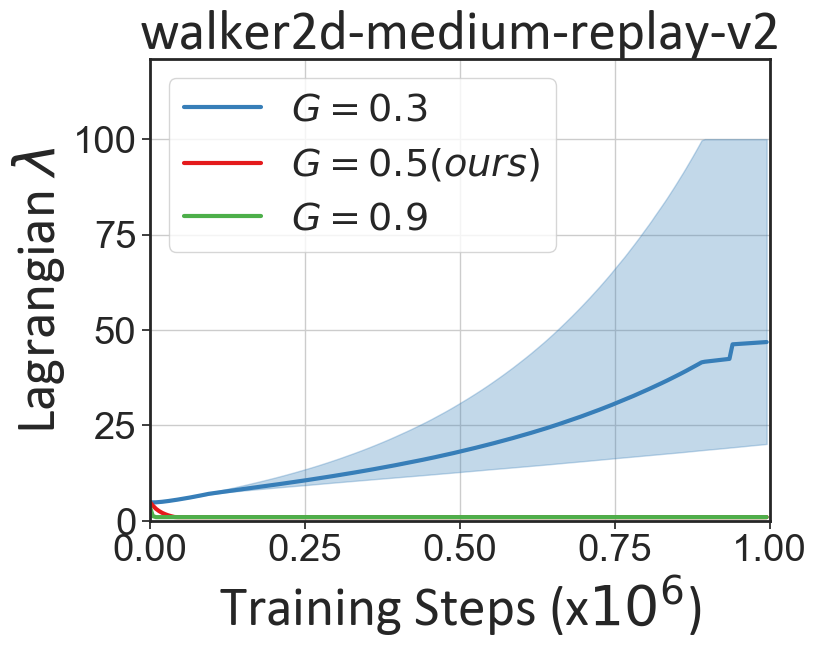}
    \includegraphics[width=0.24\textwidth]{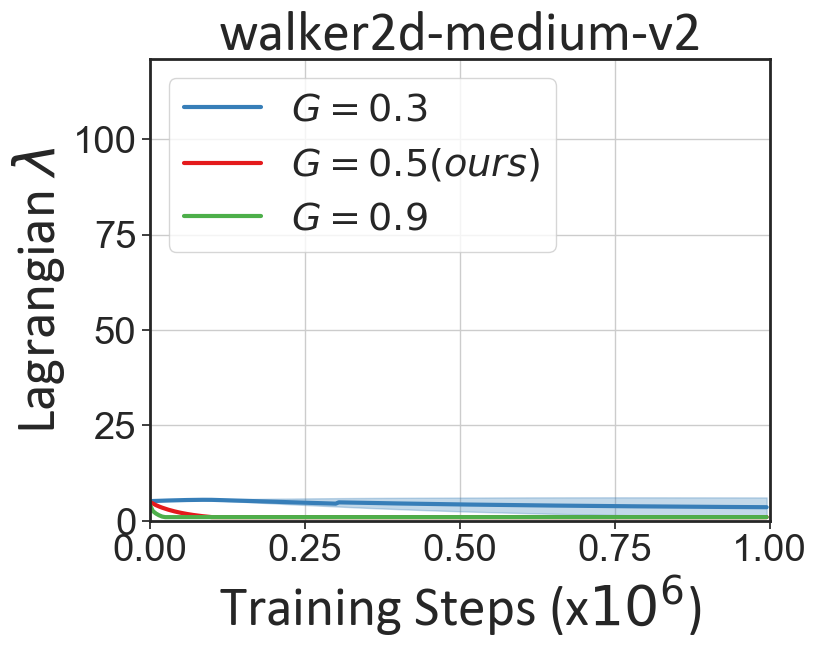}
    \includegraphics[width=0.24\textwidth]{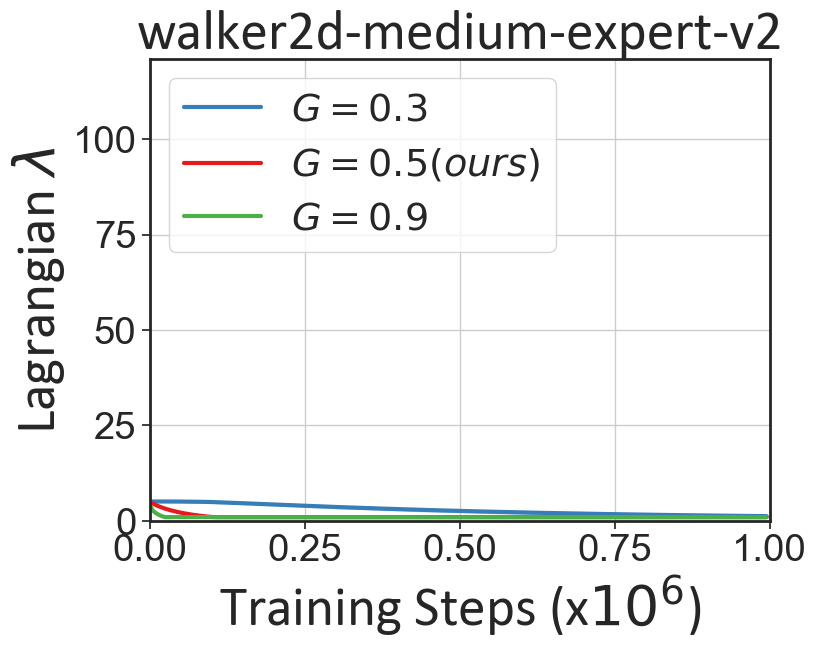}
    
    \includegraphics[width=0.24\textwidth]{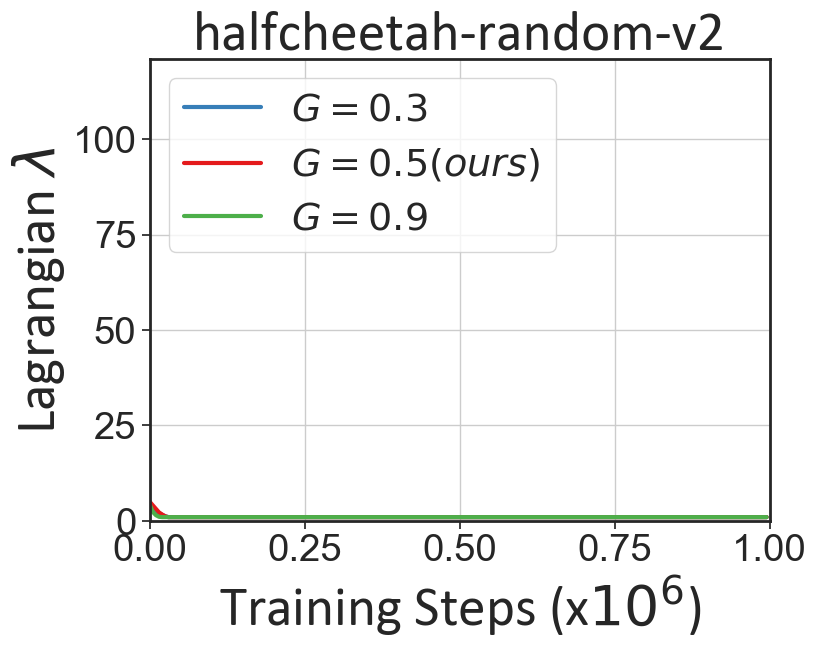}
    \includegraphics[width=0.24\textwidth]{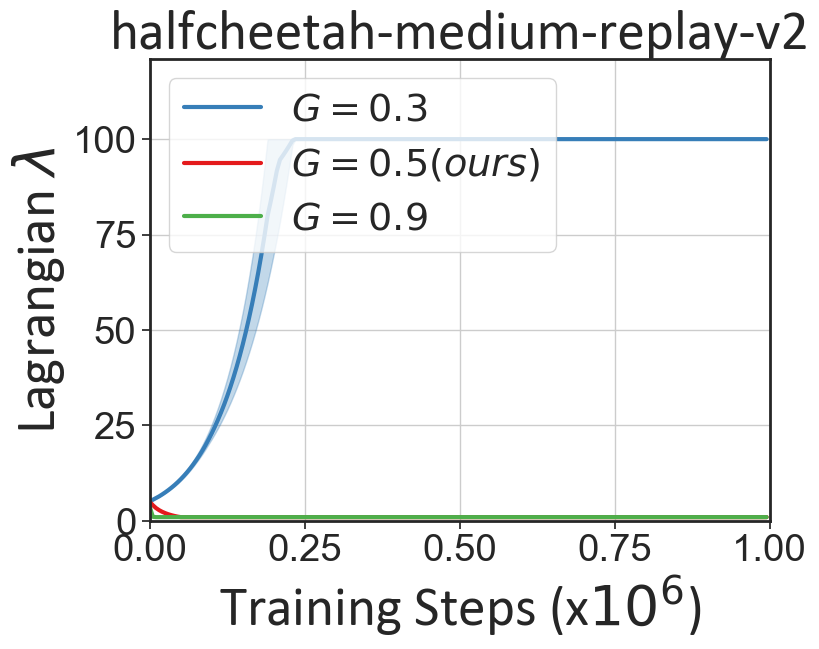}
    \includegraphics[width=0.24\textwidth]{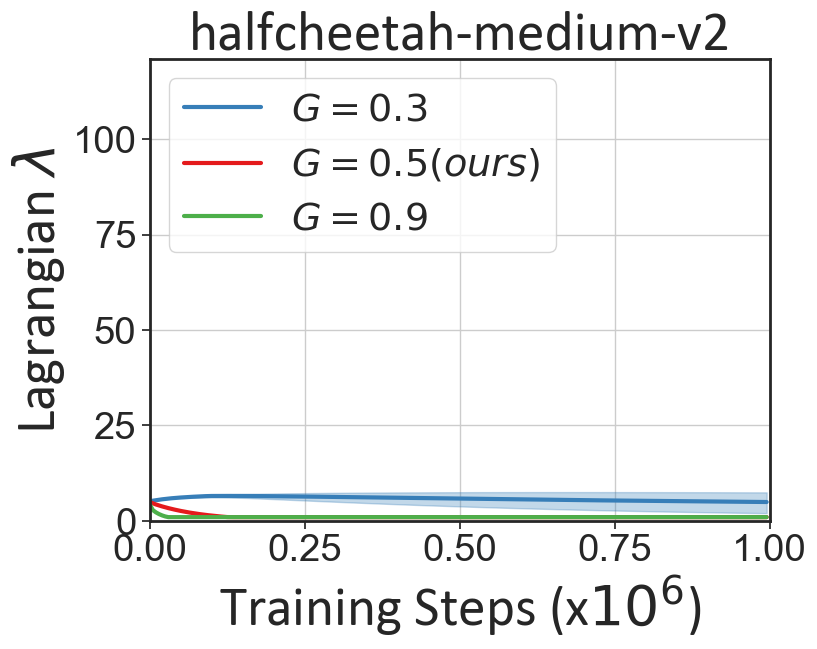}
    \includegraphics[width=0.24\textwidth]{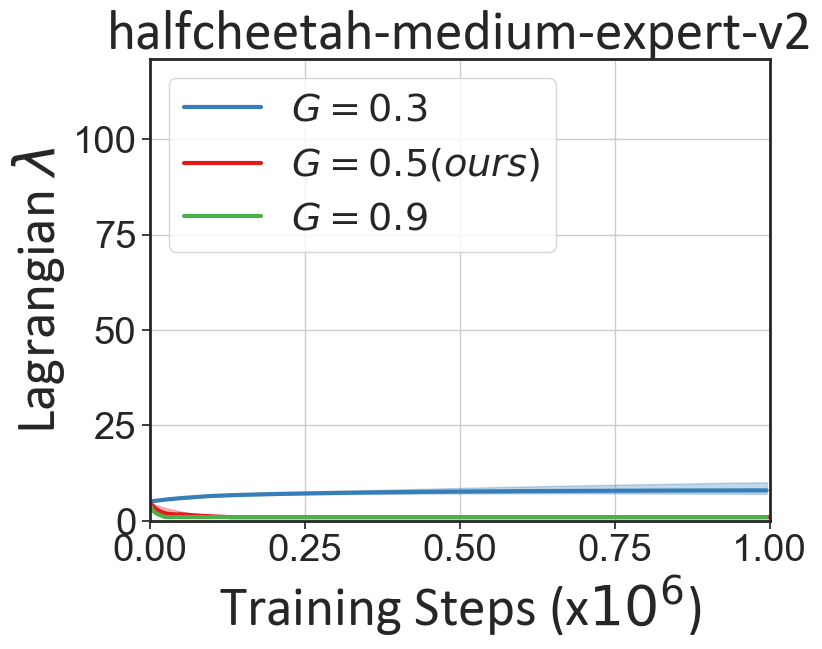}
    \caption{Ablation for $\lambda$. Error bars indicate min and max over 5 seeds.}
    \label{fig:lambda}
\end{figure}

\newpage
\section{Learning Curves}

The learning curves for Mujoco and AntMaze tasks are listed in Fig. \ref{fig:mujoco} and Fig.\ref{fig:antmaze}. The learned policies are evaluated for 10 episodes and 100 episodes each seed for Mujoco and AntMaze tasks, respectively. For AntMaze tasks, we subtract 1 from rewards for the AntMaze datasets following \citep{kumar2020conservative, iql}.

\begin{figure}[h]
    \centering
    \includegraphics[width=0.24\textwidth]{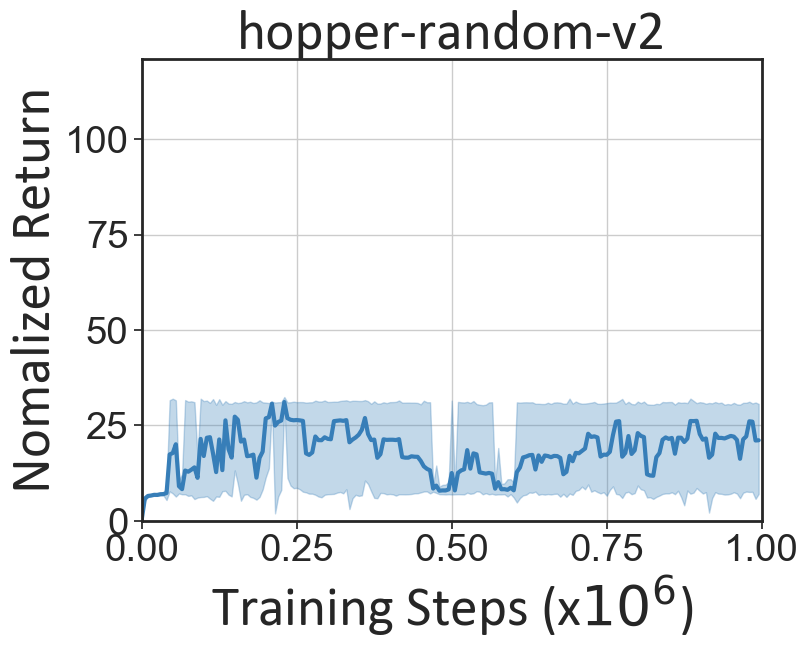}
    \includegraphics[width=0.24\textwidth]{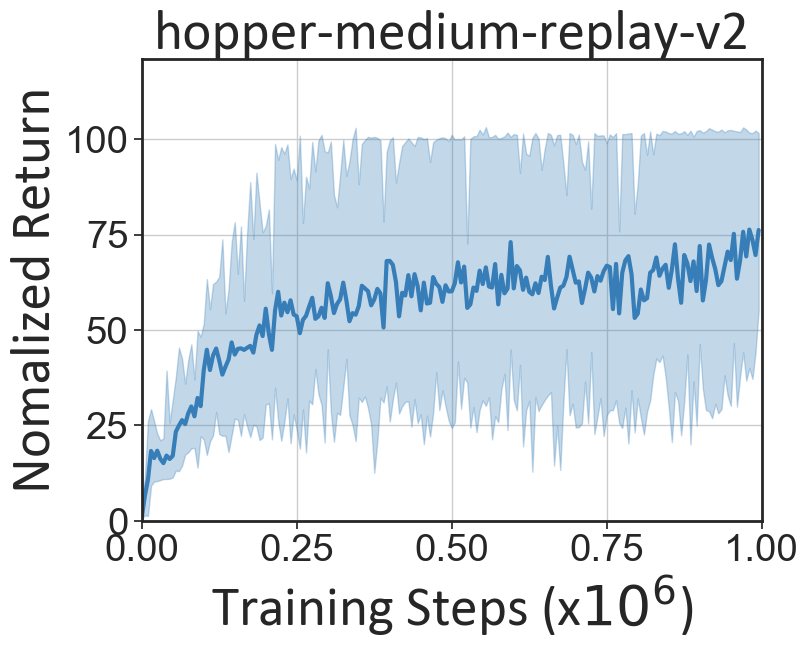}
    \includegraphics[width=0.24\textwidth]{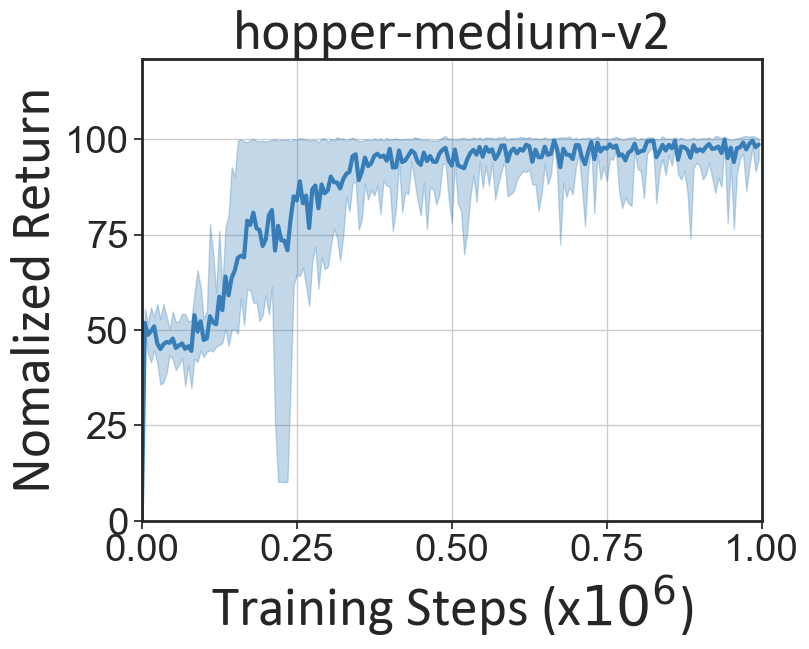}
    \includegraphics[width=0.24\textwidth]{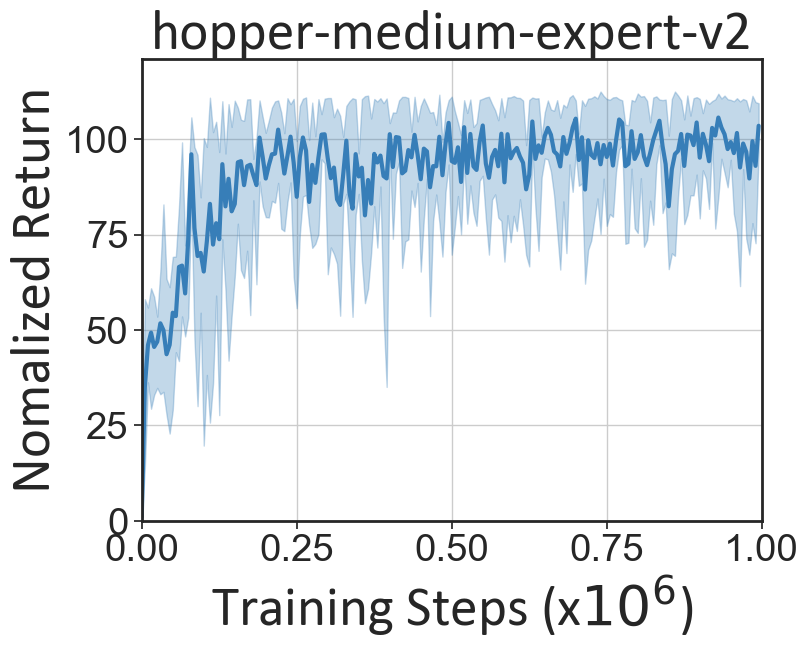}
    
    \includegraphics[width=0.24\textwidth]{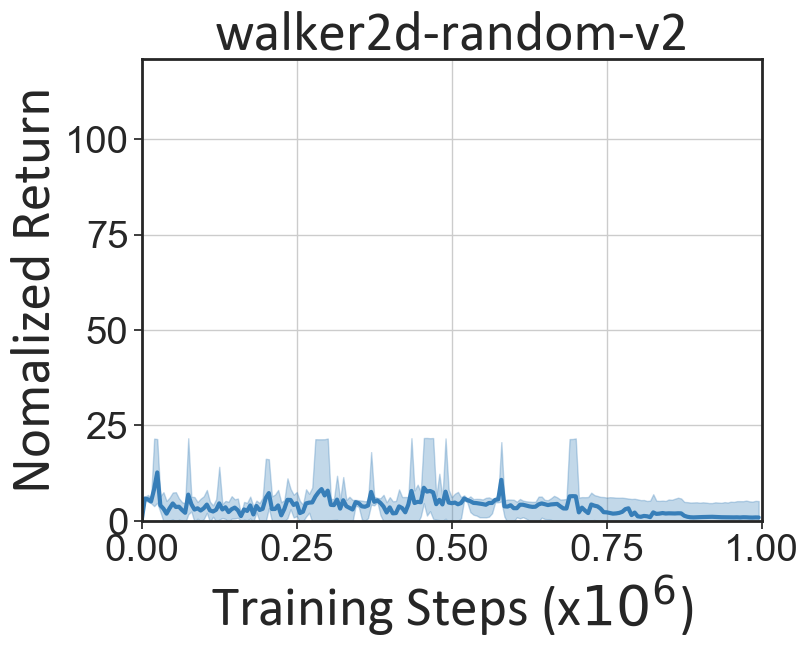}
    \includegraphics[width=0.24\textwidth]{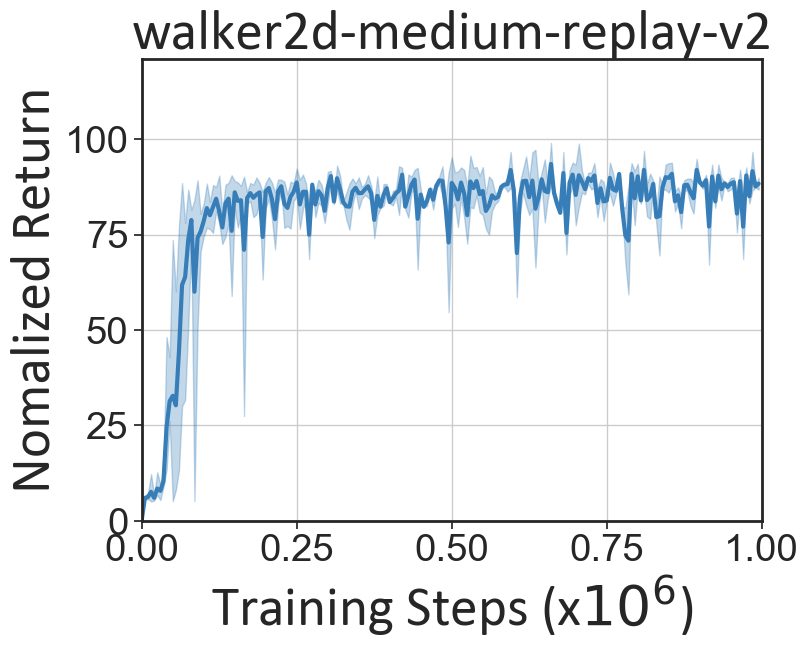}    
    \includegraphics[width=0.24\textwidth]{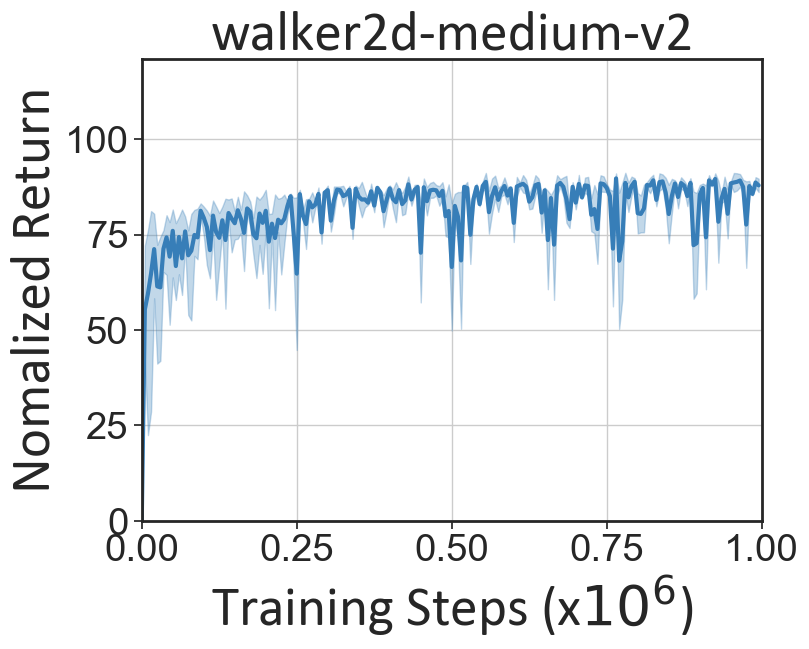}    
    \includegraphics[width=0.24\textwidth]{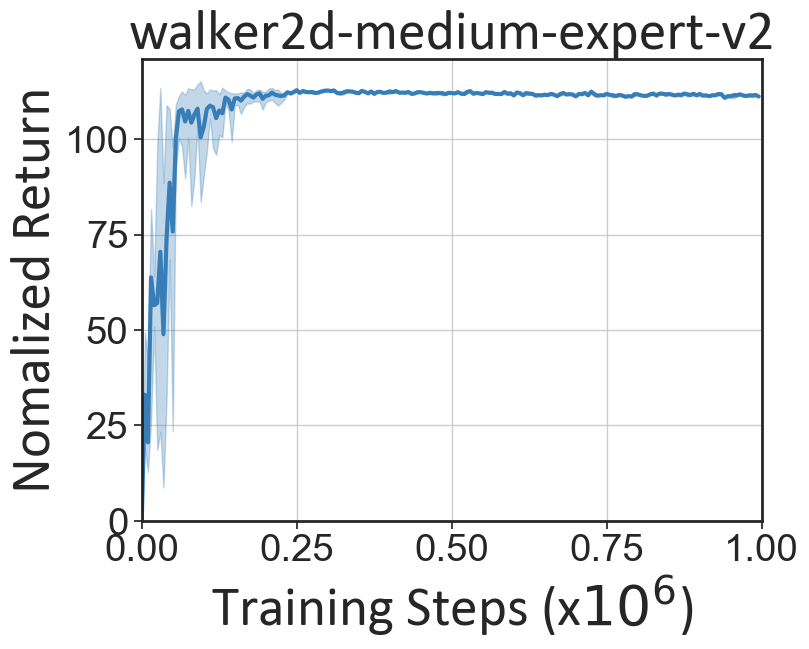}
    
    \includegraphics[width=0.24\textwidth]{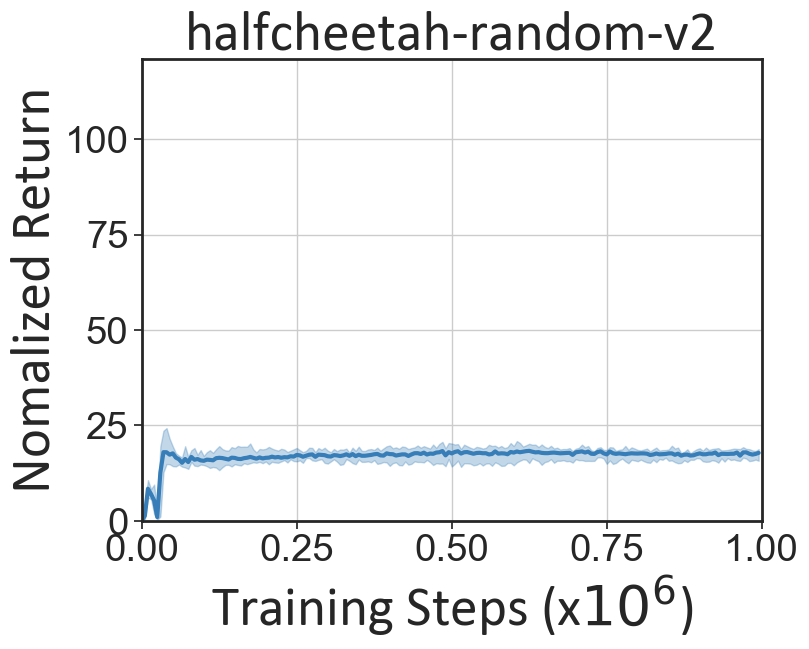}
    \includegraphics[width=0.24\textwidth]{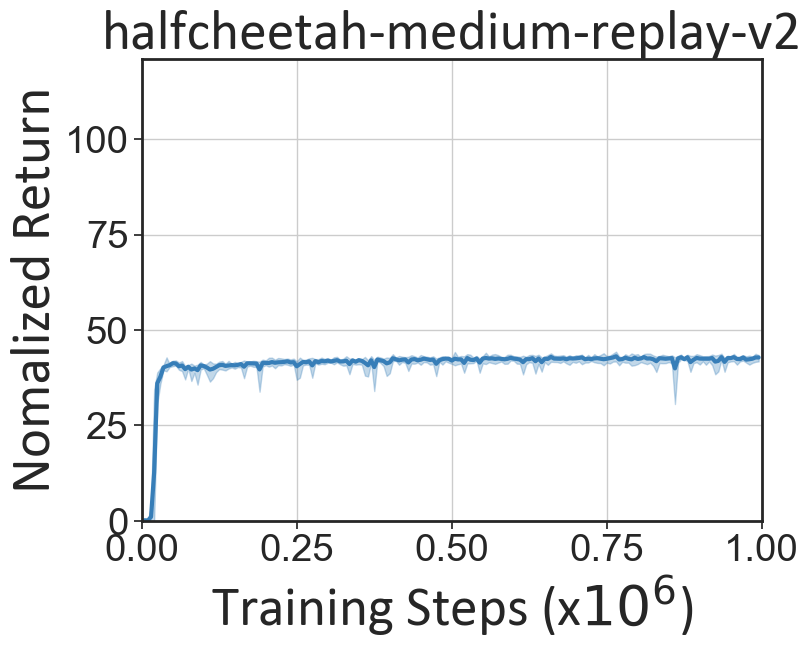}
    \includegraphics[width=0.24\textwidth]{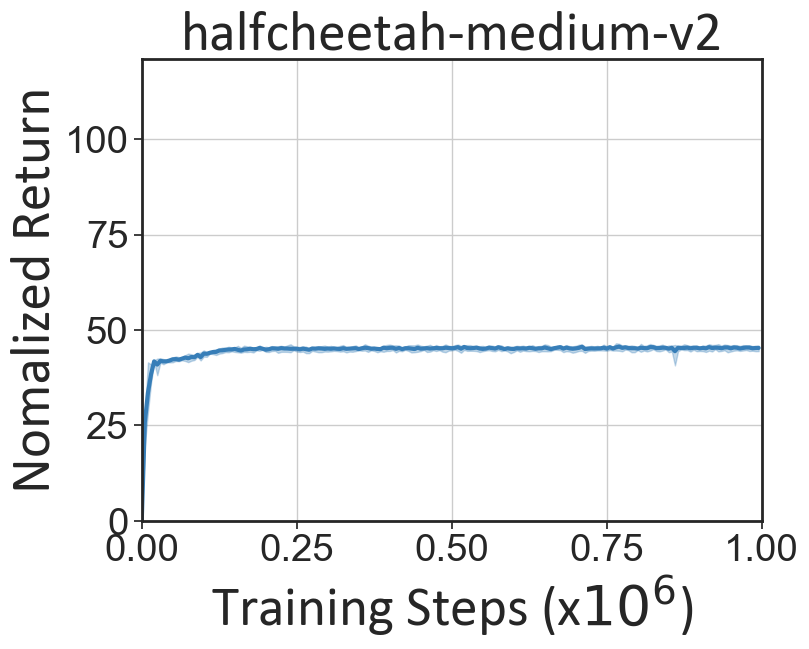}
    \includegraphics[width=0.24\textwidth]{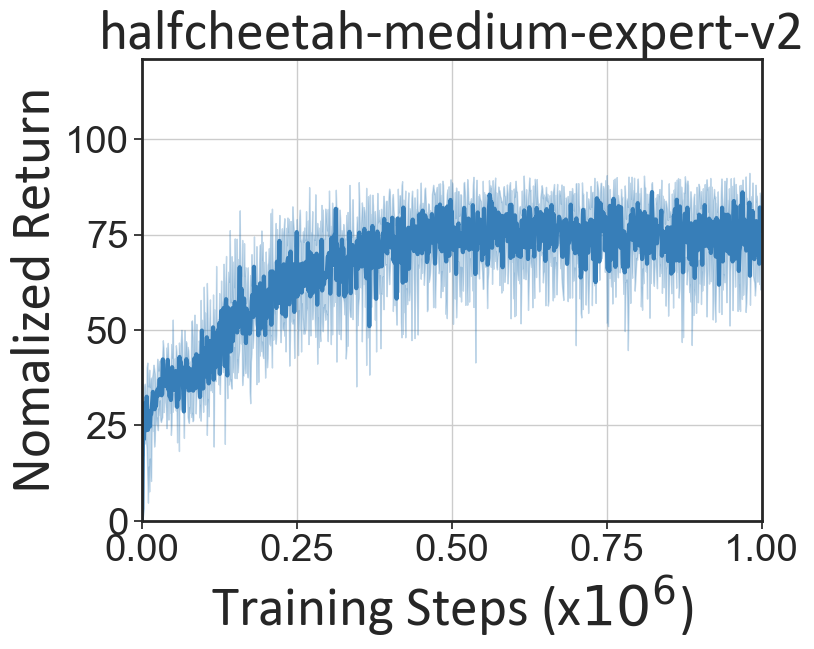}
    \caption{Learning curves for Mujoco Tasks. Error bars indicate min and max over 5 seeds.}
    \label{fig:mujoco}
\end{figure}

\begin{figure}[h]
    \centering
    \includegraphics[width=0.32\textwidth]{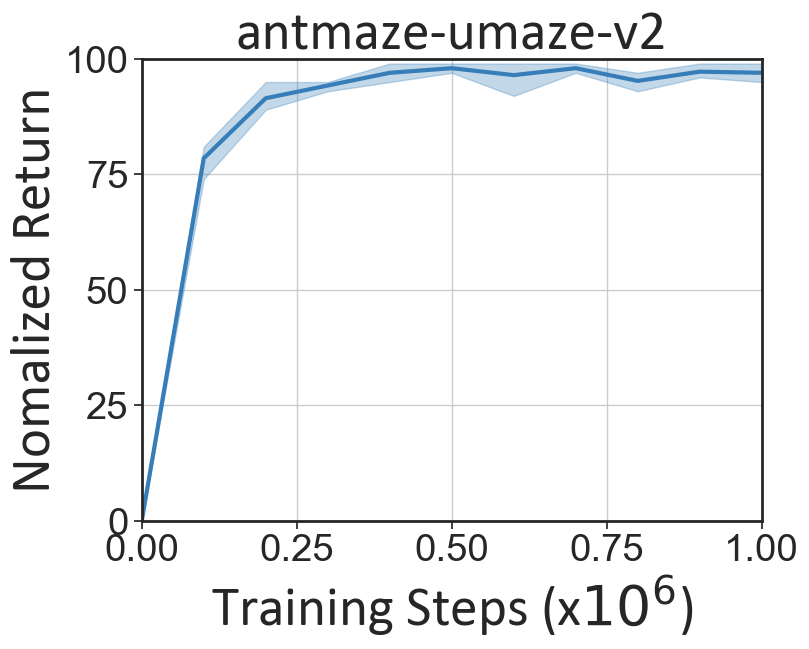}
    \includegraphics[width=0.32\textwidth]{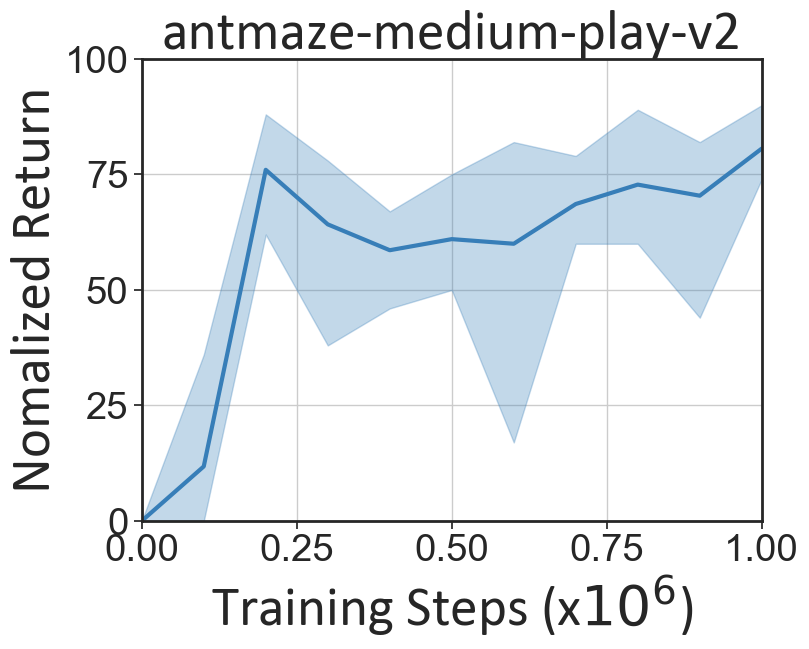}
    \includegraphics[width=0.32\textwidth]{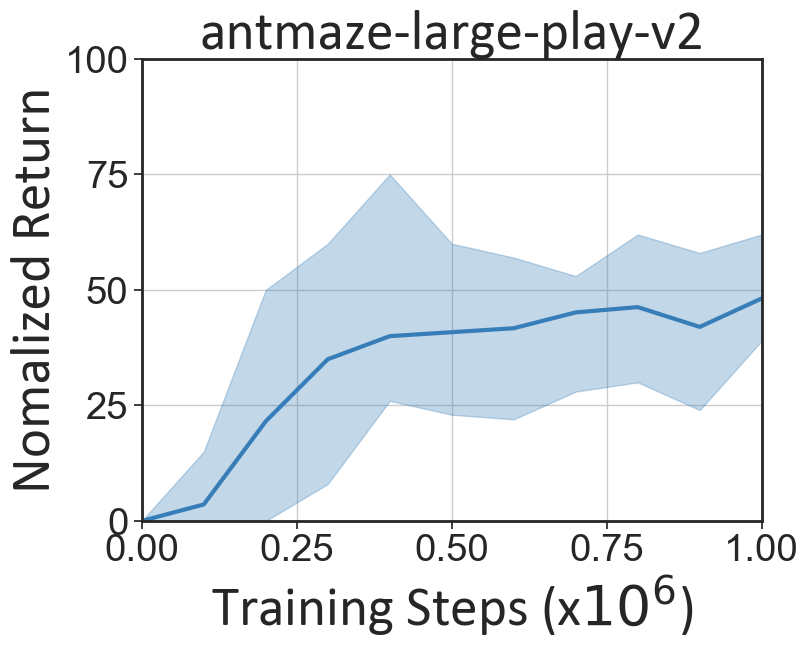}
    \includegraphics[width=0.32\textwidth]{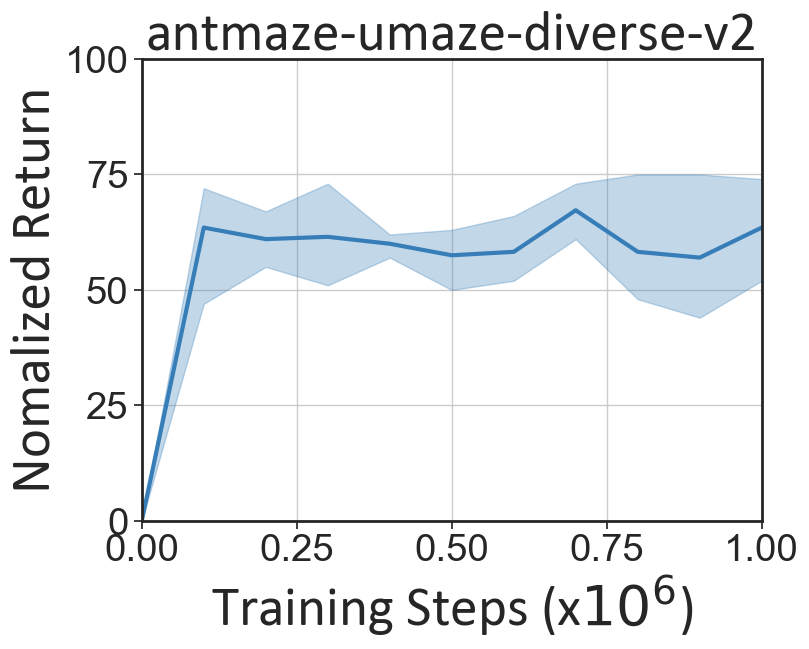}
    \includegraphics[width=0.32\textwidth]{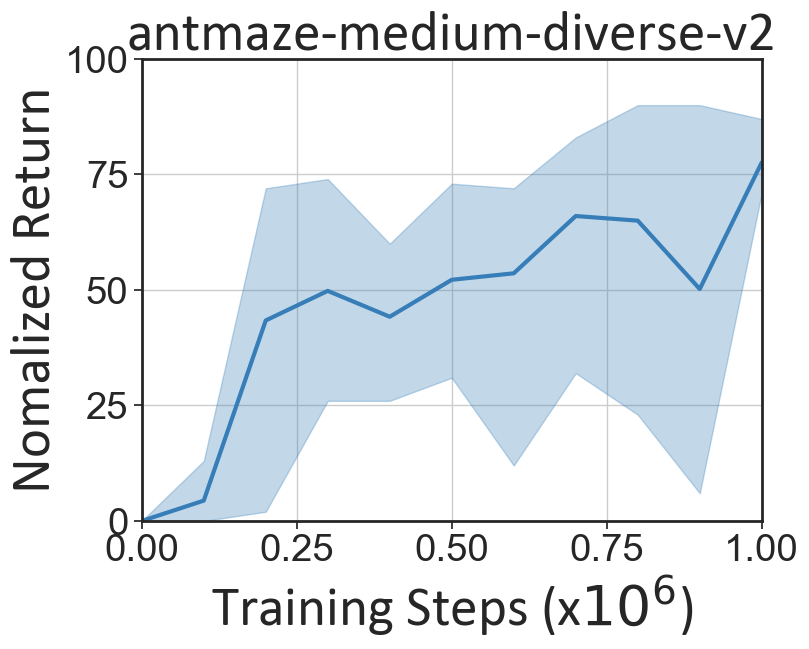}
    \includegraphics[width=0.32\textwidth]{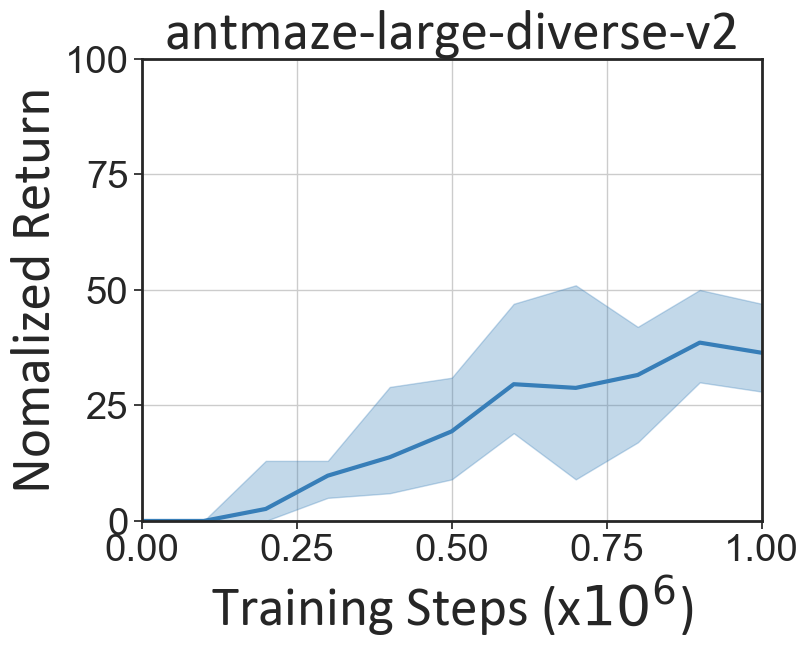}
    \caption{Learning curves for AntMaze Tasks. Error bars indicate min and max over 5 seeds.}
    \label{fig:antmaze}
\end{figure}

\end{document}